\theoremstyle{plain}
\newtheorem{theorem}{Theorem}[section]
\newtheorem{lemma}[theorem]{Lemma}
\theoremstyle{definition}
\newtheorem{definition}[theorem]{Definition}
\newtheorem{assumption}[theorem]{Assumption}
\theoremstyle{remark}
\newtheorem{remark}[theorem]{Remark}
\icmltitlerunning{SaVeR: Optimal Data Collection Strategy for Safe Policy Evaluation in Tabular MDP}
\begin{document}

\twocolumn[
\icmltitle{SaVeR: Optimal Data Collection Strategy for Safe Policy Evaluation in Tabular MDP}



\icmlsetsymbol{equal}{*}

\begin{icmlauthorlist}
\icmlauthor{Subhojyoti Mukherjee}{yyy}
\icmlauthor{Josiah P. Hanna}{comp}
\icmlauthor{Robert Nowak}{yyy}
\end{icmlauthorlist}

\icmlaffiliation{yyy}{Department of Electrical and Computer Engineering, University of Wisconsin-Madison, Madison, USA}
\icmlaffiliation{comp}{Computer Sciences Department, University of Wisconsin-Madison, Madison, USA}

\icmlcorrespondingauthor{Subhojyoti Mukherjee}{smukherjee27@wisc.edu}

\icmlkeywords{Policy Evaluation, Safe Exploration, Constraint MDP}

\vskip 0.3in
]



\printAffiliationsAndNotice{}  

\vspace*{-2.0em}
\begin{abstract}
In this paper, we study safe data collection for the purpose of policy evaluation in tabular Markov decision processes (MDPs). In policy evaluation, we are given a \textit{target} policy and asked to estimate the expected cumulative reward it will obtain. Policy evaluation requires data and we are interested in the question of what \textit{behavior} policy should collect the data for the most accurate evaluation of the target policy. While prior work has considered behavior policy selection, in this paper, we additionally consider a safety constraint on the behavior policy. Namely, we assume there exists a known default policy that incurs a particular expected cost when run and we enforce that the cumulative cost of all behavior policies ran is better than a constant factor of the cost that would be incurred had we always run the default policy. We first show that there exists a class of intractable MDPs where no safe oracle algorithm with knowledge about problem parameters can efficiently collect data and satisfy the safety constraints. We then define the tractability condition for an MDP  such that a safe oracle algorithm can efficiently collect data and using that we prove the first lower bound for this setting. We then introduce an algorithm \sav\ for this problem that approximates the safe oracle algorithm and bound the finite-sample mean squared error of the algorithm while ensuring it satisfies the safety constraint. Finally, we show in simulations that \sav\ produces low MSE policy evaluation while satisfying the safety constraint.
\end{abstract}

\section{Introduction}
\label{sec:intro}
Reinforcement learning has emerged as a powerful tool for decision-making in a wide range of applications, from robotics \citep{ibarz2021train, agarwal2022legged} and game-playing \citep{szita2012reinforcement} to autonomous driving \citep{kiran2021deep}, web-marketing \citep{bottou2013counterfactual}, healthcare \citep{fischer2018reinforcement,yu2019reinforcement} and finance \citep{hambly2021recent}. However, in these applications, it is often necessary to first evaluate the decision-making policy before its long-term deployment in the real world. 
In fact, policy evaluation is a critical step in reinforcement learning, as it allows us to assess the quality of a learned policy and to check whether it can truly achieve the desired goal for the target task.
One potential solution to this issue is off-policy evaluation (OPE) \citep{dudik2014doubly, li2015toward, swaminathan2017off, wang2017optimal, su2020doubly, kallus2021optimal, cai2021deep}. However, for OPE estimators there is no control over how the static dataset is generated, which could result in low accuracy estimates.

Hence, a natural idea is to actively gather the dataset using an adaptive behavior policy and thus increase accuracy in the evaluation of the target policy's value.  
In many real-world settings, the behavior policy itself must satisfy some side constraints (specific to the industry) \citep{wu2016conservative} or safety constraints \citep{wan2022safe} while collecting the dataset. 
For instance, in web marketing, it is common to run an A/B test with safety constraints over a subset of all users before a potential new policy is deployed for all users \citep{kohavi2017online, tucker2022variance}.
While testing autonomous vehicles it is quite natural to incorporate safety constraints in the behavior policy. So it is of great practical importance to ensure that our data collection rule is safe \citep{zhu2022safe}.

In this paper, we consider the question of optimal data collection for policy evaluation under safety constraints in the tabular reinforcement learning (RL) setting. 
Consider the following scenario that could arise in web marketing. Suppose we have a policy learned from offline data that has never been run in a real application. Moreover, we want this learned policy to be at least as good as a baseline policy that is already deployed in the application \citep{wu2016conservative, zhu2021safe, zhu2022safe}. Off-policy evaluation often has high variance, so engineers may want to have some controlled deployment where the learned policy only makes decisions for some users before letting the policy make decisions for all users. We are motivated by how to make this controlled deployment as data-efficient and safe as possible. By \textit{safe}, we mean that we want the expected return seen during data collection to remain close to the expected return under the baseline policy. A similar motivation can be found in \citet{tucker2022variance}. In this paper, we focus on finding a behavior policy that produces a minimal variance estimate while remaining safe.
We can state this formally as follows: 
We are given a target policy, $\pi$, for which we want to estimate its value denoted by $V^{\pi}(s_1)$.
To estimate $V^{\pi}(s_1)$ we will generate a set of $K$ episodes where each episodic interaction ends after $L$ timesteps. We denote the total available budget of samples as $n=KL$. Each episode is generated by following some behavior policy and collect the dataset $\D$. Let $Y^{\pi}_n(s_1)$ be the estimate of $V^{\pi}(s_1)$ computed from $\D$.
%
%
Then our objective is to determine a sequence of behavior policies that minimizes error in the estimation of $V^{\pi}(s_1)$ defined as $\E_{\D}[(Y^{\pi}_n(s_1) - V^{\pi}(s_1))^{2}]$ subject to a 
\textit{safety constraint} on the cost-value of the behavior policies (to be defined later) that must hold with high probability.

There is a growing body of literature studying this important problem of data collection for policy evaluation in both constrained and unconstrained setups.
The work of \citet{antos2008active,carpentier2011finite,carpentier2012minimax,carpentier2015adaptive, fontaine2021online, mukherjee2022revar, mukherjee2024speed} studies this problem in the bandit setting without any constraints under the finite sample regime. 
A common metric of performance that these works consider is the difference between the loss of the agnostic algorithm that does not know problem-dependent parameters, and the oracle loss (which has access to problem-dependent parameters).
This metric is termed \textit{regret} and these works show that in the bandit setting the regret of the agnostic algorithm scales as $\tO(n^{-3/2})$ where $\tO(\cdot)$ hides log factors.
One might be tempted to just run the target policy $\pi$, build $\D$ and then estimate $Y^{\pi}_n(s_1)$. This is called \textit{on-policy} data collection. However, these works show that the on-policy regret degrades at a much slower rate of $\tO(n^{-1})$ compared to active agnostic algorithms. 
%
%
Hence, a natural question arises, can we achieve similar performance for policy evaluation in the MDP setup under a finite sample regime even when we must conform to safety constraints? 
Thus, the goal of our work is to answer the following questions:
\begin{quote}
    \textit{1) Is there a class of MDPs where it is possible to  incur a regret that degrades at a faster rate than $\tO(n^{-1})$?} while satisfying safety constraints?\\\\
    \textit{2) If the answer is yes to (1), can we design an adaptive algorithm (for this class of MDPs) to collect data for policy evaluation that does not violate the safety constraints (in expectation), and its regret degrades at a faster rate than $\tO(n^{-1})$?}
\end{quote}
In this paper, we answer these questions affirmatively. Regarding the first question, we state the tractability condition on the class of MDPs which enables the optimal behavior policy to gather data for policy evaluation without violating the safety constraint and suffer a regret of $\tO(n^{-3/2})$. 
This condition leads to the first lower bound for this setting.

We also note that safe data collection for policy evaluation has also been studied in the bandit setting in \citet{zhu2021safe,zhu2022safe}. However, these works provide asymptotic guarantees whereas we are the first to provide finite-time regret guarantees when per-step constraints must be maintained in expectation. We also show that in the bandit setup, our method empirically outperforms the adaptive importance sampling based algorithms in these works.
Our formulation is also related to constrained MDPs though we specify that the constraint must be satisfied \textit{throughout learning} and not just by the final policy \citep{efroni2020exploration,vaswani2022near}. We discuss further related works in \Cref{app:related-app}.

Our main contributions are as follows:
\par
\textbf{(1)} We formulate the problem of safe data collection for policy evaluation. We introduce the safety constraint such that at the end of $n$ trajectories, the cumulative cost is above a constant factor of the baseline cost. To our knowledge, this is the first work to study this setting under such a safety constraint in the MDP setup with the goal of minimizing the estimate of the MSE of the target policy's expected reward. 
\par
\textbf{(2)} We then show that even in the special case of finite tree-structured MDPs the safe data collection for policy evaluation can be intractable. Then we come up with a condition on MDPs that enables any behavior policy to collect data without violating safety constraints. We also provide the first regret lower bound for the bandit and Tree MDP setting and show that it scales with $\Omega(n^{-3/2})$.
\par
\textbf{(3)} We then consider an oracle strategy that knows the reward variances (problem-dependent parameter) of the reward distributions and derives its sampling strategy.
We then introduce the agnostic algorithm \textbf{Sa}fe \textbf{V}arianc\textbf{e} \textbf{R}eduction (\sav\!) that does not know the problem-dependent parameters and show that its regret scales as $\tO(n^{-3/2})$. We evaluate its performance against other baseline approaches and show that \sav\ reduces MSE faster while satisfying the safety constraint.

\section{Preliminaries}
\label{sec:prelim}
%

We consider the standard finite-horizon Markov Decision process, $\M$, with both a reward and constraint function.
Formally, $\M$, is a tuple $(\S, \A, P, R, C, \gamma, d_0, L)$, where $\S$ is a finite set of states, $\A$ is a finite set of actions, $P: \S \times \A \times \S \rightarrow [0,1]$ is a state transition function, $R$ is the reward function (formalized below), $C$ is the constraint function (formalized below), $\gamma \in[0,1)$ is the discount factor, $d_{0}$ is the starting state distribution, and $L$ is the maximum episode length.
A (stationary) policy, $\pi: \S \times \A \rightarrow [0,1]$, is a probability distribution over actions conditioned on a given state.
We assume data can only be collected through episodic interaction: an agent begins in state $s_1 \sim d_0$ and then at each step $t$ takes an action $a_t \sim \pi(\cdot | s_t)$ and proceeds to state $s_{t+1} \sim P(\cdot | s_t, a_t)$.
%
%

When the agent takes an action, $a$, in state, $s$, it receives both a reward $R \sim R(s,a)$ and a constraint value $C \sim C(s,a)$.
%
%
We assume the transition model $P$ is known but the reward distributions and constraint values are unknown. 
%
%
%
%
We define the reward value of a policy as:
$
V^{\pi}(s_1) \coloneqq \E_\pi[\sum_{t=1}^{n} \gamma^{t-1} R_t]$, where $\E_\pi$ is the expectation w.r.t.\ trajectories sampled by following $\pi$ from the initial state $s_1$. 
We define a constraint-value of $\pi$ similarly:
$
V^{\pi}_c(s_1) \coloneqq \E_\pi[\sum_{t=1}^{n} \gamma^{t-1} C_t]$. 
For simplicity, let the initial state distribution has probability mass on a single state $s_1$. 

Our goal is to efficiently estimate $V^\pi(s_1)$ for a given policy $\pi$ and this estimation requires data from the environment MDP.
Past work has approached this problem by designing a sequence of \textit{behavior policies} which are ran to produce informative data for evaluating $\pi$.
However, in practical applications, it is often infeasible to simply run \textit{any} behavior policy as doing so may violate domain constraints.
We formalize this constraint by first assuming the existence of a safe \textit{baseline policy}, $\pi_0$ that provides an acceptable constraint-value $V_c^{\pi_0}(s_1)$.
Our objective is to determine a sequence of behavior policies, $\{\bb_1,..,\bb_K\}$, that will produce a set of $K$ episodes that lead to the most accurate estimate of $V^\pi(s_1)$ subject to the constraint that the cumulative expected constraint-value $V_c^{\bb}(s_1)$ always exceeds a fixed percentage of $V_c^{\pi_0}(s_1)$.
We consider the objective:
\begin{align}
\label{eq:safety-constraint-MDP}
    &\min_\textbf{b} \E_{\D}[\left(Y^{\pi}_n(s_1) - V^{\pi}(s_1)\right)^{2}]\quad \\
    \text { s.t.} &
    \sum_{k'=1}^k V^{\bb^{k'}}_c(s_1) \geq (1-\alpha)k V_c^{\pi_0}(s_1) 
    \text { for all } k \in[K] \nonumber
\end{align}
where $Y_n(s_1)$ is our estimate of $V^\pi(s_1)$, $\alpha \in (0,1]$ is the risk parameter, and the expectation is over the collected data set $\D$. We also make the following simplifying assumption. 
We assume $\pi_0$ is deterministic, i.e., will only select one action in any given state. W.l.o.g., we give this action the index $0$ and refer to it as the \textit{safe action}. The entire action set is $\A = \{0,1,…,A\}$. This assumption is reasonable in applications where existing, safe policies were created through non-learning methods or manually designed.

For analysis, we will estimate $V^\pi(s_1)$ with a certainty-equivalence estimator.
We define the random variable representing the estimated future reward from state $s$ at time-step $\ell$ as  
$Y^{\pi}_n(s,\ell) \!\coloneqq\! \sum_a \pi(a|s) \wmu_n(s,a) + \gamma \sum_{s'} \wP_n(s'|s,a) Y^{\pi}_n(s',\ell+1)$
where $Y^{\pi}_n(s,\ell+1) \!\!\coloneqq\!\! 0$ if $\ell\!\geq\! L$, and $\wmu_n(s,a)$ is an estimate of $\mu(s,a)$, 
both computed from $\mathcal{D}$.
Finally, the estimate of $V^{\pi}(s_1)$ is computed as $Y^{\pi}_n(s_1) \!\coloneqq\! \sum_{s} d_0(s_1) Y^{\pi}_n(s_1,0)$.
Note that the total available budget of samples is $n$.
We assume that there are $K$ episodes and each episodic interaction terminates in at most $L$ steps which implies $n=KL$.

We assume $V^{\bb}_c(s_1)$ is known for $\bb=\pi_0$ but not for any other policy. The constraint in \eqref{eq:safety-constraint-MDP} 
implies that the total constraint value over all deployed behavior policies should be above the total constraint value that can be obtained from the baseline policy $\pi_0$ till episode $k$ with high probability. 
Observe that small values of $\alpha$ force the learner to be highly conservative, whereas larger $\alpha$ values correspond to a weaker constraint. 
A similar setting has been studied for policy improvement by \citet{wu2016conservative, yang2021reduction} for a variety of sequential decision-making settings. However, our objective is policy evaluation and we formulate a more general safety constraint in terms of $C(\cdot)$ while these prior works define the constraint in terms of $R(\cdot)$.

Similar to the recent works of \citet{chowdhury2021reinforcement,ouhamma2023bilinear, agarwal2019reinforcement, lattimore2020bandit} we assume the reward function $R(s,a) = \N(\mu(s, a),\sigma^2(s, a))$, where $\N$ denotes a Gaussian distribution with mean $\mu(s,a)$ and variance $\sigma^2(s, a)$. Similarly we assume a constraint function $C(s,a) = \N(\mu^c(s, a),\sigma^{c,(2)}(s, a))$, where $\mu^c(s,a)$ and $\sigma^{c,(2)}(s, a)$ are the mean and variance of $\N(\cdot)$.
%
%
Note that this sub-Gaussian distribution assumption is required only for theoretical analysis, whereas our algorithm works for any bounded reward and cost functions.
We assume that we have bounded reward and constraint mean $\mu(s,a),\mu^c(s,a) \in [0,\eta]$ respectively. Finally, we define the MSE of a behavior policy $\bb$ for the target policy $\pi$ at the end of budget $n$ as
\begin{align}
    \L_n(\pi, \bb) = \E_{\D}[\left(Y^{\pi}_n(s_1) - V^{\pi}(s_1)\right)^{2}] \label{eq:mse-policy}
\end{align}
where the expectation is over dataset $\D$ which is collected by $\bb$. Our main objective is to minimize the cumulative regret $\cR_n$ subject to the safety constraint defined in \eqref{eq:safety-constraint-MDP}. To define $\cR_n$ we first define the MSE of a safe oracle behavior policy $\bb^k_*$ that collects the dataset $\D$ as $\L^*_n(\pi, \bb^k_*)$. We will formally describe such oracle policies in \Cref{sec:lower}. Then the regret $\cR_n$ is defined as 
\begin{align}
    \cR_n = \L_n(\pi, \bb) - \L^*_n(\pi, \bb^k_*). \label{eq:regret}
\end{align}

\section{Intractability and Lower Bounds}
\label{sec:lower}

In this section, we first define an oracle data collection strategy that ignores the constraints. We call this the unconstrained oracle. This oracle data collection algorithm can reach a regret bound of $\tO(n^{-3/2})$ in the unconstrained setting \citep{carpentier2012minimax, carpentier2015adaptive, mukherjee2022revar}.
We then show how data collection for policy evaluation under safety constraints in MDPs is challenging compared to standard policy improvement challenges in constrained MDPs \citep{efroni2020exploration,vaswani2022near} as well as safe data collection for policy evaluation in bandits \citep{zhu2021safe,wan2022safe,zhu2022safe}. To show this challenging aspect, we first discuss how the unconstrained oracle fails to satisfy the constraint and achieve the desired regret of $\widetilde{O}(n^{-3/2})$ in the constraint MDP setting. We then propose a safe variant of the oracle policy and finally, discuss a tractability condition that enables the safe oracle algorithm to achieve a regret bound of $\widetilde{O}(n^{-3/2})$.

\subsection{Unconstrained Oracle}
\label{sec:oracle-sampler}
In this section, we discuss the unconstrained oracle data collection strategy that knows the variances of reward and constraint value but does not know the mean of either. Moreover, this oracle does not take into account the safety constraints in \eqref{eq:safety-constraint-MDP}.
%
%
%
%
After observing $n$ samples (state-action-reward tuples), the oracle computes the estimate of $V^{\pi}(s^1_1)$ as $Y^{\pi}_n(s^1_1) =\sum_{a=1}^A\pi(a|s^1_1)\big(\wmu_n(s^1_1,a) + \sum_{s^{\ell+1}_j} P(s^{2}_j|s^1_1, a)Y_{n}(s^2_j)\big)$.
Note that 
we defined $Y^{\pi}_n(s,\ell)$ before, but now we use $Y^{\pi}_n(s)$ and assume the timestep is implicit in the state for this finite-horizon MDP.
%
%
%
%
%
%
%
\citet{mukherjee2022revar} shows that in the unconstrained setting, to reduce the $\Var(Y^{\pi}_n(s^1_1))$ the optimal sampling proportion of the oracle 
for any state $s^\ell_i$ is:
%
\begin{align}
    \bb_*(a|s^{\ell}_i)  &\propto 
     \big( \pi^2(a|s_i^{\ell}) \big[\sigma^2(s^{\ell}_i, a) \nonumber\\
     & + \sum\limits_{s^{\ell+1}_j}P(s^{\ell + 1}_j|s_i^{\ell}, a) M^2(s^{\ell+1}_j) \big]\big)^{\frac{1}{2}} \label{eq:mdp-b-def}
\end{align}
where, $M(s^{\ell}_j)$ is the normalization factor defined as follows:
\begin{align}
     M(s^{\ell}_{i}) &=
    \sum\limits_a \big(\pi^2(a|s^{\ell}_{i})\big(\sigma^2(s^{\ell}_{i}, a)\nonumber\\
&+\sum\limits_{s^{\ell+1}_j}\!\!P(s^{\ell+1}_j|s^{\ell}_i, a) M^2(s^{\ell+1}_j)\big)\big)^{\frac{1}{2}}
    \label{eq:M-def}.
\end{align}
Observe from the definition of $\bb_*(a|s^\ell_i)$ that the optimal proportion in the terminal states, i.e. $\bb_*(a|s^L_j)$, 
do not affect subsequent states and only depends on the target probability $\pi^2(a|s^{\ell}_{i})$ and variance $\sigma^2(s^{\ell}_{i},a)$.
The key difference is in the non-terminal states, $s^{L-1}_i$, where the optimal action proportion, $\bb_*(a|s^{L-1}_i)$ depends on the expected terminal state normalization factor $M(s^{L}_j)$ where $s^L_j$ is a state sampled from $P(\cdot|s^{L-1}_i,a)$. 
The normalization factor, $M(s^L_j)$, captures the total contribution of state $s^L_j$ to the variance of $Y^{\pi}_n(s^{L-1}_j)$ and thus actions in the starting state must be chosen to 1) reduce variance in the immediate reward estimate and to 2) get to states that contribute more to the variance of the estimate. This observation is also noted in \citet{mukherjee2022revar}. Finally, since $\bb_*(a|s)$ also depends on $P(s'|s,a)$, it will put a low sampling proportion on actions $a$ leading to such $s'$ which has low transition probabilities.

\subsection{Safe Oracle Algorithm for Safe Data Collection}
\label{sec:safe-oracle-policy}

The behavior policy defined in the previous section ignores the safety constraint and is thus inapplicable to our problem setting. In this section, we describe a safe variant of this oracle.
%
We define a few notations before introducing the safe algorithm.
Let $T^k_\ell(s,a) \!\coloneqq\! \sum_{k'=1}^{k-1}\sum_{\ell' = 1}^{\ell - 1} \mathbf{1}\{S^{k'}_{\ell'} \!=\! s, A^{k'}_{\ell'} \!=\! a\}$ be the number of times $(s,a)$ is visited before episode $k$. 
Let the mean reward estimate of $(s,a)$ till episode $k$ be computed as $\wmu^k_\ell(s,a) \coloneqq (T^k_\ell(s,a) )^{-1}\sum_{k'=1}^{k-1}\sum_{\ell' = 1}^{\ell-1} \mathbf{1}\{S^{k'}_{\ell'} \!=\! s, A^{k'}_{\ell'} \!=\! a\} R^{k'}_{\ell'}$, where $R^{k'}_{\ell'}$ is the observed reward. Similarly define the constraint-values estimate $\wmu^k_{c,\ell}(s,a)$ based on constraint value $C^{k}_{\ell}$. 
Define the confidence interval at the timestep $L$ of $k$-th episode as $\beta^k_L(s,a)\coloneqq L\sqrt{\log(SAn(n+1))/T^k_L(s,a)}$ \citep{agarwal2019reinforcement}. 

Let
$Y_{c,L}^{\bb^k}(s^1_1) =\!\!\sum_{a=1}^A\bb^k(a|s^1_1)\!\big(\wmu_{c,L}^{k}(s^1_1,a) \!+ \!\sum_{s^{\ell+1}_j}\! P(s^{2}_j|s^1_1, a)Y_{c,L}^{\bb^k}(s^2_j)\!\big)$ denote the empirical estimate of $V_c^{\bb^k}(s^1_1)$ at the end of the $k$-th episode, and $\wmu_{c,L}^{k}(s,a)$ is the empirical estimate of $\mu^c(s,a)$ at the end of the $k$-th episode. 
Note that the oracle algorithm knows the variances of reward $R(\cdot)$ and constraint-value $C(\cdot)$.
%
Using this knowledge, it maintains a safety budget $\wZ^{k-1}_L$ where  $\wZ^{k-1}_L \coloneqq \sum_{k'=1}^{k-1}(Y_{c,L}^{\bb^{k'}}(s^1_1) - \beta^{k'}_L(s,a)) - (1-\alpha)(k-1)V_{c}^{\pi_0}(s^1_1)$ is the safety budget at the end the $k-1$-th episode.
The $\underline{Y}_{c,L}^{\bb^k}(s^1_1) = Y_{c,L}^{\bb^k}(s^1_1) - \beta^k_L(s,a)$ is the lower confidence bound to the $Y_{c,L}^{\bb^k}(s^1_1)$. 

\textbf{Exploration policy $\pi_x$:} We require an exploration policy $\pi_x$ as the oracle algorithm needs a good estimation of the constraint-value $\mu^c(s,a)$ and following the oracle proportion $\bb_*(a|s)$ may not lead to a good estimation of $\mu^c(s,a)$. 
This exploration policy should ensure with high probability that the estimation error of $\mu^c(s,a)$ is low in each $(s,a)$ for which $\pi(a|s) > 0$ and 
can be an optimal design based policy like PEDEL that explores the state space informatively \citep{wagenmaker2022instance} or other exploration policies (e.g., \citet{dann2019policy, menard2020fast, uehara2021representation}).
%
%
%

We now state the following safe oracle algorithm: At the $k$-th episode run the policy
\begin{align}
    \bb^k_* = \begin{cases}  
    \bb_{*}, &\text{ if } \wZ^{k-1}_L \geq 0, k > \sqrt{K}\\
    \pi_0 &\text{ if } \wZ^{k-1}_L < 0 \\
    \pi_x, &\text{ if } \wZ^{k-1}_L \geq 0, k \leq \sqrt{K}
    \end{cases} \label{eq:mdp-sampling-rule}.
\end{align}
The safe oracle algorithm in \eqref{eq:mdp-sampling-rule} alternates between the optimal oracle policy $\bb_*$ in \eqref{eq:mdp-b-def} when the safety budget $\wZ^{k-1}_L$ at the start of the episode $k$ is greater than $0$, otherwise it falls back to running the baseline policy $\pi_0$. Additionally, the safe oracle conducts forced exploration for at most $\sqrt{K}$ episodes when $\wZ^{k-1}_L \geq 0$ using the exploration policy $\pi_x$ to estimate $\mu^c(s,a)$. This is because following the oracle proportion $\bb_*$ in \eqref{eq:mdp-b-def} that samples high variance state-action tuples may not lead to a good estimate of $\mu^c(s,a)$.

\subsection{An Intractable MDP}

In this section, we now show that there exist MDPs where even a safe oracle algorithm may not be able to reach the desired $\widetilde{O}(n^{-3/2})$ regret bound.
We then introduce the tractability condition which depends on the budget as the $\bb_*$ needs to be run sufficient number of times to reach a regret of $\tO(n^{-3/2})$. So a more benign MDP allows one to run $\bb_*$ most of the time whereas a less benign MDP allows you to play $\bb_*$ less. Hence tractability depends on the budget being sufficiently large and also depends on properties of the MDP  and the risk parameter $\alpha$.
To show this challenging aspect of safe data collection, we first define a Tree MDP. 
Using Tree MDPs to analyze the hardness of learning in MDPs and deriving lower bounds is common in the literature \citep{jiang2016doubly,weisz2021exponential,wagenmaker2022beyond,jin2022policy}.
The tree MDP is defined as follows:
\begin{definition}\textbf{(Tree MDP)}
\label{def:tree-mdp}
An MDP is a discrete tree MDP $\T\subset \M$ in which:
\textbf{(1)} There are $L$ levels indexed by $\ell$ where $\ell = 1,2,\ldots,L$. 
\textbf{(2)} Every state is represented as $s^\ell_i$ where $\ell$ is the level of the state $s$ indexed by $i$. 
\textbf{(3)} The transition probabilities are such that one can only transition from a state in level $\ell$ to one in level $\ell + 1$ and each non-initial state can only be reached through one other state and only one action in that state. Formally, $\forall s'$, $P(s' | s,a) \neq 0$ for only one state-action pair $s,a$ and if $s'$ is in level $\ell + 1$ then $s$ is in level $\ell$. Finally, $P(s^{L+1}_j|s^L_i, a) = 0, \forall a$.
\textbf{(4)} For simplicity, we assume that there is a single starting state $s^1_1$ (called the root). It is easy to extend our results to multiple starting states with a starting state distribution, $d_0$, by assuming that there is only one action available in the root that leads to each possible start state, $s$, with probability $d_0(s)$. The leaf states are denoted as $s^L_i$. 
\textbf{(5)} The interaction stops after $L$ steps in state $s^L_i$ after taking an action $a$.
\end{definition}
\begin{customproposition}{1}
\label{prop:intractable}
    Fix an arbitrary $n > 0$. Then 
    there exists an environment where no algorithm (including the safe oracle $\bb^k_*$) can be run that will result in a regret $\cR_n = \L_n(\pi, \bb^*_k) - \L^*_n(\pi, \bb_*)$ of $\tO(n^{-3/2})$ while satisfying the safety constraint, where $\bb_*$ is the unconstrained oracle.
\end{customproposition}
\textbf{Proof (Overview)} We first construct a worst-case $3$ armed bandit environment (MDP with single state) such that  $\mu^c(0) = 0.5$, $\mu^c(1) = 0.5 + \alpha$, $\mu^c(2) = 0$ and variance of $\sigma^{r,(2)}(0) = 0.001$, $\sigma^{r,(2)}(1) = 0.001$ and $\sigma^{r,(2)}(2) = 0.25$. So action $\{2\}$ has low constraint value (unsafe) but has high variance. So the safe oracle policy must sample the action $2$ a large number of times to reach a regret of $\tO(n^{-3/2})$. However, since action $\{2\}$ is unsafe, the safe oracle has to sample baseline action $0$ a sufficient number of times to accrue some safety budget. Combining these two observations we show that achieving a regret rate of $\tO(n^{-3/2})$ is impossible. The full proof is in \Cref{app:intractable-mdp}.
%


The key reason the above environment is intractable is that some trajectories taken by safe oracle has very less constraint value associated with them, compared to the trajectory taken by the baseline policy. To rule out such pathological MDPs, we define the \textit{tractability} condition as follows: 
%
%
%
Let $\bb^{-}$ be any behavior policy that minimizes $V^c_{\bb}(s_1)$. 
%
Define $V^{c}_{\bb^{-}}(s_1)$ as the value of the policy $\bb^{-}$ starting from state $s_1$. This policy $\bb^{-}$ suffers a value $V^{c}_{\bb^{-}}(s_1)$ that is lower than any other behavior policy $\bb$. 
So this policy $\bb^{-}$ can be thought of as the worst possible behavior policy that can be followed by the agent during an episode.
%
%
%
%
%
Then the tractability condition states that
\begin{align}
    \sqrt{n} \geq \dfrac{ \frac{1}{\alpha}\left(1-\frac{V^{c}_{\bb^{-}}(s_1)}{V^{c}_{\pi_0}(s_1)} \right) }{\frac{C_\sigma}{\alpha}\left(1-\frac{V^{c}_{\bb^{-}}(s_1)}{V^{c}_{\pi_0}(s_1)} \right) - 1} \label{eq:safe-state}
\end{align}
where $C_\sigma\in (0,1)$ is a MDP dependent parameter that depends on the reward variance of state-action pairs such that $\frac{C_\sigma}{\alpha}\left(1-\frac{V^{c}_{\bb^{-}}(s_1)}{V^{c}_{\pi_0}(s_1)} \right) - 1 > 0$. 
The quantity $C_\sigma = \max_{s,a}\frac{\bb_*(a|s)}{M(s)}$ where $\bb_*(a|s)$ and $M(s)$ are defined in \eqref{eq:mdp-b-def} and \eqref{eq:M-def} respectively. So $C_\sigma \in (0,1)$ and it captures the worst case trajectory that can be followed by $\bb_*$.

%
%
This condition gives us (1) the lower bound to the budget $n$ to run the behavior policy $\bb^{-}$ to achieve a regret bound of $\tO(n^{-3/2})$ and satisfy the safety constraint; (2) $V^{c}_{\bb^{-}}(s_1) < V^{c}_{\pi_0}(s_1)$ so that the RHS is positive, (3) depends on the reward variance of state action pairs in the MDP so that $\frac{C_\sigma}{\alpha}\left(1-\frac{V^{c}_{\bb^{-}}(s_1)}{V^{c}_{\pi_0}(s_1)} \right) - 1 > 0$, and (4) for smaller $\alpha$ (high risk) the R.H.S increases which increases the required budget $n$.
We further discuss how this condition in \eqref{eq:safe-state} is derived in \Cref{rem:safe-state}.
%
Then we define the following assumption.
\begin{assumption}\textbf{(Tractability)}
\label{assm:tractable-MDP}
    We assume a sufficiently large budget $n$ and an MDP $\M$ that satisfies the constraint in \eqref{eq:safe-state}.  
    We call such an MDP $\M$  \textit{tractable}.
\end{assumption}
\Cref{assm:tractable-MDP} ensures that even the worst possible behavior policy $\bb^{-}$ that can reach a regret of $\tO(n^{-3/2})$ has sufficient budget $n$ to satisfy the safety constraint.  
Moving forward, we will define regret relative to this safe oracle $\bb^K_*$ instead of the unconstrained oracle. Furthermore, we assume tractability in \eqref{assm:tractable-MDP} such that the safe oracle decreases MSE at a comparable rate to the unconstrained oracle $\bb_*$.
Define the reward regret as $\cR_n = \L_n(\pi, \bb) - \L^*_n(\pi, \bb^k_*)$ 
where $\L^*_n(\pi, \bb^k_*)$ is the safe oracle MSE, and $\L_n(\pi, \bb)$ is the agnostic algorithm MSE that does not know reward or constraint-value variances.
Now we present the first general lower bound theorem for the safe data collection strategy in MDPs.
\begin{customtheorem}{1}
\label{thm:lower-bound}\textbf{(Lower Bounds)}
Let $\pi(a|s) = \tfrac{1}{A}$ for each state $s\in\S$. Under \Cref{assm:tractable-MDP} the regret $\cR_n = \L_n(\pi, \bb) - \L^*_n(\pi, \bb^k_*)$ is lower bounded by 
\begin{align*}
    \E\left[\cR_n\right]\geq\begin{cases}\Omega\left(\max \left\{\frac{A^{1/3}}{n^{3/2}}, \left(\frac{H_{*, (1)}^2 A^{2/3}}{n^{3/2}}\right)\right\}\right), \textbf{(MAB)}\\
    \Omega\left(\max \left\{\frac{\sqrt{SAL^2}}{n^{3/2}}, \left(\frac{H_{*, (1)}^2 SAL^2}{n^{3/2}}\right)\right\}\right) \textbf{(MDP)}
    \end{cases}
\end{align*}
where, $\Delta_0=V^{\bb_*}_c(s^1_1)-V^{\pi_0}_c(s^1_1)$ and $H_{*, (1)} = \frac{1}{\alpha V^{\pi_0}_{c}(s^1_1)}(\alpha V^{\pi_0}_{c}(s^1_1)+\Delta_0)$ is the hardness parameter.
\end{customtheorem}

\textbf{Discussion:} \Cref{thm:lower-bound} shows that in the constrained setting the lower bound scales as $\Omega(H_{*, (1)}^2 n^{-3/2})$. 
Note that we can recover the lower bound for the unconstrained setting using this result.
In the unconstrained bandit setting the bound scales as $O\left({A^{1 / 3}}{n^{- 3 / 2}}\right)$ which matches the lower bound of \citet{carpentier2012minimax} (see their Theorem 5). We also establish the first lower bound for the unconstrained setting in data collection for policy evaluation in the tabular MDP setup that scales as $O\left({\sqrt{S A L^2}}{n^{-3 / 2}}\right)$. The $H_{*, (1)}$ captures the hardness in learning in the MDP and consists of the gap $\Delta_0$, $V^{\pi_0}_{c}(s^1_1)$ and $\alpha$. Note that $H_{*, (1)}$ increases with $\alpha$, and the $\Delta_0$ captures how much constraint value the $\bb_*$ can obtain compared to $\pi_0$. Finally, the smaller value of $\pi_0$ increases the hardness as the $\pi_0$ has to be run more times so that the safety constraint is not violated.

\textbf{Proof (Overview)} We first build two deterministic tree MDPs $\T$ and $\T'$ which differ in the variances at only one state. This leads to different optimal oracle behavior policies in $\T$ and $\T'$. Then using the divergence decomposition lemma for MDPs from \citet{garivier2016optimal, wagenmaker2022beyond} we show in \Cref{lemma:tab-RL-lower-bound} that in $\T$ the regret lower bound scales as $\Omega({\sqrt{SAL^2 \log(n)}}/{n^{3/2}})$.
Next, we follow a reduction-based proof technique to prove the reward regret lower bound in the constrained setting. 
Consider any sequential decision-making problem $\mathfrak{A}$ (for instance a multi-armed bandit problem, tabular RL) such that there exists a problem-dependent constant $\xi \in \mathbb{R}$ that only depends on on the number of actions in bandits, or state-action-horizon in tabular RL. Then for a large budget $n$ and any algorithm we have from \Cref{lemma:lower-bound-bandit} and \Cref{lemma:tab-RL-lower-bound} that $\E[\cR_n^{}] \geq \tfrac{\xi}{n^{3/2}}$ for an MDP dependent parameter $\xi$.
Then we lower bound how many times under the budget $n$ the algorithm can run the baseline policy. This is lower bounded in step 2 as $\E[\cR^{}_n] \!\gtrsim\! \min \big\{\tfrac{\xi}{n^{3/2}}, \tfrac{(\alpha V^{\pi_0}_{c}(s^1_1)+\Delta_0)^2\xi^2}{(\alpha V^{\pi_0}_{c}(s^1_1))^2 n^{3/2}}\big\}$. 
We finish off the proof by noting that the quantity $H_{*, (1)} \!\!=\!\! \tfrac{1}{\alpha V^{\pi_0}_{c}(s^1_1)}(\alpha V^{\pi_0}_{c}(s^1_1)+\Delta_0)$ is the hardness parameter when $\pi(a|s) = 1/A$, and substituting the value of $\xi = A^{1/3}$ for bandits (\Cref{lemma:lower-bound-bandit}) and $\xi=\sqrt{SAL^2}$ for $\T$ (\Cref{lemma:tab-RL-lower-bound}). Since $\T\subset \M$, this result is a lower bound to $\M$ as well. 
%
The full proof is in \Cref{app:tract}. \hfill $\blacksquare$

\section{Agnostic Algorithm for Safe Policy Evaluation}
\label{sec:tree}
In this section, we introduce the more realistic agnostic algorithm that does not know the mean and variances of the reward and constraint values of the actions.
We then analyze this algorithm and establish its finite-time MSE. We call this algorithm \textbf{Sa}fe \textbf{V}arianc\textbf{e} \textbf{R}eduction algorithm (abbreviated as \sav) as it reduces the variance of the estimated value of the target policy by following \eqref{eq:mdp-b-def} while simultaneously satisfying the safety constraint \eqref{eq:safety-constraint-MDP} with high probability.
%
%
%

We introduce a few notations before presenting the algorithm. Define the upper confidence bound on the empirical reward variance as $\overline{\wsigma}^{k}_L(s,a) \coloneqq \wsigma^{k}_L(s,a) +\beta^k_L(s,a)$, where $\beta^k_L(s,a)$ is the confidence interval defined in \Cref{sec:oracle-sampler}.
%
%
%
We define the empirical sampling proportion for an arbitrary state-action $(s^\ell_i,a)$ as $\wb^k_\ell(a|s^{\ell}_i)$. 
%
Define the policy $\wb^k_{*,\ell}(a|s^{\ell}_i)$ as similar to $\bb_*(a|s^\ell_i)$ defined in \eqref{eq:mdp-b-def}, but it uses plug-in estimate $\overline{\wsigma}^{k}_\ell(s,a)$ instead of $\sigma^{k}_\ell(s,a)$. This is because the agnostic algorithm does not know the reward and constraint-value variances. 
We define $\wZ^{k-1}_L$ similar to \eqref{eq:mdp-sampling-rule}. 
%
%
%
%
Finally, we define our algorithm, \sav, as follows: At episode $k$ run the policy:
\begin{align}
\quad \wb^k = \begin{cases} 
        \wb^k_* & \text{ if } \wZ^{k-1} \geq 0, k > \sqrt{K}\\
        \pi_0 &\text{ if } \wZ^{k-1} < 0 \\
        \pi_x & \text{ if }             \wZ^{k-1} \geq 0, k \leq \sqrt{K}
        \label{eq:mdp-saver-sample}
\end{cases} 
\end{align}
where $\wb^k_*$ for the episode $k$ is defined as follows: For each timestep $\ell=1,2,\ldots,L$ sample action $A^k_\ell \!\!=\!\! \argmax_a\tfrac{\wb^k_*(a|s^\ell_j)}{T^k_\ell(s^\ell_j,a)}$, where $\wb^k_*(a|s^\ell_j)$ is the plug-in estimate of $\bb_*(a|s^\ell_j)$ as defined in \eqref{eq:mdp-b-def}.
 \sav\ alternates between the exploration policy $\pi_x$, plugin optimal policy $\wb^k_*$, and baseline policy based on the safety budget $\wZ^k$ and the number of episodes $K$. 
In contrast to \eqref{eq:mdp-saver-sample} the oracle policy in \eqref{eq:mdp-sampling-rule} uses the true oracle proportions $\bb^*$ when $\wZ^{k-1} \geq 0, k \!\!>\!\! \sqrt{K}$. 
%
Also, observe that the action selection rule ensures that the ratio $\wb^k_{*,\ell}(a|s)/T^k_{\ell}(s,a) \!\!\approx\!\! 1$.
It is a deterministic action selection rule and thus avoids inadvertently violating the safety constraint due to random sampling from the optimal proportions $\wb^k_\ell(a)$.
Now we formally state the \sav\ for the tree MDP. 
At every episode $k\in[K]$ it generates a sampling history $\H^k\coloneqq\{S^k_\ell, A^k_\ell, R(S^k_\ell, A^k_\ell), C(S^k_\ell, A^k_\ell)\}_{\ell=1}^L$ by selecting $A^k_\ell$ according to \eqref{eq:mdp-saver-sample} and appends it to the dataset $\D$. 
After observing the feedback it updates the model parameters and estimates $\wb^{k+1}_{1}(a|s)$ for each $s,a$. It returns the dataset $\D$ to evaluate $\pi$. The pseudocode is in \Cref{alg:tree-track}.
\begin{algorithm}
\caption{\textbf{Sa}fe \textbf{V}arianc\textbf{e} \textbf{R}eduction (\sav) for $\T$}
\label{alg:tree-track}
\begin{algorithmic}[1]
\STATE \textbf{Input:} Risk Parameter $\alpha>0$, target policy $\pi$.
\STATE \textbf{Output:} Dataset $\D$.
\STATE Initialize $\D = \emptyset$, $\wb^{}_1(a|s)$ uniform over all actions.
\FOR{$k=1,2,\ldots,K$}

\FOR{$\ell=1,2,\ldots,L$}
\STATE Get $\H^k\coloneqq\{S^k_\ell, A^k_\ell, R(S^k_\ell, A^k_\ell), C(S^k_\ell, A^k_\ell)\}_{\ell=1}^L$ by selecting $\bb^k$ according to \eqref{eq:mdp-saver-sample}.
\STATE $\D \leftarrow \D \cup \{(\H^k, \wb^k)\}$
\STATE Update model parameters and estimate $\wb^{k+1}_{1}(a|s)$ for each $s,a$ 
\ENDFOR
\ENDFOR
\STATE \textbf{Return} Dataset $\D$ to evaluate policy $\pi$.
\end{algorithmic}
\end{algorithm}

We now present a theorem that gives the MSE of the agnostic algorithm \sav\ in the tree MDP in the following theorem.
We define the problem complexity parameters
%
$M =\sum_{\ell=1}^L\sum_{s^\ell_j}M(s^\ell_j)$ 
summed over all stated $s\!\in\![S]$. 
Define predicted agnostic constraint violation 
\begin{align*}
    \C_n(\pi, \wb^k) \coloneqq \sum_{k=1}^K\indic{\wZ^k < 0}
\end{align*}
when taking actions according to \eqref{eq:mdp-saver-sample}.
For scalars $x,y \in \mathbb{R}$ define $\min ^{+}(x, y):=|\min (x, y)|$. 
Define the problem complexity parameter $H_{*, (2)} \!\!=\!\!\sum_{\ell=1}^L\sum_{s^\ell_j}H_{*, (2)}(s^\ell_j)$ where 
\begin{align}
H_{*, (2)}(s^\ell_j) \!\!&=\!\! \frac{1}{\alpha\mu^{c}(s^\ell_j,0)}\!\!\!\sum_{a\in\A\setminus\{0\}}\!\!\!\!\pi(a|s^\ell_j)\sigma(s^\ell_j,a)\min^{+}\left\{\Delta_c(s^\ell_j,a),\right.\nonumber\\
&\left.\Delta_c(s^\ell_j,0)-\Delta_c(s^\ell_j,a) \}\right\}. \label{eq:prob-complexity}
\end{align}
%
\begin{remark}\label{rem:H}
The quantity $H_{*, (2)}(s^\ell_j)$ signifies the total cost of maintaining the safety constraint at state $s^\ell_j$ by sampling action $0$ instead of sampling based on $\pi(a)\sigma(a)$. Observe that $\Delta_c(s^\ell_j,0)-\Delta_c(s^\ell_j,a) = \mu_c(s^\ell_j,a) - \mu_c(s^\ell_j,0)$. So $\min^+\{\Delta_c(s^\ell_j,a),\Delta_c(s^\ell_j,0)-\Delta_c(s^\ell_j,a)\}$ depends on how close is the action $a$ to the best cost action $ \mu^{*,c}(s^\ell_j)$ or the baseline action $0$. Also observe that because of the $\min^+$ operator, this quantity cannot be $0$. Further, observe that the gap is weighted by $\pi(a|s^\ell_j)\sigma(s^\ell_j,a)$ signifying that actions with low variance and target probability contribute less to the constraint violation MSE. Also, observe that higher risk setting $(\alpha \rightarrow 0)$ leads to higher $H_{*, (2)}(s^\ell_j)$. Finally, it can be easily verified that $H_{*,(2)} > H_{*, (1)}$. 
\end{remark}


Now we present a theorem that we will use to bound the regret of \sav\ in Tree MDP $\T$ under \Cref{assm:tractable-MDP}. 


\begin{customtheorem}{2}\textbf{(informal)}
\label{thm:mdp-agnostic-loss}
The MSE of the \sav\ in $\T$ for $\frac{n}{\log(SAn(n+1)/\delta)} \!\!\geq\!\! O((LSA^2)^2 + \tfrac{SA}{\Delta^{c,(2)}_{\min}} + \tfrac{1}{4 H_{*, (2)}^2})$ is bounded by $\L_n(\pi, \wb^k) \leq \tO\big(\tfrac{M^2(s^1_1)}{n} + \frac{M^2(s^1_1)}{n}(M LSA^2 + H_{*, (2)})^2   + \tfrac{(LSA^2)^2 H_{*, (2)}^2 M^2}{\min_{s}\bb^{*,k,(3/2)}(s)n^{3/2}}\big)$ with probability $(1-\delta)$. 
    The total predicted constraint violations are bounded by
        $\C_n(\pi, \wb^k) \leq \tO\big(\frac{H_{*, (2)}}{2} \frac{n}{M_{\min}} + LSA^2 +  \frac{(LSA^2)^2 H_{*, (2)}^2 M^2}{n^{1/2}}\big)$
    with probability $(1-\delta)$, where $M_{\min} \coloneqq \min_s M(s)$.
    \vspace*{-0.5em}
\end{customtheorem}

\textbf{Discussion:} In \Cref{thm:mdp-agnostic-loss} the first quantity upper bounding $\L_n(\pi, \wb^k)$ is denoted as the \textit{safe MSE} when the safety budget $\wZ^k \geq 0$ and scales as $M^2(s^1_1)/n$. The second quantity is denoted as the \textit{unsafe MSE} which is accumulated due to constraint violation ($\wZ^k < 0$) and sampling of the safe action $0$. Finally, the third quantity is the MSE suffered due to estimation error of the variances $\sigma^2(s,a)$.
Comparing the result of the \Cref{thm:mdp-agnostic-loss} with the unconstrained setting of \citet{mukherjee2022revar} we have the additional quantity of ${(M LSA^2 + H_{*, (2)})^2}/{n}$
where $H_{*, (2)}$ is the problem-dependent quantity summed over all states. 
Observe that if all actions are safe then we have that $\L^*_n(\pi, \wb^k) = {M^2(s^1_1)}/{n}$ which recovers the MSE of the unconstraint setting in 
\citet{carpentier2011finite, carpentier2012minimax, carpentier2015adaptive, mukherjee2022revar}.
%



\textbf{Proof (Overview)}
    The agnostic \sav\ does not know the reward variances. The sampling rule in \eqref{eq:mdp-saver-sample} ensures that the good variance event $\xi_{v,K}$ defined in \eqref{eq:mdp-good variance-event} (step $2$) holds such that \sav\ has good estimates of reward variances. 
    Then, note that in the tree MDP $\T$ we have a closed form expression of $\bb_*(s^\ell_j|a)$. 
     We divide the total budget $n = n_f + n_u$ where $n_f$ are the samples allocated when safety budget $\wZ^k \geq 0$. The $n_f$ samples are also used by the exploration policy $\pi_x$ to ensure a good estimate of the constraint means as stated in the event $\xi_{c,K}$ \eqref{eq:mdp-good-constraint-event-agnostic}. This is ensured by $\pi_x$ and noting that $n > SA\log(1/\delta)/\Delta^{2}_{c,\min}$. The remaining samples from $n_f$ are allocated for reducing the MSE by sampling according to $\argmax_a(\bb_*(a|s)/T^k_\ell(s,a))$. We again prove an upper and lower bound to $T_{n}(s,a)$ in \eqref{eq:mdp-saver-6-1} in step $4$ and \eqref{eq:mdp-saver-6-2} in step $5$. Finally using \Cref{lemma:wald-variance} we can bound the MSE for the duration $n_f$ for all actions $a\in\A\setminus\{0\}$ for each state $s^\ell_j$ in step $6$. Now for an upper bound to constraint violations, we use the gap $\Delta^{\alpha}_c(s,a)\coloneqq(1-\alpha) \mu_{c,0}(s,a)-\mu_c(s,a)$ to bound how much each $a\in\A\setminus\{0\}$ in $s^\ell_j$ is underpulled and their pulls replaced by action $\{0\}$ weighted by $\pi(a|s^\ell_j)\sigma(s^\ell_j,a)$. This is captured by $H_{*, (2)}(s)$. Summing over all $s$, and horizon $L$ gives the upper bound to the violations as shown in step $7$. Finally, we also show a lower bound to constraint violations to bound the MSE for the duration when actions $a\in\A\setminus\{0\}$ are underpulled. This is shown in steps $8$ and $9$ where we equate the safety budget to $0$ to obtain a lower bound to $T_n(s^\ell_j, 0)$ for each state $s^\ell_j$. Combining everything in step $10$ gives the result. The proof is in \Cref{app:mdp-saver-loss}.  
    \hfill$\blacksquare$

Note that we do not have a closed-form solution to $\bb^k_*$ that both minimizes MSE as well as upholds \eqref{eq:safety-constraint-MDP} for all $k\in[K]$ (as opposed to \citet{carpentier2011finite, mukherjee2022chernoff}).
Therefore, we now define two additional notions of regret. The first is the regret defined as 
    $\overline{\cR}_n = \L_n(\pi, \wb^k) - \overline{\L}^*_n(\pi, \bb^k_*)$ 
where $\overline{\L}^*_n(\pi, \bb^k_*)$ is the upper bound to the safe oracle MSE. 
The second is the constraint regret defined as follows:
    $\overline{\cR}^c_n = \C_n(\pi, \wb^k) - \overline{\C}^*_n(\pi, \bb^k_*)$ 
where $\overline{\C}^*_n(\pi, \bb^k_*)$ is the upper bound to the oracle constraint violations. Note that the oracle knows the variances of reward and constraint-values for all state-action tuples (but does not know the mean of either).
%
The following corollary bounds \sav\ regret.
\begin{customcorollary}{1}
\vspace*{-0.0em}
Under \Cref{assm:tractable-MDP}, the constraint regret of \sav\ is bounded by
$\overline{\cR}^c_n \leq O\big(\frac{\log(n)}{n^{1/2}}\big)$ and the regret is bounded by $\overline{\cR}_n \leq O\big(\frac{\log(n)}{n^{3/2}}\big)$.
\vspace*{-0.5em}
\end{customcorollary}

The proof is in \Cref{app:mdp-regret-corollary} and directly follows from \Cref{thm:mdp-agnostic-loss}, and \Cref{prop:mdp-oracle-loss}. In \Cref{prop:mdp-oracle-loss} in \Cref{app:mdp-oracle-loss} we prove the MSE upper bound of the oracle. 
Observe, that the regret decreases at a rate of $\tO(n^{-3/2})$, faster than the rate of decrease of on-policy MSE of  $\tO(n^{-1})$.
Thus, we have been able to answer the second main question of this paper affirmatively.
We also state a constraint and regret upper bound in the bandit setting in \Cref{corollary:bandit-regret} in \Cref{app:mdp-regret-corollary}. Also, observe that our upper bound matches the rate in the lower bound shown in \Cref{thm:lower-bound}.




\section{Extension to DAG}
\label{app:dag}
In this section, we approximate the solution in $\T$ to DAG $\G$ and formulate the safe algorithm for policy evaluation. We first define the DAG MDP in the following definition.

\begin{definition}\textbf{(DAG MDP)}
\label{def:dag-mdp}
A DAG MDP follows the same definition as the tree MDP in \Cref{def:tree-mdp} except $P(s'|s,a)$ can be non-zero for any $s$ in layer $\ell$, $s'$ in layer $\ell+1$, and any $a$, i.e., one can now reach $s'$ through multiple previous state-action pairs.
\end{definition}
Then we state the following lemma from \citet{mukherjee2022revar}.
\begin{lemma}
\label{lemma:dag}\textbf{\textbf{(Proposition 3 of \citet{mukherjee2022revar})}}
Let $\G$ be a $3$-depth, $A$-action DAG defined in \Cref{def:dag-mdp}. The minimal-MSE sampling proportions $\bb_*(a|s^1_1), \bb_*(a|s^2_j)$ depend on themselves such that $\bb(a|s^1_1) \propto f(1/\bb(a|s^1_1))$ and $ \bb(a|s^2_j) \propto f(1/\bb(a|s^2_j))$ where $f(\cdot)$ is a function that hides other dependencies on variances of $s$ and its children. 
\end{lemma}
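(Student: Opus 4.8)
The plan is to write the variance of the certainty-equivalence estimator $Y^\pi_n(s^1_1)$ for the $3$-depth DAG as an explicit function of the sampling proportions and then read off the stationarity conditions of the resulting constrained minimization. Writing the layer-$1$, layer-$2$, and layer-$3$ (leaf) states as $s^1_1$, $s^2_j$, and $s^3_k$, I would first express the expected visit counts in terms of the proportions: $N(s^1_1,a) = n\,\bb(a|s^1_1)$, $N(s^2_j) = n\sum_a \bb(a|s^1_1)P(s^2_j|s^1_1,a)$, $N(s^2_j,a)=N(s^2_j)\,\bb(a|s^2_j)$, and $N(s^3_k)=\sum_{j,a} N(s^2_j,a)\,P(s^3_k|s^2_j,a)$. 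The crucial structural difference from \Cref{def:tree-mdp} is that in the DAG each child state is reachable from several parent state-action pairs, so $N(s^2_j)$ and $N(s^3_k)$ are genuine sums over the very proportions being optimized, whereas in $\T$ each such sum collapses to a single term by the unique-parent property.

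Next I would decompose $\Var(Y^\pi_n(s^1_1))$ recursively, obtaining a sum of local reward-variance contributions of the form $\pi^2(s,a)\,\sigma^2(s,a)/N(s,a)$ together with propagated contributions $\propto M^2(s^2_j)/N(s^2_j)$ and $\propto M^2(s^3_k)/N(s^3_k)$ from the deeper layers, paralleling the normalization factor $M(\cdot)$ of \eqref{eq:M-def}; note that the coefficients multiplying these propagated terms involve the \emph{fixed} target $\pi$ only, so the dependence on the behavior proportions enters solely through the denominators $N(s^2_j),N(s^3_k)$. Substituting the visit-count expressions makes the objective a function of the proportions alone, and I would minimize it subject to the simplex constraints $\sum_a \bb(a|s)=1$ at each state via Lagrange multipliers, writing $\partial \mathcal{L}/\partial \bb(a|s)=0$.

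The core of the argument is what the differentiation produces. A propagated term $M^2(s^2_j)/N(s^2_j)$ depends on $\bb(b|s^1_1)$ through $N(s^2_j)=n\sum_a \bb(a|s^1_1)P(s^2_j|s^1_1,a)$, and its derivative is $-\,n\,M^2(s^2_j)\,P(s^2_j|s^1_1,b)/N^2(s^2_j)$; because $\bb(b|s^1_1)$ itself sits inside $N(s^2_j)$, it reappears on the right-hand side when the stationarity equation is solved for $\bb(b|s^1_1)$. Collecting terms yields $\bb^2(b|s^1_1) = \pi^2(b|s^1_1)\sigma^2(s^1_1,b)\big/\big(n\lambda - n^2\sum_j M^2(s^2_j)P(s^2_j|s^1_1,b)/N^2(s^2_j)\big)$, i.e.\ an implicit relation of the schematic form $\bb(a|s^1_1) \propto f\!\left(1/\bb(a|s^1_1)\right)$, where $f$ aggregates $\sigma^2(s^1_1,a)$, the transition probabilities, and the deeper normalization factors $M(s^2_j)$. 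Repeating the same differentiation one layer down, where $\bb(a|s^2_j)$ enters $N(s^3_k)$ through $N(s^2_j,a)P(s^3_k|s^2_j,a)$, produces the analogous self-referential equation $\bb(a|s^2_j) \propto f\!\left(1/\bb(a|s^2_j)\right)$. In the tree case these cross terms vanish identically (each child contributes to exactly one parent action), which is precisely why \eqref{eq:mdp-b-def} is a closed form; the lemma records that this collapse fails for $\G$.

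The main obstacle I anticipate is the careful bookkeeping of the shared-visit coupling in the variance decomposition: ensuring that every path contributing to a shared child $s^3_k$ (or $s^2_j$) is correctly accumulated into its visit-count denominator, and then differentiating the resulting reciprocals without dropping the self-dependence. Once the stationarity conditions are written with these terms intact, reading off the $f(1/\bb)$ form is mechanical; the delicate part is establishing that the coupling genuinely does not cancel (in contrast to the tree), so that no explicit closed-form solution exists and the proportions are only characterized as a fixed point.
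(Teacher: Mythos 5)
The paper itself never proves this lemma --- it is imported verbatim as Proposition 3 of \citet{mukherjee2022revar} --- and your Lagrangian/stationarity derivation is a correct reconstruction of essentially the argument behind that cited proposition: because a shared child's expected visit count $N(s^2_j)=n\sum_a \bb(a|s^1_1)P(s^2_j|s^1_1,a)$ (and similarly $N(s^3_k)$) is a sum over the very proportions being optimized, the first-order conditions leave $\bb(a|s)$ inside the denominators on the right-hand side, yielding the implicit form $\bb(a|s)\propto f(1/\bb(a|s))$, whereas under the unique-parent property of \Cref{def:tree-mdp} each such sum collapses to a single term that merges into the $1/\bb^2(a|s)$ coefficient, which is exactly why \eqref{eq:mdp-b-def} is closed form. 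Your sketch matches the source's approach, so there is nothing further to flag.
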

The \Cref{lemma:dag} \citep{mukherjee2022revar} shows that one cannot derive a closed-form solution to $\bb_*$ in $\G$ because of the existence of multiple paths to the same state resulting in a cyclical dependency. Note that in $\T$ there is only a single path to each state and this cyclical dependency does not arise. 
%
%
%
%
%
%
If we ignore the multiple path problem, we can approximate the optimal sampling proportion in $\G$ by using the tree formulation in the following way: At every time $t$ during an episode $k$ call the \Cref{alg:revar-g-1} to estimate $M_0(s)$ where $M_{t'}(s)\in\mathbb{R}^{L\times |\S|}$ stores the expected standard deviation of the state $s$ at iteration $t'$. 
After $L$ such iteration we use the value $B_0(s)$ to estimate $\bb(a|s)$ as follows:
\begin{align*}
    \bb_*(a|s) \!&\propto\! 
    \! \sqrt{\! \pi^2(a|s)\! \bigg[\sigma^2(s, a)\!  +\!    \gamma^2\!\!  \sum\limits_{s'}\!\!P(s'|s, a) M^2_0(s)\! \bigg]}.
\end{align*}
Note that for a terminal state $s$ we have the transition probability $P(s'|s, a) = 0$ and  then the $b(a|s) = \pi(a|s)\sigma(s,a)$. This iterative procedure follows from the tree formulation in \Cref{lemma:L-step-tree} and is necessary in $\G$ to take into account the multiple paths to a particular state.  \Cref{alg:revar-g-1} gives pseudocode for this procedure which takes inspiration from value-iteration for the episodic setting.
%
%
\begin{algorithm}
\caption{Estimate $B_0(s)$ for $\G$}
\label{alg:revar-g-1}
\begin{algorithmic}[1]
\STATE Initialize $B_L(s)=0$ for all $s\in\S$ 
\FOR{$t'\in L-1, \ldots, 0$} 
\STATE $B_{t'}(s) = \sum\limits_{a} \big(\pi^2(a|s)\big(\sigma^2(s, a) $ \\
                \qquad\qquad\qquad $ + \gamma^2\sum\limits_{s'}P(s'|s, a) B_{t'+1}^2(s)\big)\big)^{\tfrac{1}{2}}$
\ENDFOR
\STATE \textbf{Return} $B_0$.
\end{algorithmic}
\end{algorithm}

Finally, the safe algorithm in $\G$ can be stated as follows: At episode $k$ 
\begin{align}
\quad \text{Play } \bb^k = \begin{cases} \pi_e & \text{ if }             \wZ^k \geq 0, k \leq \sqrt{K}\\
            \pi_{\wb^k} & \text{ if } \wZ^k \geq 0, k > \sqrt{K}\\
            \pi_0 &\text{ if } \wZ^k < 0 \label{eq:mdp-saver-dag-sample}
\end{cases} 
\end{align}
where $\pi_{\wb^k}$ for the episode $k$ is defined as follows: For each time $\ell=1,2,\ldots,L$ sample action $A^k_\ell = \argmax_a\tfrac{\wb^k(a|s^\ell_j)}{T^k_\ell(s^\ell_j,a)}$, where $\wb^k(a|s^\ell_j)$ is the plug-in estimate of $\bb_*(a|s^\ell_j)$ that is obtained using \Cref{alg:revar-g-1}.

\vspace*{-0.5em}
\section{Experiments}
\label{sec:expt}

\begin{figure}[!hbt]
\vspace*{-0.5em}
  \begin{center}
    \begin{tabular}{cc}
    \subfigure[Bandit setting]{\label{fig:expt-bandit}\includegraphics[scale = 0.27]{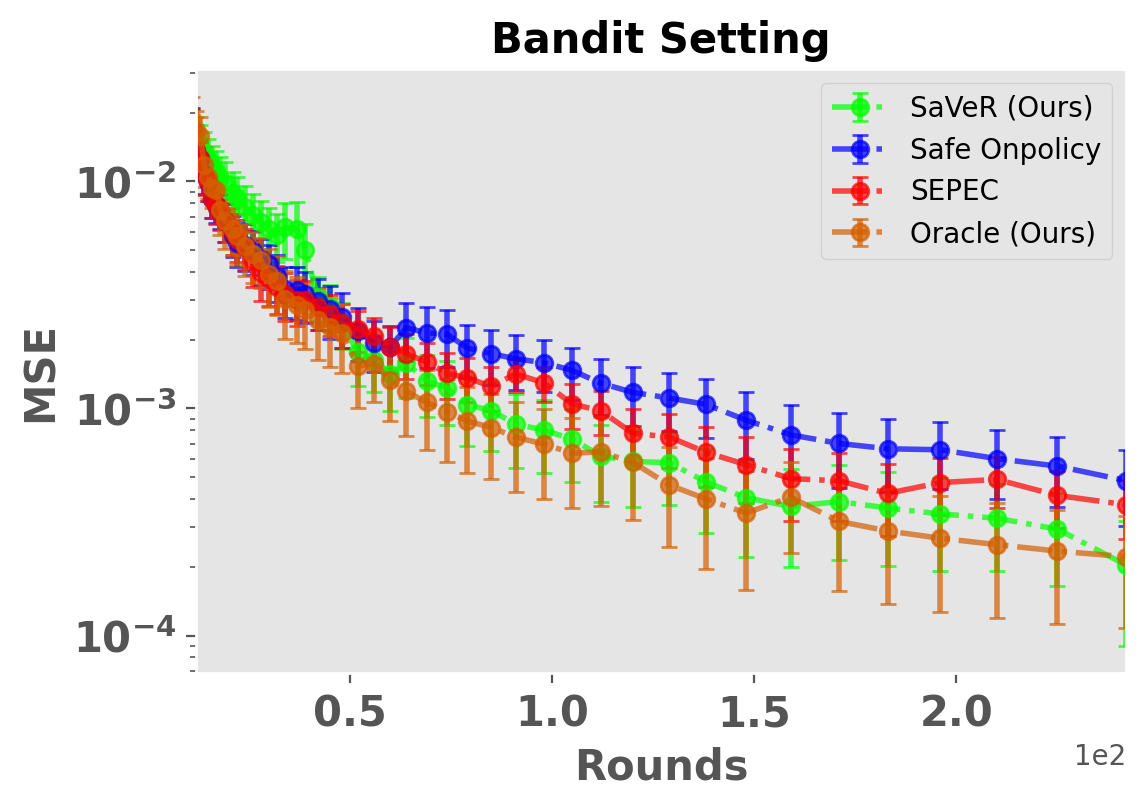}} &
    \subfigure[Movielens setting]{\label{fig:expt-movielens}\includegraphics[scale = 0.23]{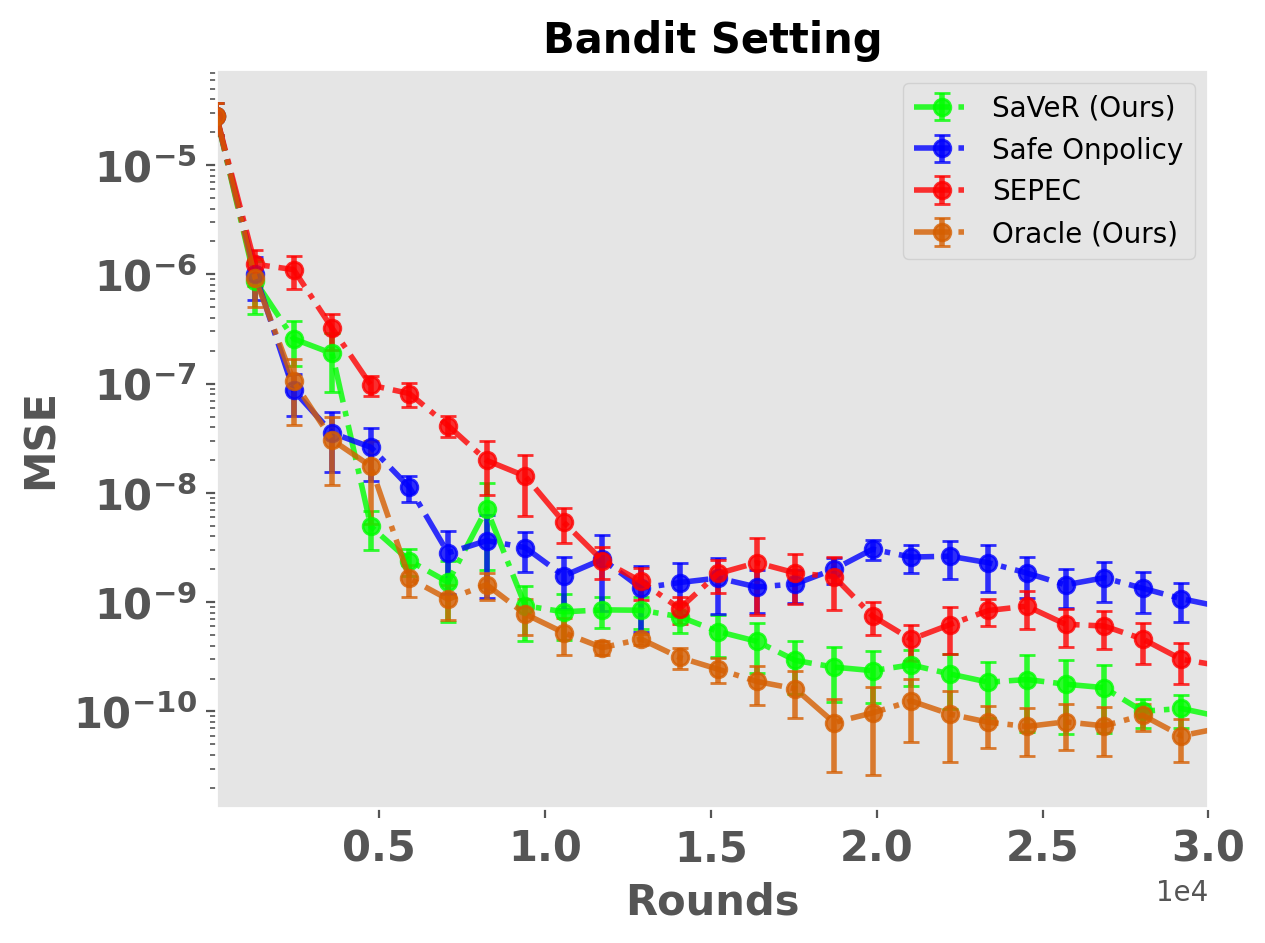}} \\
    \subfigure[Tree MDP]{\label{fig:expt-tree-mse}\includegraphics[scale = 0.27]{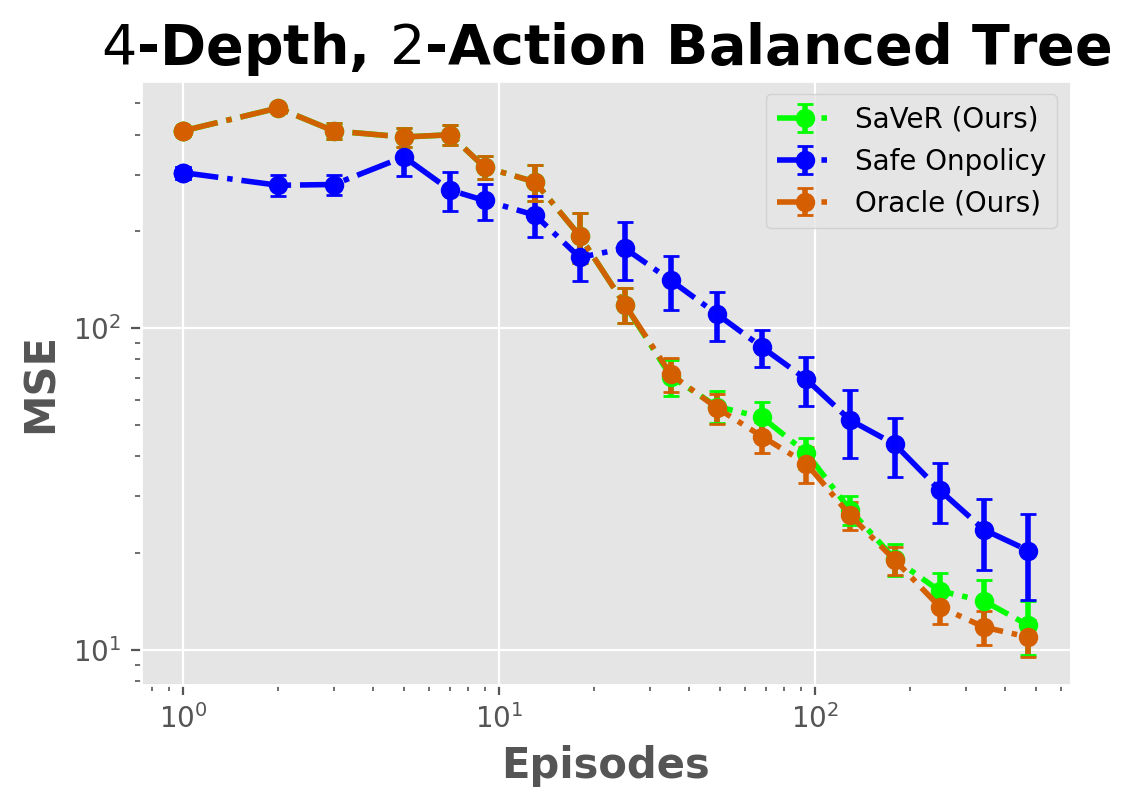}} &
    \subfigure[Grid MDP]{\label{fig:expt-grid-mse}\includegraphics[scale = 0.24]{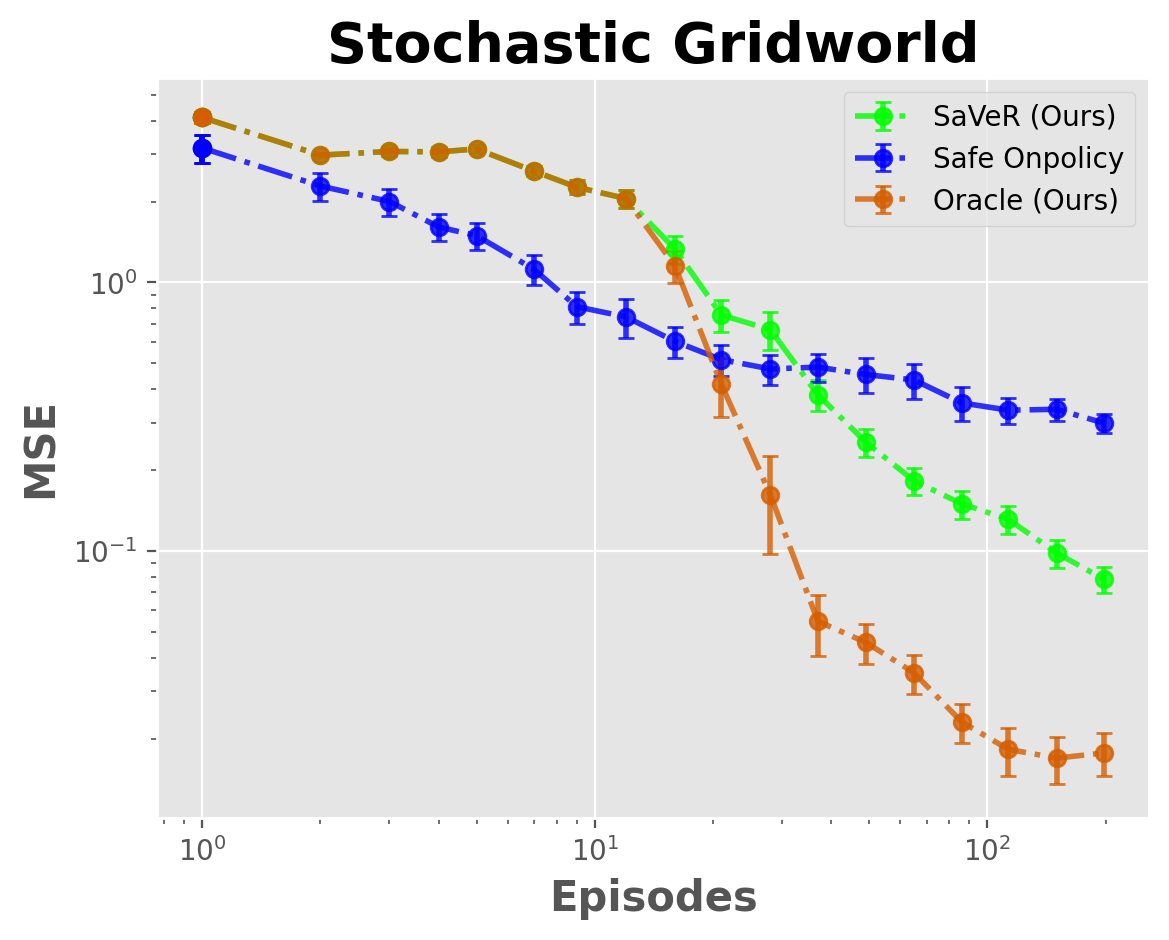}}
    \end{tabular}
    \vspace{-1.0em}
    \caption{MSE in different settings. The vertical axis (log-scaled) gives MSE and the horizontal axis is the number of episodes (or rounds for bandits). Confidence bars show one standard error.}
    \label{fig:expt}
  \end{center}
  \vspace*{-0.5em}
\end{figure}


In this section, we show numerical experiments validating our theoretical results. The full experimental details and numerical results are in \Cref{app:addl-expt-1}. 
We test the oracle, and \sav\ algorithm and introduce a method that we call safe on-policy. The safe on-policy algorithm follows the target policy $\pi$ when the safety budget is positive and plays baseline policy $\pi_0$ when the safety budget is negative. We also test against the SEPEC \citep{wan2022safe} algorithm for the bandit setting which uses importance sampling to safely collect data for policy evaluation.
Note that the bandit setting consists of a single state and every episode $K$ consists of a single timestep $L=1$.
\Cref{fig:expt} shows the MSE obtained by each algorithm for a varying number of episodes.
In \Cref{app:fig-expt-2}, we show that all algorithms respect the constraint but that the oracle and \sav\ are not excessively conservative.

\textbf{Experiment 1 (Bandit):} We implement a general bandit environment with $A=11$ and show that \sav\ achieves lower MSE than SEPEC and safe on-policy algorithm as the number of rounds increases. The performance is shown in \Cref{fig:expt-bandit}. From \Cref{fig:expt-bandit-safe} we see that \sav, and oracle do not oversample the safe action but allocate the right amount to be just safe. They allocate more samples to reduce the MSE, whereas the safe on-policy and SEPEC over-sample the safe action instead of focusing on reducing the MSE.

\textbf{Experiment 2 (Movielens):} We conduct this experiment on the real-life Movielens 1M dataset \citep{movielens} for $A=30$ actions and show that \sav\ achieves lower MSE than safe on-policy and SEPEC algorithm as the number of rounds increases. The performance is shown in \Cref{fig:expt-movielens}. From \Cref{fig:expt-movielens-safe}, we see that \sav\ and oracle \sav, and the oracle do not oversample the safe action compared to SEPEC.

\begin{figure}[!ht]
\vspace*{-0.5em}
  \begin{center}
    \begin{tabular}{cc}
    \subfigure[Bandit violation]{\label{fig:expt-bandit-safe}\includegraphics[scale = 0.27]{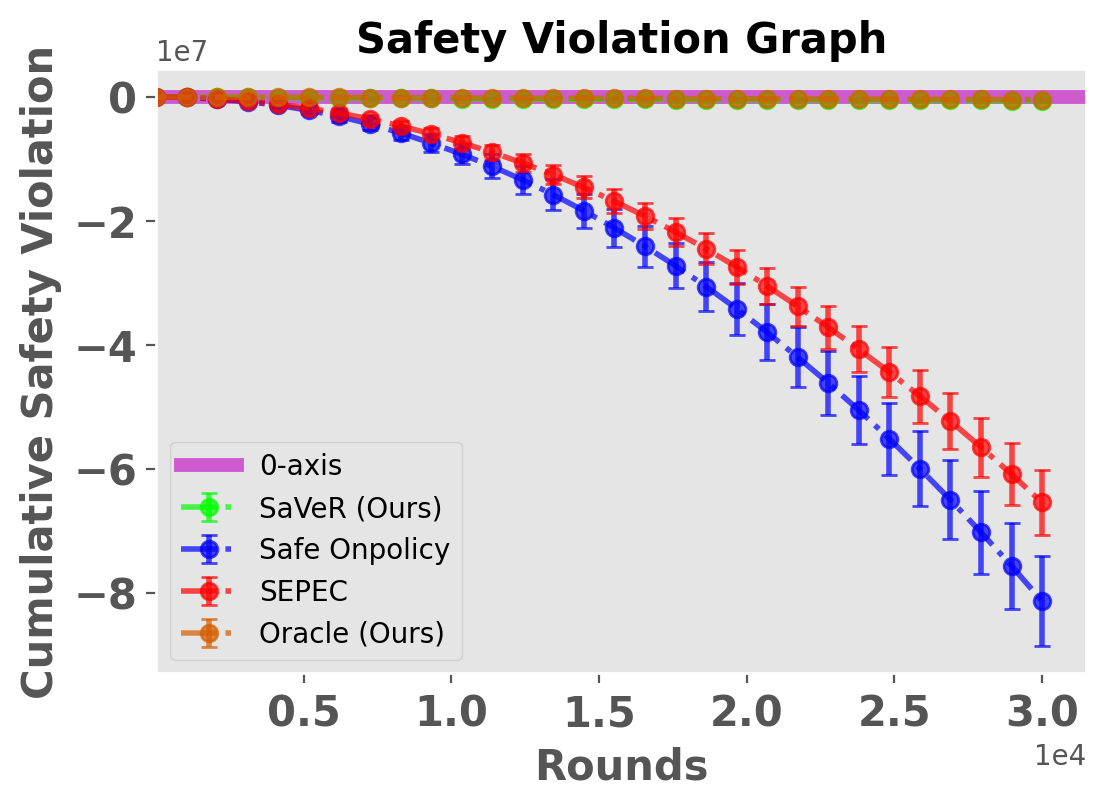}} &
    \subfigure[Movielens violation]{\label{fig:expt-movielens-safe}\includegraphics[scale = 0.23]{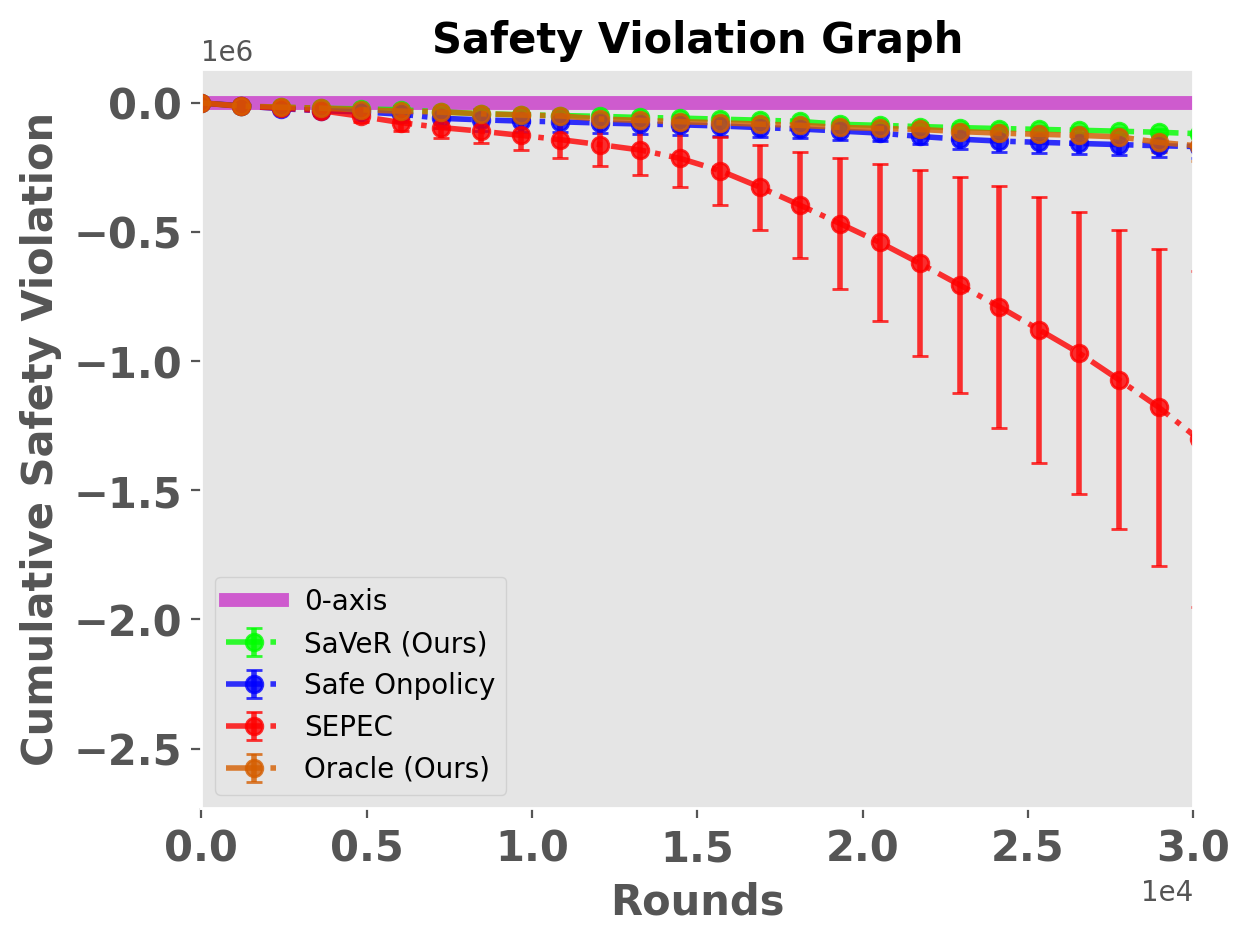}} \\
    \subfigure[Tree MDP violation]{\label{fig:expt-tree-mse-safe}\includegraphics[scale = 0.27]{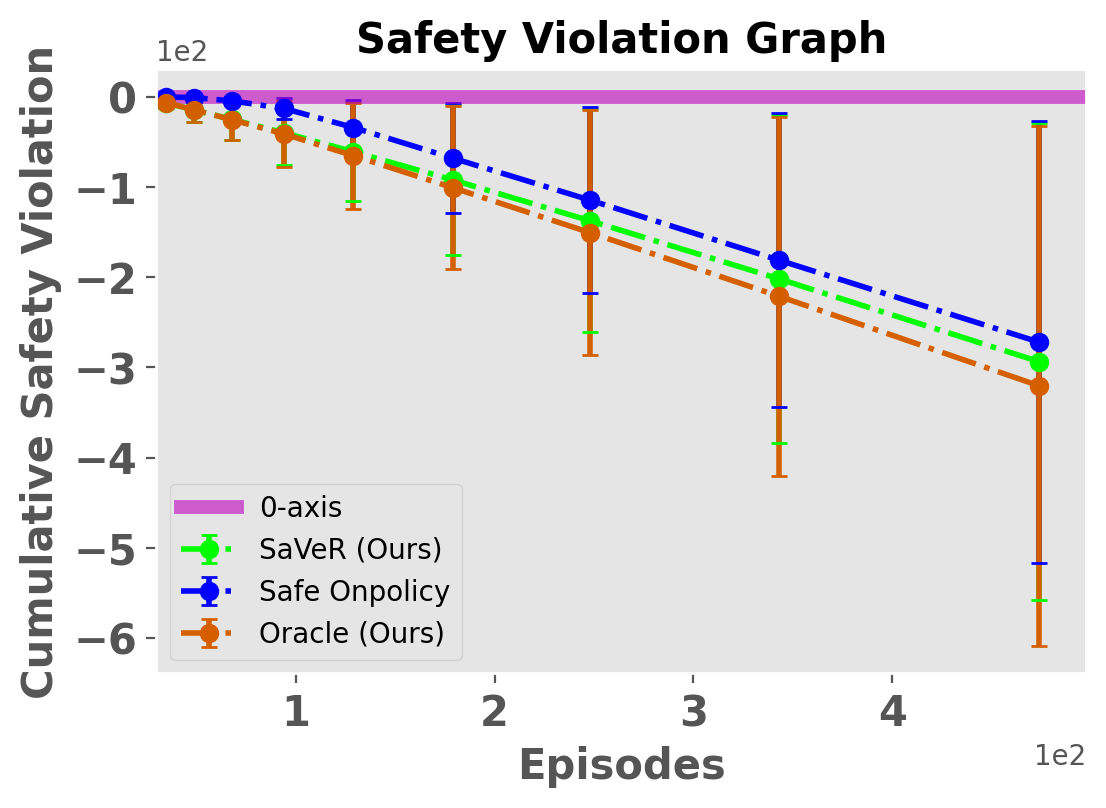}} &
    \subfigure[Grid MDP violation]{\label{fig:expt-grid-mse-safe}\includegraphics[scale = 0.24]{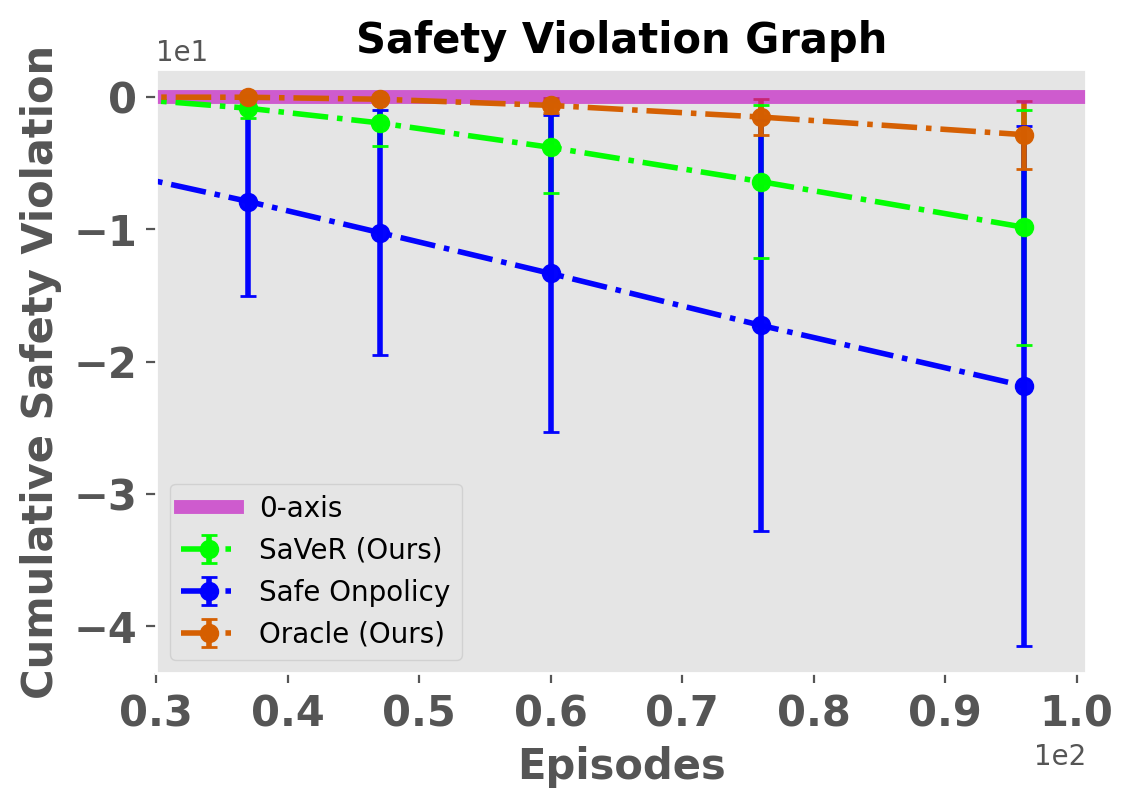}}
    \end{tabular}
    \vspace{-1.0em}
    \caption{The vertical axis gives cumulative constraint violation and the horizontal axis is the number of episodes/rounds. The $0$-axis is shown in pink. A safe algorithm has its plot below the $0$-axis with the plot showing the cumulative unsafe budget.}
    \label{app:fig-expt-2}
  \end{center}
  \vspace*{-1em}
\end{figure}


\textbf{Experiment 3 (Tree):} We experiment with a $4$-depth $2$-action deterministic tree MDP consisting of $15$ states. With increasing episodes \sav\ reaches lower MSE than safe on-policy and eventually matches the oracle's MSE in \Cref{fig:expt-tree-mse}. In \Cref{fig:expt-tree-mse-safe} the  \sav\ and oracle run the baseline policy almost similar number of times compared to the safe on-policy.


\textbf{Experiment 4 (Gridworld):} This setting consist of a $4\times 4$ stochastic gridworld of $16$ grid cells. We point out that Gridworld has a DAG structure (due to the finite horizon) which violates the tree structure assumption under which the oracle and \sav\ bounds were derived. 
Nevertheless, both \sav\ and oracle reach lower MSE with increasing episodes compared to safe onpolicy in \Cref{fig:expt-grid-mse}. We use \eqref{eq:mdp-saver-dag-sample} to estimate $\wb$ in this setting. In \Cref{fig:expt-grid-mse-safe} we see that \sav\ allocates more samples to reduce the MSE, whereas the safe on-policy runs the baseline policy more instead of focusing on reducing the MSE.

\vspace*{-0.5em}
\section{Conclusions}
\vspace*{-0.3em}
\label{sec:conc}
In this paper, we studied the question of how to take action to build a dataset for minimal-variance policy evaluation of a fixed target policy under a safety constraint \eqref{eq:safety-constraint-MDP}.
We developed a theoretical foundation for data collection in policy evaluation by showing that there exists a class of MDPs (namely tree-structured MDPs $\T$) where safe policy evaluation is intractable. 
We then showed the necessary condition for $\T$ to be tractable such that the optimal behavior policy can collect data without violating safety constraints. 
We then proved the first lower bound for this setting under the tractability conditions that scales as $\widetilde{\Omega}(n^{-3/2})$, where $\widetilde{\Omega}$ hides log factors.
%
%
We then introduced a practical algorithm, \sav, that approximates the optimal behavior strategy by computing an upper confidence bound on the variance of the cumulative cost in place of the true cost variances in the optimal behavior strategy.
We bound the finite-sample regret (excess MSE) of \sav\ and show that it scales as $\widetilde{O}(n^{-3/2})$ matching the lower bound.
Hence, we answer both the questions raised in the introduction positively.
%
%
In the future, we would like to extend our derivation of optimal data collection strategies and regret analysis of \sav\ to linear/contextual bandits and more general MDPs.
%


\newpage
\textbf{Acknowledgement:} J.\ Hanna was supported in part by American Family Insurance through a research partnership
with the University of Wisconsin—Madison’s Data Science Institute.

\textbf{Impact Statement}
In this paper, we study the safe data collection for policy evaluation in an RL setting under safety constraints. Our paper proposes a new adaptive data collection policy and addresses the theoretical challenges posed by this setting.
We focus on algorithmic and theoretical contributions and we do not address the challenges that might stem from incorrect feedback, human bias in feedback, false information, or social disparity in gathering the feedback (or dataset).
We therefore leave it to the users who apply our algorithm to use it responsibly and ethically.


\bibliographystyle{plainnat}
\bibliography{biblio}

\newpage
\appendix
\onecolumn
\section{Appendix}
\label{sec:appendix}
\subsection{Related Works}
\label{app:related-app}
Our work lies at the intersection of two areas: $1)$ optimal data collection for policy evaluation, and $2)$ safe sequential decision-making. Optimal data collection for policy evaluation has been studied in reinforcement learning \citep{antos2008active,carpentier2012minimax, carpentier2011finite, carpentier2015adaptive, hanna2017data, mukherjee2022revar, riquelme2017active, fontaine2021online,mukherjee2024speed,zhong_robust_2022} 
without considering the safety constraints. In the bandit setting the optimal data collection has been studied in the context of estimating a weighted sum of the mean reward associated with each arm. 
\cite{antos2008active} study estimating the mean reward of each arm equally well and show that the optimal solution is to pull each arm proportional to the variance of its reward distribution.
Since the variances are unknown a priori, they introduce an algorithm that pulls arms in proportion to the empirical variance of each reward distribution. 
A similar set of works by \citet{carpentier2012minimax,carpentier2015adaptive} extend the above work by introducing a weighting on each arm that is equivalent to the target policy action probabilities in our work.
They show that the optimal solution is then to pull each arm proportional to the product of the standard deviation of the reward distribution and the arm weighting.
The work of \citet{riquelme2017active, fontaine2021online, mukherjee2024speed} considers the linear bandit setting to study the policy evaluation setup where actions have different variances. Finally, \citet{mukherjee2022revar} study the policy evaluation setting for tabular MDP.
However, these works only look into the policy evaluation setting without considering the safety constraint introduced in \eqref{eq:safety-constraint-MDP}.

The safe sequential decision-making setup has recently attracted much attention in machine learning \citep{amodei2016concrete, TurchettaB019} and reinforcement learning \citep{efroni2020exploration, wachi2020safe, camilleri2022active}. In reinforcement learning, and specifically in the bandit setting, safety has been studied in the context of policy improvement. In the bandit literature regret minimization under safety constraints has been studied in \citet{wu2016conservative, KazerouniGAR17, AmaniAT19, GarcelonGLP20}. In these works the safety requirements are encoded in the form of constraints on the cumulative rewards observed by the learner. These works refer to the setup as conservative bandits because exploration is limited by the constraints on the cumulative reward. The work of \citet{wu2016conservative} consider the setting of stochastic bandits for policy improvement with a safety constraint similar to \eqref{eq:safety-constraint-MDP}. However, \citet{KazerouniGAR17, AmaniAT19, GarcelonGLP20, moradipari2021safe, pacchiano2021stochastic, hutchinson2024directional} study the linear bandit setting under safety constraints where the actions have features associated with them. Note that none of the above works study policy evaluation under safety constraints. 
\citet{wan2022safe, zhu2021safe, zhu2022safe} analyzes off policy evaluation in the context of designing a non-adaptive policy using inverse probability weighting estimator (as opposed to designing an adaptive policy using certainty equivalence estimator in this work). 

In the MDP setting the works of \citet{efroni2020exploration, altman2021constrained, wachi2024safe, li2024survival, zheng2024safe, xiong2024provably, ding2024reduced, wang2024safety, mazumdar2024safe} study different variations of the safe exploration in constraint MDPs in both offline and online policy improvement settings. The work of \citet{yang2024risk} studies the safe policy improvement in constraint MDP setting under non-stationary policies. The work of \citet{gupta2024model} proposed a safe policy improvement approach for variable horizon setting such that the safe reinforcement learning agent uses a variable look-ahead horizon to avoid unsafe states. The constrained MDP problems have also been looked into from the lens of optimization where \citet{chen2021primal, chen2022near, qiu2020upper, ding2020natural, vaswani2022near, ding2021provably, liang2018accelerated, ying2024scalable} have proposed a primal-dual sampling-based algorithm to solve CMDPs for the policy improvement setting.

\subsection{Previous results and Probability Tools}
\label{app:prev-result}

\begin{customproposition}{1}\textbf{(Restatement from \citet{carpentier2011finite})}
\label{prop:bandit}
In an $A$-action bandit setting, the estimated return of $\pi$ after $n$ action-reward samples is denoted by $Y_n$. 
Note that the expectation of $Y_n$ after each action has been sampled once is given by $V^\pi$. 
Minimal MSE, $\E_{\D}\left[\left(Y_n - V^\pi\right)^{2}\right]$, is obtained by taking actions in the proportion:
\begin{align}
    \bb_*(a) \coloneqq \dfrac{\pi(a)\sigma(a)}{\sum_{a'=1}^A \pi(a')\sigma(a')}.\label{eq:bandit-optimal-prop}
\end{align}
where $\bb^*(a)$ denotes the optimal sampling proportion.
\end{customproposition}

\begin{lemma}\textbf{(Wald's lemma for variance)}
\label{lemma:wald-variance}\citep{resnick2019probability}
Let $\left\{\mathcal{F}_{t}\right\}$ be a filtration and $R_{t}$ be a $\mathcal{F}_{t}$-adapted sequence of i.i.d. random variables with variance $\sigma^{2}$. Assume that $\mathcal{F}_{t}$ and the $\sigma$-algebra generated by $\left\{R_{t'}: t' \geq t+1\right\}$ are independent and $T$ is a stopping time w.r.t. $\mathcal{F}_{t}$ with a finite expected value. If $\mathbb{E}\left[R_{1}^{2}\right]<\infty$ then
\begin{align*}
\mathbb{E}\left[\left(\sum_{t'=1}^{n} R_{t'}-n \mu\right)^{2}\right]=\mathbb{E}[n] \sigma^{2}
\end{align*}
\end{lemma}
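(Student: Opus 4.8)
The plan is to prove this via the martingale optional stopping theorem. First I would center the increments by setting $X_{t} = R_{t} - \mu$ with $\mu = \mathbb{E}[R_1]$, so that $\mathbb{E}[X_t]=0$ and $\mathbb{E}[X_t^2]=\sigma^2$, and define the partial sums $S_n = \sum_{t'=1}^n X_{t'}$. The claimed identity then reads $\mathbb{E}[S_T^2] = \mathbb{E}[T]\sigma^2$, where $T$ is the stopping time (the statement writes the upper limit and the expectation ambiguously as $n$, but the intended index is the stopping time $T$).

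Next I would exhibit the process $M_n := S_n^2 - n\sigma^2$ and verify that it is an $\{\mathcal{F}_n\}$-martingale. The key computation is $\mathbb{E}[M_{n+1}-M_n\mid\mathcal{F}_n] = \mathbb{E}[2 S_n X_{n+1} + X_{n+1}^2 - \sigma^2 \mid \mathcal{F}_n]$, which vanishes because $S_n$ is $\mathcal{F}_n$-measurable while $X_{n+1}$ is independent of $\mathcal{F}_n$ with mean $0$ and second moment $\sigma^2$. This is precisely where the hypothesis that $\mathcal{F}_t$ and the $\sigma$-algebra generated by the future increments are independent gets used. Since $M_0 = 0$, the martingale starts at mean zero.

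Then I would conclude by applying optional stopping to obtain $\mathbb{E}[M_T]=\mathbb{E}[M_0]=0$, i.e. $\mathbb{E}[S_T^2]=\mathbb{E}[T]\sigma^2$. Because $T$ is unbounded I cannot invoke optional stopping directly; instead I would stop at the bounded time $T\wedge n$, giving the exact identity $\mathbb{E}[S_{T\wedge n}^2]=\mathbb{E}[T\wedge n]\sigma^2$ for every $n$, and then let $n\to\infty$. The right-hand side converges to $\mathbb{E}[T]\sigma^2$ by monotone convergence, using $\mathbb{E}[T]<\infty$.

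The main obstacle, and the step deserving the most care, is the left-hand limit $\mathbb{E}[S_{T\wedge n}^2]\to\mathbb{E}[S_T^2]$. I would establish this by showing $(S_{T\wedge n})_n$ is bounded in $L^2$, since its second moments equal $\mathbb{E}[T\wedge n]\sigma^2 \le \mathbb{E}[T]\sigma^2$, and then upgrading to $L^2$-convergence of $S_{T\wedge n}$ to $S_T$ via an $L^2$-Cauchy estimate on the increments that relies on $\mathbb{E}[R_1^2]<\infty$. Equivalently, one can bypass optional stopping entirely and expand $S_T^2$ into diagonal and cross terms: the diagonal sum $\sum_{t'=1}^T X_{t'}^2$ has expectation $\mathbb{E}[T]\sigma^2$ by Wald's first identity, while each cross term $\mathbb{E}[X_s S_{s-1}\mathbf{1}\{T\ge s\}]$ vanishes because $\{T\ge s\}$ is $\mathcal{F}_{s-1}$-measurable and $X_s$ is mean-zero and independent of $\mathcal{F}_{s-1}$; in this route the only delicate point is interchanging expectation with the random-length sum, which is handled by Fubini--Tonelli under the stated moment assumptions.
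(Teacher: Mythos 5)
Your proposal is correct, but there is nothing in the paper to compare it against: the paper states this lemma as a known result (Wald's second identity) with a citation to \citet{resnick2019probability} and gives no proof of its own. Your argument is the standard textbook one — center the increments, check that $S_n^2 - n\sigma^2$ is a martingale using the assumed independence of $\mathcal{F}_n$ from the future increments, apply optional stopping at the bounded time $T \wedge n$, and pass to the limit via monotone convergence on the right and $L^2$-convergence of $S_{T\wedge n}$ to $S_T$ on the left (the Cauchy estimate $\mathbb{E}[(S_{T\wedge m}-S_{T\wedge n})^2] = \sigma^2(\mathbb{E}[T\wedge m]-\mathbb{E}[T\wedge n])$ from orthogonality of stopped-martingale increments, together with $\mathbb{E}[T]<\infty$, does exactly what you need). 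You are also right that the paper's statement contains a typo — the summation limit and the $\mathbb{E}[n]$ on the right should both be the stopping time $T$ — and your reading is the intended one.

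One caveat on your closing remark: the ``equivalent'' route that expands $S_T^2$ into diagonal and cross terms is not as easily justified by Fubini--Tonelli as you suggest. Absolute integrability of the cross terms requires controlling $\sum_{s}\mathbb{E}\bigl[|S_{s-1}|\,\mathbf{1}\{T\ge s\}\bigr]$, and the natural Cauchy--Schwarz bound gives terms of order $\sqrt{s\,\Pb(T\ge s)}$, which need not be summable under $\mathbb{E}[T]<\infty$ alone (take $\Pb(T\ge s)\sim s^{-2}$). So the truncation-plus-$L^2$-limit argument you gave as your primary route is not merely one option among two; it is the one that actually closes the proof under the stated hypotheses.
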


\begin{lemma}\textbf{(Restatement of Theorem 1 of \citet{mukherjee2022revar})}
\label{lemma:L-step-tree}
Assume the underlying MDP is an $L$-depth tree MDP as defined in \Cref{def:tree-mdp}. 
Let the estimated return of the starting state $s^1_1$ after $n$ state-action-reward samples be defined as $Y_{n}(s^1_1)$. 
Let $\D$ be the observed data over $n$ state-action-reward samples. To minimize MSE $\E_{\D}[(Y_n(s^1_1) - V^{\pi}(s^1_1))^2]$ the optimal sampling proportions for any arbitrary state is given by:
\begin{align*}
    \bb_*(a|s^{\ell}_i) \!&\propto
     \bigg(\! \pi^2(a|s_i^{\ell}) \bigg[\sigma^2(s^{\ell}_i, a)  + \sum\limits_{s^{\ell+1}_j}P(s^{\ell + 1}_j|s_i^{\ell}, a) M^2(s^{\ell+1}_j) \bigg]\bigg)^{1/2}, 
\end{align*}
where, $M(s^{\ell}_j)$ is the normalization factor defined as follows:
\begin{align*}
    M(s^{\ell}_{i})  \coloneqq \sum\limits_a\bigg(\pi^2(a|s^{\ell}_{i})\big(\sigma^2(s^{\ell}_{i}, a) + \sum\limits_{s^{\ell+1}_j}P(s^{\ell+1}_j|s^{\ell}_i, a) M^2(s^{\ell+1}_j)\big)\bigg)^{1/2}
\end{align*}
\end{lemma}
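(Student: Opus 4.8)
\textbf{Proof (Overview)} The plan is to prove the claim by backward induction on the tree levels $\ell = L, L-1, \ldots, 1$, showing simultaneously that (i) the minimal achievable variance of $Y_n(s^\ell_i)$ equals $M^2(s^\ell_i)/N_i$, where $N_i$ is the expected number of visits to $s^\ell_i$, and (ii) the minimizer is the stated proportion $\bb_*(\cdot|s^\ell_i)$. Since the certainty-equivalence estimator is unbiased once every relevant $(s,a)$ with $\pi(a|s)>0$ has been sampled, we have $\E_\D[(Y_n(s^1_1)-V^\pi(s^1_1))^2] = \Var(Y_n(s^1_1))$, so it suffices to minimize the variance of $Y_n(s^1_1)$ over the sampling design.

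First I would establish a recursive variance decomposition. Expanding the estimator $Y_n(s^\ell_i) = \sum_a \pi(a|s^\ell_i)\big(\wmu_n(s^\ell_i,a) + \sum_{s^{\ell+1}_j} P(s^{\ell+1}_j|s^\ell_i,a) Y_n(s^{\ell+1}_j)\big)$ and using that the transition model $P$ is known (so all randomness lives in the reward estimates), that distinct $(s,a)$ draw independent reward samples, and that in a tree each state has a unique parent state-action pair (so the subtree estimators $Y_n(s^{\ell+1}_j)$ depend on disjoint, independent reward samples), all cross terms vanish. This yields $\Var(Y_n(s^\ell_i)) = \sum_a \pi^2(a|s^\ell_i)\big[\Var(\wmu_n(s^\ell_i,a)) + \sum_{s^{\ell+1}_j} P^2(s^{\ell+1}_j|s^\ell_i,a)\Var(Y_n(s^{\ell+1}_j))\big]$.

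The key combinatorial step is the visit-count identity: because child $s^{\ell+1}_j$ is reachable only through its unique parent pair $(s^\ell_i,a)$, the expected number of visits obeys $N_j = T_n(s^\ell_i,a)\,P(s^{\ell+1}_j|s^\ell_i,a)$, while $\Var(\wmu_n(s^\ell_i,a)) = \sigma^2(s^\ell_i,a)/T_n(s^\ell_i,a)$. Substituting the inductive hypothesis $\Var(Y_n(s^{\ell+1}_j)) = M^2(s^{\ell+1}_j)/N_j$ together with this identity collapses the downstream contribution and gives $\Var(Y_n(s^\ell_i)) = \sum_a \tfrac{\pi^2(a|s^\ell_i)}{T_n(s^\ell_i,a)}\big[\sigma^2(s^\ell_i,a) + \sum_{s^{\ell+1}_j} P(s^{\ell+1}_j|s^\ell_i,a) M^2(s^{\ell+1}_j)\big]$. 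Writing $T_n(s^\ell_i,a) = N_i\,\bb(a|s^\ell_i)$ and minimizing over the simplex $\sum_a \bb(a|s^\ell_i)=1$, Cauchy--Schwarz (equivalently Lagrange multipliers) shows that $\sum_a c_a/\bb(a)$ is minimized by $\bb(a)\propto\sqrt{c_a}$ with optimal value $(\sum_a\sqrt{c_a})^2$; taking $c_a = \pi^2(a|s^\ell_i)[\sigma^2(s^\ell_i,a)+\sum_j P\,M^2]$ reproduces both the claimed proportion and the recursive definition of $M(s^\ell_i)$, closing the induction. The base case $\ell=L$ has an empty child sum, so $\bb_*(a|s^L_i)\propto\pi(a|s^L_i)\sigma(s^L_i,a)$ and $M(s^L_i)=\sum_a\pi(a|s^L_i)\sigma(s^L_i,a)$, recovering the bandit result of \Cref{prop:bandit}.

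The main obstacle I anticipate is that the visit counts $N_j$ and $T_n(s,a)$ are themselves random, since the number of times a child is reached depends on stochastic transitions; the clean identity $N_j = T_n(s^\ell_i,a)P(s^{\ell+1}_j|s^\ell_i,a)$ holds only in expectation. Making the argument rigorous requires either conditioning on the realized visit counts and applying the law of total variance, or working in the regime where the counts concentrate around their expectations and controlling the resulting lower-order error terms. Justifying the interchange between the design proportions $\bb(a|s)$ and the realized counts $T_n(s,a)$, and verifying that the Cauchy--Schwarz minimizer is attainable by an admissible sampling rule, is the delicate part of the argument. \hfill$\blacksquare$
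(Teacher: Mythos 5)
Your proof is correct and takes essentially the same route as the source: this paper does not prove the lemma itself but imports it verbatim as Theorem~1 of \citet{mukherjee2022revar}, whose derivation proceeds exactly as you propose --- backward induction from the leaves, a variance recursion in which the unique-parent property of the tree kills all cross terms and the visit identity collapses $P^2(s^{\ell+1}_j|s^\ell_i,a)\Var(Y_n(s^{\ell+1}_j))$ into $P(s^{\ell+1}_j|s^\ell_i,a)\,M^2(s^{\ell+1}_j)/T_n(s^\ell_i,a)$, followed by the per-state Cauchy--Schwarz/Lagrange step showing $\sum_a c_a/\bb(a)$ is minimized at $\bb(a)\propto\sqrt{c_a}$ with value $\big(\sum_a\sqrt{c_a}\big)^2$, which is precisely what makes $M(s^\ell_i)$ the normalization factor. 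Your closing caveat is also the right one: the result is stated for the idealized expected-count allocation (which is how the oracle proportions in \eqref{eq:mdp-b-def} are used throughout this paper), and the gap between realized and expected counts is handled separately, via concentration, in the finite-sample analysis rather than in this lemma.
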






\section{Intractable MDP}
\label{app:intractable-mdp}
\begin{customproposition}{1}
    Fix an arbitrary $n > 0$. Then 
    there exists an environment where no algorithm (including the safe oracle $\bb^k_*$) can be run that will result in a regret $\cR_n = \L_n(\pi, \bb) - \L^*_n(\pi, \bb_*)$ of $\tO(n^{-3/2})$ while satisfying the safety constraint, where $\bb_*$ is the unconstrained oracle.
\end{customproposition}


\begin{proof}
We first consider a bandit setting where there are $3$ arms, action $\{0\}$ which is the safe action, and actions $1$ and $2$. Assume $\pi(a) = 1/A$ so that we can ignore its effect on optmal sampling policy $\bb_*$

\textbf{Case 1 (All actions safe):} First consider an environment when all actions are safe. That is $\mu^c(0) = 0$ and $\mu^c(1) = 1$ and $\mu^c(2) = 1-\epsilon$ and reward distributions are bounded between $[0,1]$. Therefore at round $\ell\in [L]$ we can guarantee for any $\alpha\in(0,1]$ that
\begin{align*}
    \sum_{\ell'=1}^\ell\sum_{a=0}^2\pi(a)\wmu_{c,\ell'}(a) \geq (1-\alpha)\ell \underbrace{\pi_0(0)\mu^c(0)}_{0}, \quad \forall \ell\in [L]
\end{align*}
where, $\pi_0$ always samples safe action $0$. Assume a safe oracle that knows the variances of the actions but does not know the means of the actions (both reward and cost means). Therefore from \citet{carpentier2011finite} we know that the optimal way to reduce the MSE $\min_\textbf{b} \E_{\D}[\left(Y^{\pi}_n(s_1) - V^{\pi}(s_1)\right)^{2}]$ is to run the policy $\bb_*(a) \propto \pi(a)\sigma(a)$. 
We also know from \citet{carpentier2011finite} that there exists an algorithm $\A^{safe}$ (like MC-UCB that tracks $\bb_*$) that achieves a regret after $n$ rounds as $\cR^{\mathrm{safe}}_n = \widetilde{O}(\frac{K\log(n)}{n^{3/2}})$ where $\widetilde{O}$ hides logarithmic factors and problem dependent factors like $\bb_{\min}$. 

\textbf{Case 2 (Some actions are unsafe):} In this case, we now analyze a safe oracle algorithm $\bb^k_*$.
%
Consider an environment where $\mu^c(0) = 0.5$, $\mu^c(1) = 0.5 + \alpha$, and $\mu^c(2) = 0$. Let the rewards be bounded in $[0,1]$ again. So action $\{2\}$ is unsafe.
Therefore safe oracle policy which first runs action $1$ for $C_1 n$ number of times for some $C_1 > 0$. Then it runs the safe action $0$ for $C_0 n$ number of times (for some $C_0>0$) such that it has enough safety budget and then it runs action $2$ for $n(1- (C_0 + C_1))$ number of times. 
Let the variance of $\sigma^{r,(2)}(0) = 0.001$, $\sigma^{r,(2)}(1) = 0.001$ and $\sigma^{r,(2)}(2) = 0.25$. 

The cost cumulative value over rounds for the algorithm for $\alpha= \frac{1}{4}$ is given by
\begin{align*}
    V^c_{\A} &= (C_1 n) (0.5 + \alpha) + n(1 - C_0 -C_1)0 +  (C_0 n) 0.5 = (C_1 n)\cdot\frac{3}{4} +  (C_0 n) \frac{2}{4} = \frac{n}{4}\left(3C_1 + 2C_0\right) .
\end{align*}
 Then to satisfy the safety budget we have to show that
\begin{align*}
    & V^c_{\A} \geq n(1-\alpha)0.5 \\
    \overset{(a)}{\implies} & \frac{n}{4}\left(3C_1 + 2C_0\right)  \geq \frac{3n}{8}\\
    \implies & 3C_1 + 2C_0 \geq \frac{3}{2}
\end{align*}
Say we just want to satisfy the safety constraint, then setting $C_1 = \frac{1}{4}$ and $C_0 = \frac{3}{8}$ in the above equation we can achieve that. Therefore we have that $T_n(1) = \frac{n}{4}$ and $T_n(0) = \frac{3n}{8}$. This implies that $T_n(2) = n - \frac{n}{4} - \frac{3n}{8} = \frac{3n}{8}$. Therefore we get that the loss of $\bb^k_*$ is given by 
\begin{align*}
    \L_n(\pi, \bb^k_*) =  \sum_{a, T_n(a)>0}\dfrac{\sigma^{r, (2)}(a)}{T_n(a)} = \frac{8(0.001)^2}{3n} + \frac{4(0.001)^2}{n} + \frac{8(0.25)^2}{3n} 
\end{align*}
Now we calculate the loss of the optimal data collection algorithm following the unconstrained $\bb_*$. Note that now $T^*_n(0) = \frac{0.001}{0.001+0.001+0.25}n = \frac{n}{252}$, $T^*_n(1) = \frac{n}{252}$ and  $T^*_n(2) = \frac{250n}{252}$. Then the loss of the optimal data collection algorithm following $\bb_*$ is given by
\begin{align*}
    \L^*_n(\pi, \bb_*) = \sum_{a, T^*_n(a)>0}\dfrac{\sigma^{r, (2)}(a)}{T^*_n(a)} = \frac{252(0.001)^2}{n} + \frac{252(0.001)^2}{n} + \frac{252(0.25)^2}{250n} \approx \dfrac{2}{4000n} + \dfrac{15}{n}.
\end{align*}
It follows then that the regret scales as
\begin{align*}
    \cR_n = \L_n(\pi, \bb^k_*) - \L^*_n(\pi, \bb_*) =   \sum_{a, T_n(a)>0}\dfrac{\sigma^{r, (2)}(a)}{T_n(a)} - \sum_{a, T^*_n(a)>0}\dfrac{\sigma^{r, (2)}(a)}{T^*_n(a)} = O\left(\dfrac{K}{n}\right) \geq \cR^{\mathrm{safe}}_n = \widetilde{O}(\frac{K\log(n)}{n^{3/2}}).
\end{align*}
Note that this regret rate holds for any $C_1 < C_0$ and we cannot shift any more proportion to action $\{2\}$. Therefore the algorithm will choose the sub-optimal safe action $\{0\}$ more than the action that reduces the MSE (to satisfy safety constraint) most resulting in a regret that scales as $n^{-1}$. So any algorithm (including the safe oracle algorithm) will not be able to achieve the desired regret rate of $\widetilde{O}(n^{-3/2})$.
%
%
%
The claim of the proposition follows.
\end{proof}

\begin{remark}\textbf{(Tractability condition)}
\label{rem:safe-state}
%
%
%
Let $\bb$ be any behavior policy that minimizes MSE. 
However, running $\bb$ only once is not enough to guarantee a regret of $\tO(n^{-3/2})$. Let $\bb$ be run for $K_b$ episodes to guarantee a regret of $\tO(n^{-3/2})$.
Note that $K_b$ is the number of rounds in the bandit setting.
Observe that 
the number of rounds (or episodes in case of MDP) $K_b$ is behavior policy specific. 



\textbf{Case 1 (Two action bandits):} Consider two action bandit setting such that $A=2$. Further, let $\pi(a) = 1/A$ and the left action has a constraint-value of $C_1$ while the right action has a constraint-value of $C_2$. Let the deterministic baseline policy $\pi_0$ always choose the left action, while the behavior policy $\bb$ chooses the right action. Note that $\bb$ may or may not be $\bb_*$. Then to satisfy the safety constraint \eqref{eq:safety-constraint-MDP} we need that
\begin{align*}
    (n-K_b)C_1 + K_bC_2 \geq (1-\alpha)n C_1 &\implies nC_1 - K_bC_1 + K_bC_2 \geq nC_1 - \alpha nC_1\\
    &\implies K_b(C_1 - C_2) \leq nC_1\alpha\\
    &\implies 1-\frac{C_2}{C_1} \leq \frac{n\alpha}{K_b}\\
    &\implies \frac{K_b}{\alpha}(1 - \frac{C_2}{ C_1}) \leq n\\
     &\implies n \geq \frac{K_b}{\alpha}\left( 1- \frac{C_2}{ C_1}\right)
\end{align*}
The above inequality shows two things, (1) the lower bound to the budget $n$ to run the behavior policy $\bb$ for $K_b$ rounds and satisfy the safety constraint; (2) The condition $C_1 > C_2$ has to be satisfied so that the RHS is positive. 

\textbf{Case 2 (General multi-armed bandits):} Now generalizing this to $A \geq 2$ 
we can show that the above condition can be modified into
\begin{align*}
    & (n-K_b)\mu^c(0) + K_b\min_{a\in\A\setminus\{0\}}\mu^c(a) \geq (1-\alpha)n\mu^c(0)\\
    \implies & n\mu^c(0) - K_b\mu^c(0) + K_b\min_{a\in\A\setminus\{0\}}\mu^c(a) \geq n\mu^c(0) -\alpha n\mu^c(0)\\
    \implies & K_b(\mu^c(0) - \min_{a\in\A\setminus\{0\}}\mu^c(a) ) \leq \alpha n\mu^c(0)\\
    \implies & 1 - \frac{\min_{a\in\A\setminus\{0\}}\mu^c(a)}{\mu^c(0)}  \leq \frac{\alpha n}{K_b}\\
    \implies &  \frac{K_b}{\alpha}\left(1 - \frac{\min_{a\in\A\setminus\{0\}}\mu^c(a)}{\mu^c(0)}\right)  \leq n\\
    \implies &  n \geq \frac{K_b}{\alpha}\left(1-\frac{\min_{a\in\A\setminus\{0\}}\mu^c(a)}{\mu^c(0)}\right)  
\end{align*}
The above inequality shows two things, (1) the lower bound to the budget $n$ to run the behavior policy $\bb$ for $K_b$ rounds and satisfy the safety constraint for a general $K_b$ armed bandit; (2) The condition $\min_{a\in\A\setminus\{0\}}\mu^c(a) < \mu^c(0)$ has to be satisfied  so that the RHS is positive. 



\textbf{Case 3 (Tabular MDP):} Define $V^{\bb^{-}}_{c}(s_1)$ as the value of the policy $\bb^{-}$ starting from state $s_1$. 
%
So this policy $\bb^{-}$ can be thought of as the worst possible policy that can be followed by the agent during an episode.
Let this policy be run for $K_{b^{-}}$ episodes. 
Also, recall that $V^{\pi_0}_{c}(s_1)$ is the value of the baseline policy $\pi_0$ starting from state $s_1$.
It can easily shown following a similar line of argument as case 2 that we need a budget of
\begin{align*}
    n \geq \frac{K_{b^{-}}}{\alpha}\left(1-\frac{V^{\bb^{-}}_{c}(s_1)}{V^{\pi_0}_{c}(s_1)} \right).
\end{align*}
Again the above inequality shows two things for a general Tree MDP: (1) the lower bound to the budget $n$ to run the behavior policy $\bb^{-}$ for $K_{b^{-}}$ episodes and satisfy the safety constraint for a Tree MDP; (2) $V^{\bb^{-}}_{c}(s_1) < V^{\pi_0}_{c}(s_1)$ so that the RHS is positive. 

Now observe that in the first two cases of the bandit setting the $V^{\bb^{-}}_{c}(s_1)$ yields $\min_{a\in\A\setminus\{0\}}\mu^c(a)$. Therefore combining all three cases we can state the budget $n \geq \frac{K_{b^{-}}}{\alpha}\left(1-\frac{V^{\bb^{-}}_{c}(s_1)}{V^{\pi_0}_{c}(s_1)} \right)$.
Now from \citep{carpentier2012minimax, mukherjee2022revar} we know that $K_{b^{-}} \geq C_{\sigma}(n-\sqrt{n})$ where $C_{\sigma} \in (0,1]$  is an MDP dependent parameter that depends on the reward variance of state-action pairs to achieve a regret bound of $\tO(n^{-3/2})$. 
We define the quantity $C_\sigma = \max_{s,a}\frac{\bb_*(a|s)}{M(s)}$ where $\bb_*(a|s)$ and $M(s)$ are defined in \eqref{eq:mdp-b-def} and \eqref{eq:M-def} respectively. Observe that $C_\sigma \in (0,1)$.
%
Then we have that
\begin{align*}
    & n \geq \frac{K_{b^{-}}}{\alpha}\left(1-\frac{V^{\bb^{-}}_{c}(s_1)}{V^{\pi_0}_{c}(s_1)} \right)
    \implies n \geq \frac{C_{\sigma}(n-\sqrt{n})}{\alpha}\left(1-\frac{V^{\bb^{-}}_{c}(s_1)}{V^{\pi_0}_{c}(s_1)} \right) \\
    \implies & n \geq \frac{C_{\sigma}n}{\alpha}\left(1-\frac{V^{\bb^{-}}_{c}(s_1)}{V^{\pi_0}_{c}(s_1)} \right)-\frac{\sqrt{n}}{\alpha}\left(1-\frac{V^{\bb^{-}}_{c}(s_1)}{V^{\pi_0}_{c}(s_1)} \right)\\
    \implies & n\left(1 -\frac{C_{\sigma}}{\alpha}\left(1-\frac{V^{\bb^{-}}_{c}(s_1)}{V^{\pi_0}_{c}(s_1)} \right)\right) + \frac{\sqrt{n}}{\alpha}\left(1-\frac{V^{\bb^{-}}_{c}(s_1)}{V^{\pi_0}_{c}(s_1)} \right) \geq 0 \\
    \implies & \sqrt{n}\left(\sqrt{n} -\frac{C_{\sigma}\sqrt{n}}{\alpha}\left(1-\frac{V^{\bb^{-}}_{c}(s_1)}{V^{\pi_0}_{c}(s_1)} \right) + \frac{1}{\alpha}\left(1-\frac{V^{\bb^{-}}_{c}(s_1)}{V^{\pi_0}_{c}(s_1)} \right)\right) \geq 0.
\end{align*}
This implies that
\begin{align*}
    &\sqrt{n} -\frac{C_{\sigma}\sqrt{n}}{\alpha}\left(1-\frac{V^{\bb^{-}}_{c}(s_1)}{V^{\pi_0}_{c}(s_1)} \right) + \frac{1}{\alpha}\left(1-\frac{V^{\bb^{-}}_{c}(s_1)}{V^{\pi_0}_{c}(s_1)} \right) \geq 0 \\
    \implies &\sqrt{n}\left(1 -\frac{C_{\sigma}}{\alpha}\left(1-\frac{V^{\bb^{-}}_{c}(s_1)}{V^{\pi_0}_{c}(s_1)} \right)\right) \geq -\frac{1}{\alpha}\left(1-\frac{V^{\bb^{-}}_{c}(s_1)}{V^{\pi_0}_{c}(s_1)} \right) \\
    \implies &\sqrt{n} \geq \dfrac{ -\frac{1}{\alpha}\left(1-\frac{V^{\bb^{-}}_{c}(s_1)}{V^{\pi_0}_{c}(s_1)} \right) }{\left(1 -\frac{C_{\sigma}}{\alpha}\left(1-\frac{V^{\bb^{-}}_{c}(s_1)}{V^{\pi_0}_{c}(s_1)} \right)\right)}\\
    \implies &\sqrt{n} \geq \dfrac{ \frac{1}{\alpha}\left(1-\frac{V^{\bb^{-}}_{c}(s_1)}{V^{\pi_0}_{c}(s_1)} \right) }{\frac{C_{\sigma}}{\alpha}\left(1-\frac{V^{\bb^{-}}_{c}(s_1)}{V^{\pi_0}_{c}(s_1)} \right) - 1}.
\end{align*}
This yields the tractability condition.
\end{remark}

\section{Tractable MDP and Lower Bounds}
\label{app:tract}
\textbf{Some Definitions for proving Lower Bound:} 
%
%
%
%
These definitions follow similar definitions in \citet{wagenmaker2022beyond}.
Define the $Q$-function that satisfies the Bellman equation as
$$
Q_\ell^\pi(s, a)=R_\ell(s, a)+\sum_{s^{\prime}} P_\ell\left(s^{\prime} \mid s, a\right) V_{\ell+1}^\pi\left(s^{\prime}\right)
$$
and $Q_{L+1}^\pi(s, a)=0$. 
Define the optimal $Q$-function as $Q_\ell^{\pi_*}(s, a) \coloneqq \sup _\pi Q_\ell^\pi(s, a), V_\ell^{\pi_*}(s)\coloneqq\sup _\pi V_\ell^\pi(s)$, and let $\pi^{\star}$ denote an optimal policy.
A policy $\widehat{\pi}$ is called $\epsilon$-optimal which satisfies the following
$$
V^{\pi_*}(s_1)-V^{\widehat{\pi}}(s_1) \leq \epsilon
$$
with probability greater than $1-\delta$ using as few episodes as possible. 
We further define a few more notations for proving the lower bound.
Define the suboptimality gap as
$$
\Delta_\ell(s, a)\coloneqq V_\ell^{\pi_*}(s)-Q_\ell^{\pi_*}(s, a) .
$$
such that $\Delta_\ell(s, a)$ denotes the suboptimality of taking action $a$ in $(s, h)$, and then playing the optimal policy henceforth. 
Define the state-action visitation distribution as:
$$
w_\ell^\pi(s, a)\coloneqq\Pb_\pi\left[s_\ell=s, a_\ell=a\right], \quad w_\ell^\pi(s)\coloneqq\Pb_\pi\left[s_\ell=s\right].
$$
Note that $w_\ell^\pi(s, a)=\pi_\ell(a | s) w_\ell^\pi(s)$. We denote the maximum reachability of $(s, \ell)$ by 
\begin{align*}
    W_\ell(s)\coloneqq\sup _\pi w_\ell^\pi(s).
\end{align*}
This is the maximum probability with which we could hope to reach $(s, \ell)$.
Define the best-policy gap-visitation complexity as $\mathcal{C}^{\star}(\T)$. Finally, recall that tree MDP is a subset of general MDPs which let us restate the following lemmas on lower bound for unconstrained tree MDPs from \citet{wagenmaker2022beyond}.

\begin{lemma}\textbf{(Divergence Lemma, Restatement of Lemma 4.1 from \citet{wagenmaker2022beyond})}
\label{lemma:divergence}
Consider tree MDPs $\T$ and $\T^{\prime}$ with the same state space $\mathcal{S}$, actions space $\mathcal{A}$, horizon $L$, and initial state distribution $P_0$. Fix some $(s, \ell) \in \mathcal{S} \times[L]$, and for any $a \in \mathcal{A}$ let $\nu_\ell(s, a)$ denote the law of the joint distribution of $\left(s^{\prime}, R\right)$ where $s^{\prime} \sim P_{\T}(\cdot \mid s, a)$ and $R \sim R_{\T}(s, a)$. 
Define the law $\nu_\ell^{\prime}(s, a)$ analogously with respect to $\T^{\prime}$. 
Fix some policy $\pi$ and let $\mathbb{P}_\T=\mathbb{P}_{\nu \pi}$ and $\mathbb{P}_{\T^{\prime}}=\mathbb{P}_{\nu^{\prime} \pi}$ be the probability measures on $\T$ and $\T'$ induced by the $\tau$-episode interconnection of $\pi$ and $\nu$ (respectively by $\pi'$ and $\nu'$).
For any almost-sure stopping time $\tau$ with respect to filtration $\left(\mathcal{F}_\tau\right)$,
\begin{align*}
\sum_{s, a, h} \mathbb{E}_{\T}\left[N_\ell^\tau(s, a)\right] \operatorname{KL}\left(\nu_\ell(s, a), \nu_\ell^{\prime}(s, a)\right) \geq \sup _{\xi \in \mathcal{F}_\tau} d\left(\Pb_{\T}(\xi), \Pb_{\T^{\prime}}(\xi)\right)
\end{align*}
where $d(x, y)=x \log \frac{x}{y}+(1-x) \log \frac{1-x}{1-y}$ and $N_\ell^\tau(s, a)$ denotes the number of visits to $(s, a, \ell)$ in the $\tau$ episodes.
\end{lemma}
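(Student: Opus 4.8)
The plan is to establish this as a standard divergence-decomposition (change-of-measure) identity, following the template of Garivier et al.\ for bandits and extending it to the episodic MDP setting. The backbone is the chain rule for relative entropy applied to the full trajectory measures, together with a data-processing inequality at the very end.

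First I would write the likelihood of an entire $\tau$-episode history $\omega$ under $\Pb_\T$ as a product, over episodes and timesteps, of three kinds of factors: the initial-state term $P_0(s_1)$, the policy terms $\pi_\ell(a_\ell \mid s_\ell)$, and the kernel terms $\nu_\ell(s_\ell,a_\ell)$ evaluated at the observed pair $(s_{\ell+1}, R_\ell)$. The identical factorization holds under $\Pb_{\T'}$ with $\nu'_\ell$ replacing $\nu_\ell$. Because $\T$ and $\T'$ share the same state/action spaces, horizon, and initial distribution, and the \emph{same} policy $\pi$ is run on both, in the log-likelihood ratio $\log \frac{d\Pb_\T}{d\Pb_{\T'}}(\omega)$ all of the $P_0$ and $\pi$ factors cancel, leaving only a sum of $\log \frac{d\nu_\ell(s_\ell,a_\ell)}{d\nu'_\ell(s_\ell,a_\ell)}$ over every timestep visited across the $\tau$ episodes.

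Next I would take $\E_\T[\,\cdot\,]$ of this log-likelihood ratio, which by definition equals $\operatorname{KL}(\Pb_\T, \Pb_{\T'})$. Grouping timesteps by which triple $(s,a,\ell)$ they correspond to rewrites the sum in terms of the visitation counts $N_\ell^\tau(s,a)$. The crucial step is to pass the expectation inside via a Wald-type/optional-stopping argument: conditioned on a given occurrence of a visit to $(s,a,\ell)$, the expected per-visit contribution is exactly $\operatorname{KL}(\nu_\ell(s,a), \nu'_\ell(s,a))$, and since $\tau$ is a stopping time with respect to the filtration the event of having reached that visit is measurable while the subsequent kernel draw is independent of it. This yields the identity $\operatorname{KL}(\Pb_\T, \Pb_{\T'}) = \sum_{s,a,\ell} \E_\T[N_\ell^\tau(s,a)]\,\operatorname{KL}(\nu_\ell(s,a), \nu'_\ell(s,a))$. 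Finally, for any $\xi \in \mathcal{F}_\tau$, the data-processing inequality applied to the binary statistic $\mathbf{1}\{\omega \in \xi\}$ gives $\operatorname{KL}(\Pb_\T, \Pb_{\T'}) \geq d(\Pb_\T(\xi), \Pb_{\T'}(\xi))$, where $d$ is the binary relative entropy in the statement; taking the supremum over $\xi \in \mathcal{F}_\tau$ closes the argument.

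The hard part will be the rigorous handling of the random stopping time $\tau$ in the expectation-exchange step: one must justify interchanging the (random, $\tau$-dependent) sum over visited timesteps with the expectation and applying the tower property so that each visit to $(s,a,\ell)$ contributes exactly one copy of $\operatorname{KL}(\nu_\ell(s,a), \nu'_\ell(s,a))$ in expectation. The tree structure of $\T$ simplifies this, since each $(s,\ell)$ is reachable through a unique history prefix, so the measurability of the visitation indicators with respect to the filtration is transparent and there is no ambiguity about which kernel is in force at a given node.
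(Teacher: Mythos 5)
Your proposal is correct: the paper states this lemma without proof, importing it verbatim from \citet{wagenmaker2022beyond}, and your argument --- trajectory-likelihood factorization with cancellation of the shared policy and initial-state terms, a Wald/optional-stopping step yielding $\operatorname{KL}\left(\Pb_{\T},\Pb_{\T'}\right)=\sum_{s,a,\ell}\E_{\T}\left[N_\ell^\tau(s,a)\right]\operatorname{KL}\left(\nu_\ell(s,a),\nu'_\ell(s,a)\right)$, followed by the data-processing inequality applied to $\mathbf{1}\{\xi\}$ for $\xi\in\mathcal{F}_\tau$ --- is exactly the standard proof given in that cited source (tracing back to the divergence-decomposition lemma of Kaufmann--Capp\'e--Garivier and \citet{garivier2016optimal}). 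The stopping-time interchange you flag as the delicate step is indeed where the work lies, and the tower-property argument you sketch is precisely how it is resolved there.
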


\begin{lemma} \textbf{(Proposition 12 from \citet{wagenmaker2022beyond}}
\label{lemma:wagenmaker-prop-12}
Fix some tree MDP $\T$. Then:
\begin{enumerate}
    \item The set of valid state-action visitation distributions on $\T$ is convex.
    \item For any valid state-action visitation distribution on $\T$, there exists some policy that realizes it.
\end{enumerate}
\end{lemma}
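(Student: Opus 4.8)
The plan is to identify the set of valid state-action visitation distributions with the polytope cut out by the Bellman flow constraints, from which convexity is immediate, and then to establish realizability by an explicit normalization construction that certifies this identification.

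First I would write down the flow constraints that any occupancy measure must satisfy. For a collection $\{w_\ell(s,a)\}_{s,a,\ell}$ to arise as $w^\pi_\ell(s,a)$ for some policy $\pi$, it is necessary that $w_\ell(s,a) \geq 0$, that the level-$1$ marginal match the initial distribution, $\sum_a w_1(s,a) = P_0(s)$, and that flow be conserved across levels, $\sum_a w_{\ell+1}(s',a) = \sum_{s,a} P_\ell(s'|s,a)\, w_\ell(s,a)$ for every $s'$ and every $\ell$. These are all linear (in)equalities in $w$, so the feasible set $\mathcal{W}$ is a polytope and therefore convex. Verifying that every realized occupancy $w^\pi$ lies in $\mathcal{W}$ is a routine unrolling of the definition $w^\pi_\ell(s,a) = \pi_\ell(a|s)\, w^\pi_\ell(s)$ together with the transition dynamics, which gives the easy inclusion $\{w^\pi\} \subseteq \mathcal{W}$.

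The substantive content is the reverse inclusion, which is exactly the second claim. Given any $w \in \mathcal{W}$, I would construct a policy by normalizing the conditional, setting $\pi_\ell(a|s) = w_\ell(s,a)\big/\sum_{a'} w_\ell(s,a')$ whenever the denominator $w_\ell(s) \coloneqq \sum_{a'} w_\ell(s,a')$ is positive, and defining $\pi_\ell(\cdot|s)$ arbitrarily otherwise. I would then prove by induction on the level $\ell$ that the occupancy measure induced by this $\pi$ equals $w$, i.e.\ $w^\pi_\ell(s) = w_\ell(s)$ for all $s$. The base case is the initial-distribution constraint, and the inductive step propagates the equality one level forward by combining flow conservation with the definition of $\pi_\ell$, yielding $w^\pi_{\ell+1}(s') = \sum_{s,a} P_\ell(s'|s,a)\, \pi_\ell(a|s)\, w^\pi_\ell(s) = \sum_{s,a} P_\ell(s'|s,a)\, w_\ell(s,a) = w_{\ell+1}(s')$. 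The tree structure of $\T$, in which each non-initial state has a unique state-action predecessor, makes this propagation especially transparent since the path to each state is unique and no flow-merging occurs.

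Combining the two inclusions gives $\{w^\pi\} = \mathcal{W}$, so the set of valid visitation distributions is the convex polytope $\mathcal{W}$, which establishes the first claim, while the normalization construction establishes the second. The one point requiring care, and the main obstacle, is the handling of states with $w_\ell(s) = 0$: these are unreachable under any policy consistent with $w$, so the arbitrary choice of $\pi_\ell(\cdot|s)$ there must not affect the induced occupancy. I would confirm in the inductive step that $w^\pi_\ell(s) = 0$ forces the corresponding terms in the flow equation to vanish, so the equality $w^\pi_{\ell+1} = w_{\ell+1}$ is maintained regardless of the arbitrary choice.
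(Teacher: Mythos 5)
Your proof is correct, but note that the paper itself offers no proof of this lemma: it is imported verbatim as Proposition 12 of \citet{wagenmaker2022beyond}, so the only comparison available is with that source. Your argument --- identifying the valid visitation distributions with the Bellman flow polytope (giving convexity for free) and realizing any point of it via the normalized conditional $\pi_\ell(a|s)=w_\ell(s,a)/\sum_{a'}w_\ell(s,a')$ with a level-by-level induction that correctly dispenses with zero-mass states --- is the standard occupancy-measure argument and is essentially the proof given there; as you implicitly observe, the tree structure is not actually needed, since the same induction works for any layered episodic MDP.
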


\begin{lemma}\textbf{(Restatement of Lemma F.3 from \citet{wagenmaker2022beyond})} 
\label{lemma:wagenmaker-lemma-F3}
In the tree MDP $\T$, fix some $\bar{\ell}\in [L]$. Then
\begin{align*}
\mathcal{C}^{\star}(\T) \leq \inf _\bb \max _{s, a} \frac{1}{w_{\bar{\ell}}^\bb(s, a) \Delta_\ell(s, a)^2}+\max _{s, \ell} \frac{S A L}{W_\ell(s)} .
\end{align*}
is the complexity of the Tree MDP $\T$.
\end{lemma}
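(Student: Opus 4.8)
The plan is to upper bound $\mathcal{C}^{\star}(\T)$ by exhibiting a single feasible exploration policy whose induced visitation distribution makes the information-theoretic program defining $\mathcal{C}^{\star}(\T)$ split additively into the two advertised terms. Recall that $\mathcal{C}^{\star}(\T)$ is the value of the $\inf_{\bb}\max$ program that emerges from the change-of-measure lower bound: the exploration policy $\bb$ (the minimizing variable) must accrue enough visits at each $(s,a)$ to rule out every competing near-optimal policy, where the effort needed at a gap state-action scales like $1/\big(w_\ell^{\bb}(s,a)\,\Delta_\ell(s,a)^2\big)$, while state-actions that are merely hard to reach contribute only through their maximal reachability $W_\ell(s)$. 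First I would isolate these two contributions and, since $\mathcal{C}^{\star}(\T)$ is an infimum, reduce the task to constructing one good $\bb$ and evaluating the objective on it.

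Second, I would take $\bb$ to be a convex combination of two designs. The design $\bb_1$ is the minimizer of $\max_{s,a}1/\big(w_{\bar\ell}^{\bb}(s,a)\Delta_\ell(s,a)^2\big)$ concentrated at the fixed level $\bar\ell$, which directly reproduces the first term. The design $\bb_2$ is a reach-maximizing policy that, for every $(s,\ell)$, places visitation at rate proportional to $W_\ell(s)$ while spreading the residual mass uniformly over the $A$ actions; a union bound over the $S$ states, $A$ actions, and $L$ levels then produces the $SAL/W_\ell(s)$ factor. The key enabling fact is \Cref{lemma:wagenmaker-prop-12}: the set of valid visitation distributions on $\T$ is convex and each is realized by some policy, so the mixture of $\bb_1$ and $\bb_2$ is itself realizable by a single behavior policy, and mixing only rescales each visitation by a constant, which I would absorb into the inequality.

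Third, I would evaluate the objective under this $\bb$, and here the tree structure of \Cref{def:tree-mdp} is essential. Because each non-initial state is reached through a unique predecessor state-action pair, the visitation distribution factorizes along the unique path to each state, so the visits allocated to distinguish actions on one branch do not interfere with the reachability of states on another branch. This decoupling is what permits the gap contribution (captured by $\bb_1$ at level $\bar\ell$) and the reachability contribution (captured by $\bb_2$) to be bounded separately and summed, rather than having to trade off against one another inside a single maximum.

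The main obstacle I anticipate is precisely this decoupling. In a general MDP the policy that maximizes visitation to a hard-to-reach state need not place any mass on the branch where the gaps are small, so the gap term and the reachability term are genuinely coupled and resist additive separation; indeed this is why the general complexity does not admit so clean a bound. The argument therefore rests entirely on the unique-path property of the tree together with the convexity and realizability of visitation distributions in \Cref{lemma:wagenmaker-prop-12}. Verifying that these two facts jointly suffice to collapse the full multi-level $\inf_{\bb}\max$ into the single-level quantity at $\bar\ell$ plus a uniform reachability penalty, and checking that fixing the single level $\bar\ell$ for the gap term loses nothing beyond the stated structural factors, is where the genuine work lies.
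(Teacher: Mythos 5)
First, a point of reference: the paper does not prove this statement at all — \Cref{lemma:wagenmaker-lemma-F3} is imported verbatim as a restatement of Lemma F.3 of \citet{wagenmaker2022beyond}, so your sketch can only be judged against that original argument, and against it your sketch has two genuine gaps.

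The actual mechanism behind the bound is a mixture argument, and your sketch gets it only partially right while misattributing what makes it work. Your design $\bb_2$ --- ``a reach-maximizing policy that, for every $(s,\ell)$, places visitation at rate proportional to $W_\ell(s)$'' --- does not exist as a single policy: $W_\ell(s)=\sup_\pi w_\ell^\pi(s)$ is attained by a \emph{different} policy for each target state, and $\sum_s W_\ell(s)$ can greatly exceed $1$, so no one policy realizes visitation proportional to $W_\ell(s)$ simultaneously at every $(s,\ell)$. What the source proof does is take, for each of the $SAL$ triples $(s,a,\ell)$, the policy attaining $W_\ell(s)$ (followed by action $a$), and form the uniform mixture with weight $1/(SAL)$ on each; convexity and realizability of visitation distributions (\Cref{lemma:wagenmaker-prop-12}) make this mixture a valid single behavior policy with $w_\ell^{\bb}(s,a)\geq W_\ell(s)/(SAL)$, which is exactly where the $SAL/W_\ell(s)$ term comes from --- an averaging argument, not the ``union bound'' you invoke. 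Relatedly, your claim that the tree's unique-path property is essential to decouple the gap term from the reachability term is incorrect: Lemma F.3 and Proposition 12 are proved in \citet{wagenmaker2022beyond} for general tabular MDPs, and the additive split follows entirely from the mixture construction just described; the present paper restricts to trees only because its surrounding arguments need \Cref{def:tree-mdp}, not because the decomposition fails otherwise. Your anticipated ``main obstacle'' is therefore not an obstacle, and the machinery you build to address it does no work. Finally, your sketch never pins down the definition of $\mathcal{C}^{\star}(\T)$ (the gap-visitation complexity of \citet{wagenmaker2022beyond}), describing it only loosely as the inf-max program emerging from a change-of-measure bound; without that definition, the evaluation of the objective on your mixed policy cannot be checked, and the step in which the multi-level complexity collapses to the single fixed level $\bar{\ell}$ plus a reachability surrogate at the levels $\ell\neq\bar{\ell}$ --- the heart of the source proof --- is exactly the part you defer as ``where the genuine work lies.''
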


\begin{lemma}\textbf{(Proposition 4 from \citet{wagenmaker2022beyond})} 
\label{lemma:wagenmaker-prop-4}
The following bounds hold for any unconstrained tree MDP $\T$:
\begin{enumerate}
    \item $\mathcal{C}^{\star}(\T) \leq \frac{L^3 S A}{\epsilon^2}$
    \item $\mathcal{C}^{\star}(\T) \leq \sum_{\ell=1}^L \sum_{s, a} \min \left\{\frac{1}{W_\ell(s) \Delta_\ell(s, a)^2}, \frac{W_\ell(s)}{\epsilon^2}\right\}+\frac{L^2|\mathrm{OPT}(\epsilon)|}{\epsilon^2}$
    \item $\mathcal{C}^{\star}(\T) \leq \sum_{\ell=1}^L \sum_{s, a} \frac{1}{\epsilon \max \left\{\Delta_\ell(s, a), \epsilon\right\}}+\frac{L^2|\mathrm{OPT}(\epsilon)|}{\epsilon^2}$.
\end{enumerate}
where, $\mathcal{C}^{\star}(\T)$ is the complexity of the Tree MDP $\T$. The second term in $\mathcal{C}^{\star}(\T), L^2|\mathrm{OPT}(\epsilon)| / \epsilon^2$, captures the complexity of ensuring that after eliminating $\epsilon / W_\ell(s)$-suboptimal actions, sufficient exploration is performed to guarantee the returned policy is $\epsilon$-optimal. 
\end{lemma}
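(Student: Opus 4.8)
This statement is a restatement of Proposition~4 of \citet{wagenmaker2022beyond}, specialized to the tree MDP $\T$ of \Cref{def:tree-mdp}, so the plan is to derive the three bounds on the gap-visitation complexity $\mathcal{C}^{\star}(\T)$ directly from its definition together with the already-available upper bound of \Cref{lemma:wagenmaker-lemma-F3} and the structural facts in \Cref{lemma:wagenmaker-prop-12}. The common starting point is \Cref{lemma:wagenmaker-lemma-F3}, which bounds $\mathcal{C}^{\star}(\T)$ by $\inf_{\bb}\max_{s,a}\frac{1}{w_{\bar\ell}^{\bb}(s,a)\Delta_\ell(s,a)^2}+\max_{s,\ell}\frac{SAL}{W_\ell(s)}$; the entire argument then reduces to choosing, for each of the three claims, a convenient exploration policy $\bb$ (equivalently a valid state-action visitation distribution, which exists by the realizability part of \Cref{lemma:wagenmaker-prop-12}) and bounding the resulting expression.

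For the worst-case bound (item~1) I would truncate every suboptimality gap at the target accuracy, replacing $\Delta_\ell(s,a)$ by $\max\{\Delta_\ell(s,a),\epsilon\}\ge\epsilon$, since any action that is already $\epsilon$-optimal need not be separated from the optimum. Choosing the design that maximizes reachability layer by layer makes $w_{\bar\ell}^{\bb}(s,a)$ comparable to $W_\ell(s)$, after which each of the $SA$ per-layer terms is at most $1/(W_\ell(s)\epsilon^2)$; collecting this with the reachability penalty $SAL/W_\ell(s)$ summed over the $L$ layers and gathering the $L$, $S$, $A$ and $\epsilon^{-2}$ factors yields the claimed $L^3SA/\epsilon^2$.

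The instance-dependent bounds (items~2 and 3) require the finer partition of the state-action pairs into a well-separated set (gaps much larger than $\epsilon$) and the near-optimal set $\mathrm{OPT}(\epsilon)$. For a well-separated pair the learner may either exploit its large gap, paying $1/(W_\ell(s)\Delta_\ell(s,a)^2)$, or fall back on uniform exploration, paying $W_\ell(s)/\epsilon^2$; taking the cheaper of the two for each pair—justified by the convexity of the visitation polytope in \Cref{lemma:wagenmaker-prop-12}, which lets me mix the two designs—produces the summand $\min\{1/(W_\ell(s)\Delta_\ell(s,a)^2),\,W_\ell(s)/\epsilon^2\}$ of item~2, while the residual cost of certifying the near-optimal actions is absorbed into the collective term $L^2|\mathrm{OPT}(\epsilon)|/\epsilon^2$. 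Item~3 then follows from item~2 by the elementary inequality $\min\{x/\Delta^2,\,y/\epsilon^2\}\le 1/(\epsilon\max\{\Delta,\epsilon\})$ after the two branches are balanced, the crossover occurring at $W_\ell(s)\approx\epsilon/\Delta_\ell(s,a)$.

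I expect the main obstacle to be the experimental-design step hidden inside the infimum of \Cref{lemma:wagenmaker-lemma-F3}: one must exhibit a single exploration distribution that simultaneously reaches every relevant $(s,a,\ell)$ with visitation close to its maximal reachability $W_\ell(s)$ and stays inside the tree's convex visitation polytope. This allocation argument—rather than the gap truncation or the algebraic inequality for item~3—is where the structure of $\T$ (a single path to each state, \Cref{def:tree-mdp}) and the convexity/realizability of \Cref{lemma:wagenmaker-prop-12} do the real work, and it is the step I would treat most carefully.
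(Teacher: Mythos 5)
There is a genuine gap here, and it starts with a mismatch about what needs proving: the paper offers no proof of this lemma at all — it is imported verbatim as Proposition~4 of \citet{wagenmaker2022beyond}, and the paper never even states the definition of the gap-visitation complexity $\mathcal{C}^{\star}(\T)$ from which that proposition is derived. Your attempt to reconstruct the three bounds from \Cref{lemma:wagenmaker-lemma-F3} and \Cref{lemma:wagenmaker-prop-12} therefore cannot succeed as written, because the ingredients that make Proposition~4 true live inside the definition of $\mathcal{C}^{\star}$, not inside \Cref{lemma:wagenmaker-lemma-F3}. Concretely: the first term of \Cref{lemma:wagenmaker-lemma-F3} is $\inf_\bb \max_{s,a} \frac{1}{w_{\bar\ell}^\bb(s,a)\Delta_\ell(s,a)^2}$, which is \emph{infinite} whenever some action is optimal ($\Delta_\ell(s,a)=0$), and nothing in that lemma licenses your truncation $\Delta_\ell(s,a) \mapsto \max\{\Delta_\ell(s,a),\epsilon\}$ or the appearance of the branch $W_\ell(s)/\epsilon^2$ inside the min of item~2 — in \citet{wagenmaker2022beyond} these come from the min-structure built into the definition of $\mathcal{C}^{\star}$ itself, with the near-optimal set $\mathrm{OPT}(\epsilon)$ handled by a separate additive component $L^2|\mathrm{OPT}(\epsilon)|/\epsilon^2$ of the definition. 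Your sentence that this cost ``is absorbed into'' that term is an assertion, not a derivation; from the material available in this paper there is nothing to absorb it into.

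The second failure is quantitative and decisive for item~1: \Cref{lemma:wagenmaker-lemma-F3} carries the additive term $\max_{s,\ell} SAL/W_\ell(s)$, and the maximal reachability $W_\ell(s)$ can be arbitrarily small (e.g., exponentially small transition probabilities in $\T$), so this term is \emph{not} bounded by $L^3SA/\epsilon^2$; no choice of exploration design removes it, since it sits outside the infimum. Relatedly, the allocation step you correctly flag as the main obstacle is in fact impossible in the form you need it: no single policy achieves $w_{\bar\ell}^\bb(s,a)$ comparable to $W_\ell(s)$ simultaneously for all $(s,a)$; the standard repair — mixing uniformly over the $SA$ per-pair reachability-maximizing policies, which is valid by the convexity and realizability in \Cref{lemma:wagenmaker-prop-12} — only yields $w_{\bar\ell}^\bb(s,a) \geq W_\ell(s)/(SA)$, which leaves a residual $1/W_\ell(s)$ dependence rather than the clean $\epsilon^{-2}$ rates claimed. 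A correct proof must argue directly from the definition of $\mathcal{C}^{\star}(\T)$ as in \citet{wagenmaker2022beyond}, where the min inside the definition caps each summand at a reachability-weighted $\epsilon^{-2}$ term before the mixture argument is applied; if your intent was merely to justify the citation, the honest route is to observe that $\T \subset \M$ so the proposition applies verbatim, which is exactly what the paper does.
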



\begin{lemma}\textbf{(Restatement of Theorem 5 in \citet{carpentier2012minimax})}
\label{lemma:lower-bound-bandit}
Let $A \in \mathbb{N}$ be a set of actions for a bandit setting. Let $\inf$ be the infimum taken over all online sampling algorithms that reduce the MSE and $\sup$ represent the supremum taken over all environments. Define the regret of the algorithm over the target policy $\pi$ as $\cR_n \coloneqq \L_n(\pi) - \L^*_n(\pi)$ where $\L_n(\pi)$ is the MSE of the target policy following the algorithm.  Then:
\begin{align*}
\inf \sup \E\left[\cR_{n}\right] \geq C \frac{A^{1 / 3}}{n^{3 / 2}},
\end{align*}
where $C$ is a numerical constant, and $n$ is the total budget,
\end{lemma}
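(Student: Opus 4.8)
Since \Cref{lemma:lower-bound-bandit} is a verbatim restatement of Theorem~5 of \citet{carpentier2012minimax}, the result follows directly from that work; I outline here the information-theoretic argument one would carry out to reprove it, which also anticipates the change-of-measure technique used for the MDP lower bound.

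The plan is to reduce the MSE regret to an allocation-estimation problem and then to lower bound that problem by a change of measure. For any allocation $\{T_n(a)\}_a$ with $\sum_a T_n(a)=n$, the certainty-equivalence estimator underlying \Cref{prop:bandit} has risk $\L_n(\pi)=\sum_a \pi^2(a)\sigma^2(a)/T_n(a)$, which is minimized at $T^\star_a = n\,\pi(a)\sigma(a)/\big(\sum_{a'}\pi(a')\sigma(a')\big)$ with value $\L^*_n(\pi)=\tfrac1n\big(\sum_a \pi(a)\sigma(a)\big)^2$. Because the constrained first-order optimality conditions make the linear term vanish at $T^\star$, a Taylor expansion gives the quadratic lower bound $\cR_n \gtrsim \sum_a \frac{\pi^2(a)\sigma^2(a)}{(T^\star_a)^3}\big(T_n(a)-T^\star_a\big)^2$, so it suffices to show that any algorithm is forced to misallocate, i.e.\ that it cannot drive $T_n(a)$ to $T^\star_a$ because it does not know the variances $\sigma(a)$ that define $T^\star_a$.

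The hard instances differ only in their reward \emph{variances}. I would fix all means equal and take Gaussian rewards $\N(\mu,\sigma^2(a))$, perturbing the variances of a subset of arms by a small amount $\sigma^2(a)\mapsto \sigma^2(a)(1\pm\epsilon)$; since $T^\star_a$ is proportional to $\sigma(a)$, each perturbation displaces the optimal allocation by a relative amount of order $\epsilon$, so by the quadratic bound an algorithm that does not correctly identify the sign of the perturbation on arm $a$ pays regret of order $\epsilon^2\,\pi^2(a)\sigma^2(a)/n$ there. To show that the sign cannot always be identified, I would apply a two-point / Assouad change of measure (the bandit special case of \Cref{lemma:divergence}). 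The key computation is that for two Gaussians with common mean, $\mathrm{KL}\big(\N(\mu,\sigma^2),\N(\mu,\sigma^2(1+\epsilon))\big)=\Theta(\epsilon^2)$ per sample, so after $T_n(a)$ pulls the accumulated information on arm $a$ is $\Theta(T_n(a)\,\epsilon^2)$; combined with the budget constraint $\sum_a T_n(a)=n$, this forces the existence of an arm whose perturbation sign is not reliably recoverable, on which the algorithm then incurs the above regret on at least one instance of the pair.

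The final step is to optimize the construction---the number of perturbed arms and the gap $\epsilon$---so as to maximize the guaranteed regret subject to the indistinguishability budget, and it is this optimization that produces the precise dependence $\Omega(A^{1/3}n^{-3/2})$ rather than a cruder rate. The main obstacle is exactly this last quantitative step: because the distributions are perturbed in their \emph{second} moment, both the detectability threshold and the incurred regret are quadratic in the gap $\epsilon$, so correctly coupling the number of arms that can be simultaneously ``hidden'' within the sample budget to the per-arm regret---so that the exponents $1/3$ on $A$ and $-3/2$ on $n$ emerge together---requires the careful minimax balancing carried out in \citet{carpentier2012minimax}, to which we defer for the full details.
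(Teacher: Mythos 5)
Your proposal matches the paper exactly: the paper provides no proof of this lemma either, importing it verbatim as Theorem 5 of \citet{carpentier2012minimax}, so deferring to that work is precisely what the paper does. Your accompanying sketch (quadratic expansion of the MSE around the oracle allocation, variance-perturbed hard instances, and a change of measure with per-sample KL of order $\epsilon^2$) is a faithful outline of the technique in the cited source, and your explicit deferral on the final $A^{1/3}/n^{3/2}$ balancing is appropriate since the paper itself treats the result as a black box.
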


\setcounter{figure}{3}
\begin{figure}
    \centering
    \includegraphics[scale=0.6]{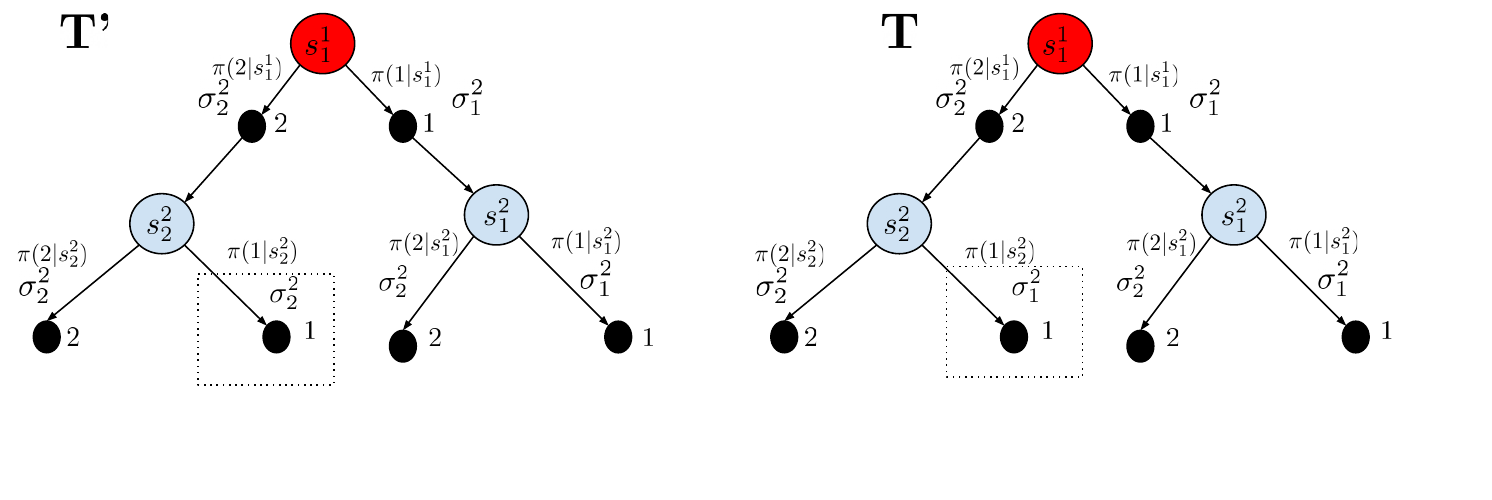}
    \caption{Tractable Tree MDPs $\T$ and $\T'$. The difference between the two Tree MDPs is highlighted in the square box.}
    \label{fig:Tractable-MDP}
\end{figure}

\begin{lemma}
\label{lemma:tab-RL-lower-bound}
Define the regret of the algorithm over the target policy $\pi$ as $\cR_n \coloneqq \L_n(\pi, \bb) - \L^*_n(\pi, \bb_*)$ where $\L_n(\pi, \bb)$ is the MSE of the target policy following the algorithm and $\bb_*$ is the unconstrained oracle behavior policy. 
The reward regret in tree MDP $\T$ is lower bounded by 
$$
\inf \sup \E\left[\cR_n\right]\geq \Omega\left(\frac{\sqrt{SAL^2 \log(1/\delta)}}{n^{3/2}}\right).
$$
\end{lemma}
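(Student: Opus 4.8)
The plan is to establish this minimax lower bound by an information-theoretic (change-of-measure) argument built on the divergence decomposition of \Cref{lemma:divergence}, in the spirit of \citet{carpentier2012minimax} but adapted to the tree structure. The organizing idea is that, since the oracle allocation $\bb_*$ of \Cref{lemma:L-step-tree} depends on the reward variances $\sigma^2(s,a)$, any agnostic algorithm must \emph{estimate} these variances from its own samples, and it cannot simultaneously (i) allocate near-optimally and (ii) avoid diverting samples to probe low-allocation state-action pairs. The irreducible tension between exploration and exploitation is what produces the $n^{-3/2}$ rate, and the divergence lemma is the tool that converts ``ability to allocate correctly'' into ``number of samples that must be spent probing.''

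First I would construct the hard pair of instances (cf.\ \Cref{fig:Tractable-MDP}). Let $\T$ be a deterministic tree MDP (\Cref{def:tree-mdp}) with uniform target policy $\pi(a|s)=1/A$, and let $\T'$ agree with $\T$ on all transitions and on \emph{all} mean rewards, differing only in the reward variance at a single, adversarially chosen pair $(s,a)$, say $\sigma^2$ in $\T$ versus $\sigma^2+\epsilon$ in $\T'$. Since $V^\pi(s^1_1)$ depends only on the means, the target value coincides on the two instances, so the problem is genuinely one of variance identification; by \Cref{lemma:L-step-tree} the normalization factors $M(\cdot)$ and hence the oracle proportions $\bb_*$ differ between $\T$ and $\T'$, giving distinct oracle losses $\L^*_n$. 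I would place the perturbed pair deep in the tree so that, through the recursion in \eqref{eq:M-def}, its variance contribution is amplified by the horizon as it propagates up to the root, which is where the factor $L$ enters.

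Second, I would quantify the two competing costs. Expanding the value-estimate MSE $\sum_{s,a}\pi^2(a|s)\sigma^2(s,a)/T_n(s,a)$ around the oracle allocation, whose first-order term vanishes by optimality, a residual error $\delta$ in the standard deviation used at $(s,a)$ inflates the MSE by an amount of order $\delta^2/n$ weighted by the influence of $(s,a)$; this is the \emph{misallocation} cost. On the other hand, telling $\T$ from $\T'$ requires resolving the perturbed variance, and for Gaussian rewards with common mean $\operatorname{KL}(\nu_\ell(s,a),\nu'_\ell(s,a))$ scales as $\epsilon^2/\sigma^4$ and vanishes at every other pair; so by \Cref{lemma:divergence}, $\sum_{s,a,\ell}\E_\T[N_\ell^\tau(s,a)]\,\operatorname{KL}(\nu_\ell(s,a),\nu'_\ell(s,a))\ge d(\Pb_\T(\xi),\Pb_{\T'}(\xi))$ for any $\xi\in\mathcal{F}_\tau$, and resolving the ambiguity at confidence $\delta$ forces $\gtrsim \sigma^4\log(1/\delta)/\epsilon^2$ samples onto the perturbed pair. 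This is the \emph{exploration} cost, which is pure waste when the adversary makes that pair one with small oracle allocation.

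Third, I would run the balancing and aggregate, and this is where I expect the main obstacle. Taking $\xi$ to be ``the realized allocation is closer to the $\T$-oracle than to the $\T'$-oracle,'' an algorithm with small regret on both instances cannot have $\xi$ hold with high probability under both, so \Cref{lemma:divergence} lower bounds the samples it must divert to the perturbed region; optimizing the adversary's perturbation $\epsilon$ so as to equalize the exploration cost $\sigma^4\log(1/\delta)/\epsilon^2$ against the misallocation cost $\epsilon^2/(\sigma^2 n)$ incurred when the region is left unresolved yields an irreducible excess of order $\sqrt{\log(1/\delta)}/n^{3/2}$ at one pair, and spreading the construction over the $SA$ pairs with the horizon-amplified contribution gives $\E[\cR_n]\ge\Omega(\sqrt{SAL^2\log(1/\delta)}/n^{3/2})$ (recovering \Cref{lemma:lower-bound-bandit} when $S=L=1$, up to the exponent in $A$). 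The delicate point is that the first-order MSE term vanishes at the oracle allocation, so a naive two-point argument that charges only the quadratic misallocation cost yields the \emph{wrong} rate $n^{-2}$; extracting $n^{-3/2}$ requires charging the unavoidable diverted-sample exploration cost rather than only post-detection misallocation, and tracking how a single deep-level variance perturbation is magnified through the recursive normalization $M(\cdot)$ to recover the exact $\sqrt{SAL^2}$ factor while respecting the budget constraint $n=KL$ and the reachability weights of the tree.
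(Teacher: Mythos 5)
Your hard-instance construction (two tree MDPs differing only in the reward variance at a single state--action pair) and your appeal to \Cref{lemma:divergence} coincide with the paper's, but the way you extract the rate is genuinely different from the paper's route. The paper does \emph{not} run a Carpentier--Munos-style exploration-versus-misallocation balance. Its Step 1 is a best-behavior-policy \emph{identification} argument: taking the event $\xi=\{\bb=\bb_*\}$ for a $\delta$-correct algorithm, it combines \Cref{lemma:divergence} with $d\left(\Pb_{\T}(\xi),\Pb_{\T'}(\xi)\right)\geq\log\frac{1}{2.4\delta}$, a Caratheodory/convexity argument over state-visitation distributions, and the gap-visitation complexity bounds (\Cref{lemma:wagenmaker-lemma-F3}, \Cref{lemma:wagenmaker-prop-4}) to conclude that $\E_{\T}[\tau]\gtrsim (SAL^2/\epsilon^2)\log(1/\delta)$ episodes are needed before the algorithm's allocation is $\epsilon$-optimal. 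Step 2 then lower-bounds the agnostic loss by the oracle loss plus the variance-estimation confidence width $\beta^K_L=L\sqrt{SA\log(SAL^2/\delta)/n}$ scaled by $1/n$, and setting $\epsilon=1/\sqrt{n}$ gives $\L_n(\pi,\bb)-\L^*_n(\pi,\bb_*)\gtrsim \beta^K_L/n=\Omega\left(\sqrt{SAL^2\log(1/\delta)}/n^{3/2}\right)$. In other words, the $\sqrt{SAL^2}$ factor comes from the identification sample complexity and the width of the variance confidence interval, not from aggregating two-point perturbations across pairs.

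There are two genuine gaps in your plan. First, a repairable one: your balancing step equates a \emph{sample count} $\sigma^4\log(1/\delta)/\epsilon^2$ with an \emph{MSE} $\epsilon^2/(\sigma^2 n)$, which is dimensionally inconsistent; you must first convert the diverted samples $N$ into excess MSE (of order $N/n^2$, since removing $N$ samples from the near-optimal allocation inflates the $M^2/n$ loss multiplicatively by $1+N/n$), after which equalizing $\log(1/\delta)/(\epsilon^2 n^2)$ against $\epsilon^2/n$ gives $\epsilon\sim(\log(1/\delta)/n)^{1/4}$ and your stated excess $\sqrt{\log(1/\delta)}/n^{3/2}$ — your mechanism for the exponent is correct (and your observation that the quadratic misallocation cost alone yields only $n^{-2}$ is exactly right), but the displayed balance does not produce it as written. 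Second, and more seriously, ``spreading the construction over the $SA$ pairs'' does not deliver the claimed factor: this very aggregation in the single-state case is what produces the $A^{1/3}$ of \Cref{lemma:lower-bound-bandit}, as you yourself concede (``up to the exponent in $A$''), so your route certifies at best an $(SA)^{1/3}$-type dependence rather than $\sqrt{SA}$, and the horizon amplification through the recursion in \eqref{eq:M-def} is asserted but never quantified. To obtain the $\sqrt{SAL^2}$ in the lemma you would need either a different multi-pair packing argument or the paper's reduction through $\mathcal{C}^{\star}(\T)$, where the $SAL^2$ enters via the identification complexity and the extra $L$ via $\beta^K_L$; as proposed, your proof establishes the correct rate in $n$ but not the stated problem-dependent constant.
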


\begin{proof}
We prove this lemma in two steps. In the first step, we prove the minimum number of episodes required by an $\epsilon$-optimal policy $\bb$ in tree MDP $\T$ (\cref{fig:Tractable-MDP}) such that $V^{\bb_*}(s_1)-V^{\bb}(s_1) \leq \epsilon$. Next in step $2$ we show that given this minimum number of episodes, what is the loss suffered by $\bb$ against $\bb_*$ at the end of episode $K$.

\textbf{Step 1 (Minimum episodes):}
We consider the two tree MDPs $\T$ and $\T'$ shown in \Cref{fig:Tractable-MDP}. 
We will apply \Cref{lemma:divergence} on our MDP, $\T$, and MDP $\T^{\prime}$ which is identical to $\T$ except in state $(s^2_2,1)$ where we have $\sigma^2(s^2_2,1) = (\mu-\Delta)(1-\mu+\Delta)$ in $\T'$ and $\sigma^2(s^2_1,1) = (\mu+\alpha)(1-\mu-\alpha)$ for $\T$ and some $\Delta > 0$. This yields a different $\bb_*$ for MDP $\T$ than $\bb_*$ for $\T'$. 

Fix some $\bar{\ell}\in[L]$. Since $\T$ and $\T^{\prime}$ are identical at all points but this one, we have
\begin{align*}
\sum_{s, a, \ell} & \E_{\T}\left[N_\ell^\tau(s, a)\right] \mathrm{KL}\left(\operatorname{Bernoulli}(\mu -\Delta), \operatorname{Bernoulli}(\mu +\alpha)\right) \\
&=\E_{\T}\left[N_{\bar{\ell}}^\tau(s, a)\right] \mathrm{KL}\left(\operatorname{Bernoulli}(\mu -\Delta), \operatorname{Bernoulli}(\mu +\alpha)\right) .
\end{align*}
where, $\E_{\T}, \E_{\T'}$ denotes the expectation over the data collected in tree MDP $\T$ and $\T'$ respectively following policy $\bb_*$.


Let $\bb_*$ denote the optimal policy on $\T$, and $\bb$ denote the $\epsilon$ -optimal policy by any other algorithm. Let the event $\xi=\{\bb=\bb_*\}$. Since we assume  algorithm is $\delta$-correct, and since the optimal policies on $\T$ and $\T^{\prime}$ differ, we have $\mathbb{P}_{\T}(\xi) \geq 1-\delta$ and $\mathbb{P}_{\T^{\prime}}(\xi) \leq \delta$. By \citet{garivier2016optimal}, we can then lower bound
$$
d\left(\mathbb{P}_{\T}(\xi), \mathbb{P}_{\T^{\prime}}(\xi)\right) \geq \log \frac{1}{2.4 \delta}
$$
Thus, by \Cref{lemma:divergence}, we have shown that, for any $(s, a), a \neq \bb_{*,\bar{\ell}}(s)$,
$$
\E_{\T}\left[N_{\bar{\ell}}^\tau(s, a)\right] \geq \frac{1}{\mathrm{KL}\left(\operatorname{Bernoulli}(\mu -\Delta), \operatorname{Bernoulli}(\mu +\alpha)\right)} \cdot \log \frac{1}{2.4 \delta}
$$
%
%
For small $\alpha > 0$, we can bound (see e.g. Lemma 2.7 of \citet{Tsybakov2009springer})
$$
\mathrm{KL}\left(\operatorname{Bernoulli}(\mu -\Delta), \operatorname{Bernoulli}(\mu +\alpha)\right) \leq 6(\Delta-\alpha)^2 .
$$
Taking $\alpha \rightarrow 0$, we have
$$
\E_{\T}\left[N_{\bar{\ell}}^\tau(s, a)\right] \geq \frac{1}{6 \Delta^2} \cdot \log \frac{1}{2.4 \delta}.
$$
We can write $\E_{\T}\left[N_{\bar{\ell}}^\tau(s, a)\right]=\E_{\T}\left[\sum_{k=1}^\tau w_{\bar{\ell}}^{\bb^k}(s, a)\right]$ where $\bb^k$ denotes the policy the algorithm played at episode $k$. Note that all state-visitation distributions lie in a convex set in $[0,1]^{S A}$ and that for any valid state-visitation distribution, there exists some policy that realizes it, by \Cref{lemma:wagenmaker-prop-12}. By Caratheodory's Theorem, it follows that there exists some set of policies $\Pi$ with $|\Pi| \leq S A+1$ such that, for any $\bb$ and all $s, a, w_{\bar{\ell}}^\bb(s, a)=\sum_{\bb^{\prime} \in \Pi} \lambda_{\bb^{\prime}} w_{\bar{\ell}}^{\bb^{\prime}}(s, a)$, for some $\lambda \in \triangle_{\Pi}$. Note that $\lambda$ is a distribution over the policies in $\Pi$. Letting $\lambda^k$ denote this distribution satisfying the above inequality for $\bb^k$, it follows that
\begin{align*} \E_{\T}\left[\sum_{k=1}^\tau w_{\bar{\ell}}^{\bb^k}(s, a)\right] & =\E_{\T}\left[\sum_{k=1}^\tau \sum_{\bb \in \Pi} \lambda_\bb^k w_{\bar{\ell}}^\bb(s, a)\right] \\ 
& =\sum_{\bb \in \Pi} \E_{\T}\left[\sum_{k=1}^\tau \lambda_\bb^k\right] w_{\bar{\ell}}^\bb(s, a) \\ 
& =\E_{\T}[\tau] \sum_{\bb \in \Pi} \frac{\E_{\T}\left[\sum_{k=1}^\tau \lambda_\bb^k\right]}{\E_{\T}[\tau]} w_{\bar{\ell}}(s, a) .
\end{align*}
Note that $\sum_{\bb \in \Pi} \E_{\T}\left[\sum_{k=1}^\tau \lambda_\bb^k\right]=\E_{\T}\left[\sum_{k=1}^\tau \sum_{\bb \in \Pi} \lambda_\bb^k\right]=\E_{\T}[\tau]$ so it follows that $\left(\frac{\E_{\T}\left[\sum_{k=1}^\tau \lambda_\bb^k\right]}{\E_{\T}[\tau]}\right)_{\bb \in \Pi} \in$ $\triangle_{\Pi}$. Thus, a $\delta$-correct algorithm must satisfy, for all $s, a$ and some $\lambda \in \triangle_{\Pi}$,
$$
\E_{\T}[\tau] \geq \frac{1}{6 \Delta^2 \cdot \sum_{\bb \in \Pi} \lambda_\bb w_{\bar{\ell}}^\bb(s, a)} \cdot \log \frac{1}{2.4 \delta}.
$$
Since the set of state visitation distributions is convex, and since for any state-visitation distribution we can find some policy realizing that distribution, for any $\lambda \in \triangle_{\Pi}$, it follows that there exists some $\bb^{\prime}$ such that, for all $s, a, \sum_{\bb \in \Pi} \lambda_\bb w_{\bar{\ell}}^\bb(s, a)=w_{\bar{\ell}}^{\bb^{\prime}}(s, a)$. So, we need, for all $s, a$
$$
\E_{\T}[\tau] \geq \frac{1}{6 \Delta^2 \cdot w_{\bar{\ell}}^\bb(s, a)} \cdot \log \frac{1}{2.4 \delta}.
$$

It follows that every $\delta$-correct algorithm must satisfy
\begin{align*}
\E_{\T}[\tau] &\geq \inf _\bb \max _{s, a} \frac{1}{6 \Delta^2 \cdot w_{\bar{\ell}}^\bb(s, a)} \cdot \log \frac{1}{2.4 \delta},\\
& \gtrsim \mathcal{C}^{\star}(\T) \cdot \log \frac{1}{2.4 \delta}-\max _{s, \ell} \frac{S A L}{W_\ell(s)}
\end{align*}
from which the first inequality follows, and the second inequality follows from \Cref{lemma:wagenmaker-lemma-F3}.

The second term in $\mathcal{C}^{\star}(\T), L^2|\mathrm{OPT}(\epsilon)| / \epsilon^2$, captures the complexity of ensuring that after eliminating $\epsilon / W_\ell(s)$-suboptimal actions, sufficient exploration is performed to guarantee the returned policy is $\epsilon$-optimal. 
Using \Cref{lemma:wagenmaker-prop-4} we have that $\mathcal{C}^{\star}(\T), L^2|\mathrm{OPT}(\epsilon)| / \epsilon^2$
will be no worse than $L^3 S A / \epsilon^2$, it could be much better, if in the MDP the number of $(s, a, \ell)$ with $\Delta_\ell(s, a) \lesssim \epsilon / W_\ell(s)$ is small (note that since $\Delta_\ell(s, a) \geq \Delta_{\min }(s, \ell)$ by definition, $\operatorname{OPT}(\epsilon)$ will only contain states for which the minimum non-zero gap is less than $\epsilon / W_\ell(s)$ ). 
\citet{wagenmaker2022beyond}  obtains the bounds on $\mathcal{C}^{\star}(\T)$ in \Cref{lemma:wagenmaker-prop-4}, providing an interpretation of $\mathcal{C}^{\star}(\T)$ in terms of the maximum reachability, and illustrating $\mathcal{C}^{\star}(\T)$ is no larger than the minimax optimal complexity. 
This implies that 
$$
\E_{\T}[\tau] \gtrsim \Omega\left(\frac{S A L^2}{\epsilon^2} \log (1 / \delta)\right).
$$
Hence the $V^{\bb^K}(s^1_1) - V^{\bb_*}(s^1_1) \leq \epsilon$ for $K\geq \dfrac{S A L^2}{\epsilon^2} \log (1 / \delta)$.

\textbf{Step 2 (Bound regret in $\T$):} In the $\T$ in \Cref{fig:Tractable-MDP} we now have
\begin{align*}
    M(s^1_1) = \sqrt{2\sigma^2_1 + \sigma^2_2} + \sqrt{2\sigma^2_2 + \sigma^2_1},\quad M(s^2_1) = M(s^2_2) = \sigma_1 + \sigma_2
\end{align*}
Define confidence interval $\beta^K_L = L\sqrt{SA\log(SAL^2/\delta)/n}$. It can be shown using pointwise uncertainty estimation from \Cref{corollary:conc-var}
%
that
\begin{align}
    |\wsigma_{K,1} - \sigma_1| \leq \beta^K_L,\quad |\wsigma_{K,2} - \sigma_2| \leq \beta^K_L \label{eq:conc-lower-bound}
\end{align}
holds with probability greater than $1-\delta$, where the $\wsigma_{K,1}$, $\wsigma_{K,2}$ denote the estimated variances after $K$ episodes.
Then the loss of the agnostic algorithm at the end of the $K$-th episode is given by
\begin{align*}
    \L^K_n(\pi, \bb) &= \dfrac{\sqrt{2\wsigma^2_{K,1} + \wsigma^2_{K,2}} + \sqrt{2\wsigma^2_{K,2} + \wsigma^2_{K,1}}}{n} \\
    &\overset{(a)}{\geq} \dfrac{\sqrt{2(\sigma^2_{1} - \beta^K_L) + \sigma^2_{2} - \beta^K_L} + \sqrt{2(\sigma^2_{2} - \beta^K_L) + \sigma^2_{1} - \beta^K_L}}{n}\\
    &= \dfrac{\sqrt{2\sigma^2_{1} + \sigma^2_{2} - 3\beta^K_L} + \sqrt{2\sigma^2_{2} + \sigma^2_{1} - 3\beta^K_L}}{n}\\
    &\overset{(b)}{\geq} \dfrac{\sqrt{2\sigma^2_{1} + \sigma^2_{2}} + \sqrt{2\sigma^2_{2} + \sigma^2_{1}}}{n} - C\dfrac{\beta^K_L}{n}
\end{align*}
where, $(a)$ follows from concentration inequality in \eqref{eq:conc-lower-bound}, and $(b)$ follows for some appropriate constant $C>0$. 
Then for $K\geq \frac{S A L^2}{\epsilon^2} \log (1 / \delta)$ (from step 1) we have the total loss as
\begin{align*}
    \L_n(\pi, \bb) = \L^K_n(\pi,\bb) &\geq \left(\dfrac{\sqrt{2\sigma^2_{1} + \sigma^2_{2}} + \sqrt{2\sigma^2_{2} + \sigma^2_{1}}}{n} - \dfrac{\beta^K_L}{n}\right)\dfrac{S A L^2}{\epsilon^2} \log (1 / \delta)\\
    &\overset{(a)}{\geq} \underbrace{\dfrac{\sqrt{2\sigma^2_{1} + \sigma^2_{2}} + \sqrt{2\sigma^2_{2} + \sigma^2_{1}}}{n}}_{\L_n(\pi, \bb_*)} + \dfrac{\beta^K_L}{n}\\
    \L_n(\pi, \bb) - \L^*_n(\pi, \bb_*) &\overset{(b)}{\geq} \dfrac{\sqrt{SAL^2\log(SAL^2/\delta)}}{n\sqrt{K}} = \Omega\left( \dfrac{\sqrt{SAL^2 \log(1/\delta)}}{n^{3/2}} \right)
\end{align*}
where, $(a)$ follows by first setting $\epsilon = 1/\sqrt{n}$ and then noting that 
\begin{align*}
    \left(\sqrt{2\sigma^2_{1} + \sigma^2_{2}} + \sqrt{2\sigma^2_{2} + \sigma^2_{1}} - \beta^K_L\right)(S A L^2) \log (1 / \delta) \geq \dfrac{\sqrt{2\sigma^2_{1} + \sigma^2_{2}} + \sqrt{2\sigma^2_{2} + \sigma^2_{1}}}{n} + \dfrac{\beta^K_L}{n}.
\end{align*}
Also note that $\L_n(\pi, \bb_*) = \dfrac{\sqrt{2\sigma^2_{1} + \sigma^2_{2}} + \sqrt{2\sigma^2_{2} + \sigma^2_{1}}}{n}$. The $(b)$ follows by substituting the value of $\beta^K_L$. 
The claim of the lemma follows.
%
%
%
%
%
\end{proof}


\begin{customtheorem}{1}\textbf{(Lower Bound, formal)}
Let $\pi(a|s) = \tfrac{1}{A}$ for each state $s\in\S$. Assume the MDP $\M$ is tractable under \Cref{assm:tractable-MDP} and satisfies \eqref{eq:safe-state}. Then the reward regret is lower bounded by 
\begin{align*}
    \E\left[\cR_n\right] = \L_n(\pi, \bb) - \L^*_n(\pi, \bb^k_*)\geq\begin{cases}\Omega\left(\max \left\{\frac{A^{1/3}}{n^{3/2}}, \left(\frac{H_{*, (1)}^2 A^{2/3}}{n^{3/2}}\right)\right\}\right), &\quad \textbf{(MAB)}\\
    \Omega\left(\max \left\{\frac{\sqrt{SAL^2}}{n^{3/2}}, \left(\frac{H_{*, (1)}^2 SAL^2}{n^{3/2}}\right)\right\}\right) &\quad \textbf{(Tabular MDP)}
    \end{cases}
\end{align*}
where, $\Delta_0=|V^{\bb^k_*}_c(s^1_1)-V^{\pi_0}_{c}(s^1_1)|$ and $H_{*, (1)} = \frac{1}{\alpha V^{\pi_0}_{c}(s^1_1)}(\alpha V^{\pi_0}_{c}(s^1_1)+\Delta_0)$ is the hardness parameter.
\end{customtheorem}

\begin{proof}
We follow a reduction-based proof technique to prove this lower bound \citep{yang2021reduction}. 

\textbf{Step 1 (Reduction):} First recall we have that the regret for any online algorithm $\al$ that minimizes the MSE $\L_n(\pi)$ is given by $\cR_n(\al) = \L_n(\pi, \bb) - \L^*_n(\pi, \bb^k_*)$, where $\L^*_n(\pi, \bb)$ is the MSE of the oracle algorithm. We also assume $\pi(a) = 1/A$ for all $a\in\A$, and $\sigma(a) \geq \frac{1}{16}$ for all $a$. 

Now consider any sequential decision-making problem $\mathfrak{A}$ (for instance a multi-armed bandit problem) such that there exists $\xi \in \mathbb{R}$ (a constant solely depending on the sequential decision-making problem, e.g., the number of actions in bandits, or state-action-horizon in tabular RL), an instance of problem $\mathfrak{A}$ where for the budget $n$ large enough and any algorithm $\al$ we have from \Cref{lemma:lower-bound-bandit} and \Cref{lemma:tab-RL-lower-bound} that:
\begin{align}
\E\left[\cR_n^{\mathfrak{A}}(\al)\right] \geq \dfrac{\xi}{n^{3/2}}, \label{eq:lower-1}
\end{align}

For instance, in the MAB case $\xi= A^{1/3}$ with $A$ the number of arms and in tabular RL $\xi = SAL^2$. Using this non-conservative (unconstraint) lower bound, we show our lower bound for the safe setting for the problem $\mathfrak{A}$ with a baseline policy $\pi_0$. We assume the MDP $\T\subset \M$ where we run the behavior policy $\bb^k_*$ satisfies \Cref{assm:tractable-MDP}. This is required because otherwise we will not be able to run the behavior policy a sufficient number of times to reach a regret bound of $\tO(n^{-3/2})$(see \Cref{prop:intractable}). To do so, let's consider any safe algorithm (that is to say it satisfies safety constraint) noted as $\al_c$. We assume this algorithms selects behavior policies $\left(\bb^t\right)_{t \in[n]}$ and let $\N_0$ denotes the set of episodes in $\{1, \ldots, K\}$ where $\al_c$ selects the safe policy $\pi_0$. Let $|\N_0| = N_0$ and $\Delta_0\coloneqq|V^{\bb^k_*}-V^{\pi_0}_{c}|$. Here we assume the budget $n$ is large such that $n \geq SAL^2/\epsilon^2$ for some $\epsilon > 0$ (see \Cref{lemma:tab-RL-lower-bound}) and 
\begin{align*}
    n &\geq \sqrt{\frac{\xi}{\alpha V^{\pi_0}_{c}(s^1_1) \cdot\left(\alpha V^{\pi_0}_{c}(s^1_1)+\Delta_0\right)}+\frac{\xi^2}{4\left(\alpha V^{\pi_0}_{c}(s^1_1)+\Delta_0\right)^2}} \\
    \implies& n^2 \geq \frac{\xi}{\alpha V^{\pi_0}_{c}(s^1_1) \cdot\left(\alpha V^{\pi_0}_{c}(s^1_1)+\Delta_0\right)}+\frac{\xi^2}{4\left(\alpha V^{\pi_0}_{c}(s^1_1)+\Delta_0\right)^2}.\\
    \implies& n \geq \frac{\xi}{n\alpha V^{\pi_0}_{c}(s^1_1) \cdot\left(\alpha V^{\pi_0}_{c}(s^1_1)+\Delta_0\right)}+\frac{\xi^2}{4n \left(\alpha V^{\pi_0}_{c}(s^1_1)+\Delta_0\right)^2}
\end{align*}


\textbf{Step 2 (Loss estimate):} Let $\L(N_0)$ be the loss suffered in first $N_0$ episodes. We now distinguish two cases:

\textbf{(a)} If $\E\left[\L(N_0)\right] \geq \frac{\xi}{n\alpha V^{\pi_0}_{c}(s^1_1) \cdot\left(\alpha V^{\pi_0}_{c}(s^1_1)+\Delta_0\right)}$, then the definition of the regret implies that:
\begin{align}
\E\left[\cR_n^{\mathfrak{A}}(\al)\right] 
=\E\left[\L(N_0)\right] \cdot \Delta_0 \geq \frac{\xi \Delta_0}{n\alpha V^{\pi_0}_{c}(s^1_1) \cdot\left(\alpha V^{\pi_0}_{c}(s^1_1)+\Delta_0\right)} . \label{eq:lower-2}
\end{align}

 \textbf{(b)} If $\E\left[\L(N_0)\right] <\frac{\xi}{n\alpha V^{\pi 0} \cdot\left(\alpha V^{\pi_0}_{c}(s^1_1)+\Delta_0\right)}$, then let's note $\N_0^C=\left\{i_1, i_2, \cdots, i_{\left|\N_0^c\right|}\right\}$ the set of episodes where $\al_c$ does not execute the baseline policy $\pi_0$. Now consider the safety budget (similar to Definition 1 of \citet{yang2021reduction}) we have:
\begin{align*}
B_{\N_0^c}\left(\al_c\right) & =\max _{t \in \N_0^c} \E \sum_{k=1}^t\left[(1-\alpha) V^{\pi_0}_{c}(s^1_1)-V^{\pi^t}(s^1_1)\right] \\
& =\max _{t \in \N_0^c} \E \sum_{k=1}^t\left[V^{\bb^k_*}(s^1_1)-V^{\pi^t}(s^1_1)-\alpha V^{\pi_0}_{c}(s^1_1)-\left(V^{\bb^k_*}_c(s^1_1)-V^{\pi_0}_{c}(s^1_1)\right)\right] \\
& =\max_{t \in \N_0^c} \E\left[\cR^{\mathfrak{A}}_{\N_0^C}\left(\mathcal{A}_c\right)(t)\right]-\left(\alpha V^{\pi_0}_{c}(s^1_1)+\Delta_0\right) t,
\end{align*}
where $\Delta_0=V^{\bb^k_*}_c(s^1_1)-V^{\pi_0}_{c}(s^1_1)$ is the difference between the constraint value of the optimal policy and the baseline policy and $\E\left[R_{\mathfrak{A}}^{\N_0^C}\left(\mathcal{A}_c\right)(t)\right]$ is the regret incurred by the episodes $\left\{i_k\right\}_{k \in[t]}$. Therefore, for any $t \in\left[\left|\mathcal{T}_0^c\right|\right]$, by \eqref{eq:lower-1} we have that there exists an instance $u$ (for instance in a bandit problem $u$ is the means of each arm) of $\mathfrak{A}$ such that $\E\left[R_{\mathfrak{A}}^{\N_0^C}\left(\mathcal{A}_c\right)(t)\right] \geq \dfrac{\xi}{t^{3/2}}$. Let $t_0=\frac{\xi^2}{4n\left(\alpha V^{\pi_0}_{c}(s^1_1)+\Delta_0\right)^2}$, then there exists an instance such that
\begin{align*}
B_{\N_0^C}\left(\al_c\right) \geq \dfrac{\xi}{t_0^{3/2}}-\left(\alpha V^{\pi_0}_{c}(s^1_1)+\Delta_0\right) t_0 = \dfrac{4\left(\alpha V^{\pi_0}_{c}(s^1_1)+\Delta_0\right)^3 n^{3/2}}{\xi^2} - \dfrac{\xi^2}{4(\alpha V^{\pi_0}_{c}(s^1_1)+\Delta_0)}\dfrac{1}{n^2}\\
\overset{(a)}{\gtrsim} \frac{(\alpha V^{\pi_0}_{c}(s^1_1)+\Delta_0)^2\xi^2}{n^{3/2}} .
\end{align*}
where, $(a)$ follows as $n^{3/2} - n^{-2} \geq n^{-3/2}$. 
Combining the safety condition in \cref{eq:safety-constraint-MDP}, we have
\begin{align*}
\E\left[\L(\N_0)\right] \geq \frac{B_{\N_0}\left(\al_c\right)}{\alpha V^{\pi_0}_{c}(s^1_1)} \gtrsim \frac{\left(\alpha V^{\pi_0}_{c}(s^1_1)+\Delta_0\right)^2\xi^2}{\alpha V^{\pi_0}_{c}(s^1_1) n^{3/2}} .
\end{align*}
By the same derivation of \cref{eq:lower-2}, we have
\begin{align}
\E\left[R_n^{\mathfrak{A}}(\al)\right] \gtrsim \frac{\xi^2 \Delta_0}{n \alpha V^{\pi_0}_{c}(s^1_1) \cdot\left(\alpha   V^{\pi_0}_{c}(s^1_1)+\Delta_0\right)} \overset{(a)}{\geq}  \frac{\xi^2}{n^{3/2} \alpha V^{\pi_0}_{c}(s^1_1) \cdot\left(\alpha   V^{\pi_0}_{c}(s^1_1)+\Delta_0\right)}. \label{eq:lower-3}
\end{align}
where, $(a)$ follows for $\Delta_0 \geq 1/\sqrt{n}$. 
Combining \cref{eq:lower-1}, \ref{eq:lower-2}, and \ref{eq:lower-3} we can show that
\begin{align*}
\E\left[R^{\mathfrak{A}}_n(\al)\right] \gtrsim \max \left\{\dfrac{\xi}{n^{3/2}}, \frac{(\alpha V^{\pi_0}_{c}(s^1_1)+\Delta_0)^2\xi^2}{(\alpha V^{\pi_0}_{c}(s^1_1))^2 n^{3/2}}\right\}.
\end{align*}

\textbf{Step 3 (Combine with MAB:)} Now considering that safe oracle $\bb^k_*$ is also an online algorithm $\al$, we can drop the notation. Then for multi-armed bandits, by \Cref{lemma:lower-bound-bandit}, we choose $\xi=A^{1/3}$. Then we have
\begin{align*}
\E\left[\cR_n\right] \gtrsim \max \left\{\dfrac{A^{1/3}}{n^{3/2}}, \frac{\left(\alpha V^{\pi_0}_{c}(s^1_1)+\Delta_0\right)^2}{(\alpha V^{\pi_0}_{c}(s^1_1))^2 }\left(\dfrac{A^{2/3}}{n^{3/2}}\right)\right\} \overset{(a)}{=} \min \left\{\dfrac{A^{1/3}}{n^{3/2}}, \left(\dfrac{H_{*, (1)}^2 A^{2/3}}{n^{3/2}}\right)\right\} .
\end{align*}
where, $(a)$ follows from the problem complexity parameter $H_{*, (1)} = \frac{1}{\alpha V^{\pi_0}_{c}(s^1_1)}(\alpha V^{\pi_0}_{c}(s^1_1)+\Delta_0)$ when $\pi(a) = 1/A$ and $\sigma(a)\geq 1/16$ for the bandit setting.
%
%

\textbf{Step 4 (Combine with tabular RL:)} For tabular RL, by \Cref{lemma:tab-RL-lower-bound}, we choose $\xi=\sqrt{SAL^2}$. Then we have
\begin{align*}
\E\left[\cR_n\right] \gtrsim \max \left\{\dfrac{\sqrt{SAL^2}}{n^{3/2}}, \frac{\left(\alpha V^{\pi_0}_{c}(s^1_1)+\Delta_0\right)^2}{(\alpha V^{\pi_0}_{c}(s^1_1))^2 }\left(\dfrac{SAL^2}{n^{3/2}}\right)\right\} \overset{(a)}{=} \min \left\{\dfrac{\sqrt{SAL^2}}{n^{3/2}}, \left(\dfrac{H_{*, (1)}^2 SAL^2}{n^{3/2}}\right)\right\} .
\end{align*}
where, $(a)$ follows from the problem complexity parameter $H_{*, (1)} = \frac{1}{\alpha V^{\pi_0}_{c}(s^1_1)}(\alpha V^{\pi_0}_{c}(s^1_1)+\Delta_0)$ when $\pi(a) = 1/A$ and $\sigma(a)\geq 1/16$.
%
%
This concludes the proof.
\end{proof}

\begin{remark}\textbf{(Comparing regret)}
\label{rem:comparing-regret}
Observe that the regret lower bound is proved on $\cR'_n = \L_n(\pi) - \L^*_n(\pi)$ which assumes that we can exactly solve for the oracle sampling solution. However, $\overline{\L}^*_n(\pi)$ in $\cR_n$ is an upper bound to $\L^*_n(\pi)$ and so we cannot directly compare $\cR_n$ with $\cR'_n$. However, since $\cR'_n$ gives a lower bound by directly solving for the oracle solution, we conjecture that this is the lower bound to $\cR_n$. Proving this conjecture we leave it to future works.
\end{remark}


\section{Proof of Tree Agnostic MSE}
\label{app:mdp-saver-loss}
\begin{customtheorem}{2}\textbf{(formal)}
Let \Cref{assm:tractable-MDP} hold. Then the MSE of the \sav\ for $\frac{n}{\log(SAn(n+1)/\delta)} \geq 32(LSA^2)^2 + \frac{SA}{\min_{s,a}\Delta^{c,(2)}(s,a)} + \frac{1}{4 H_{*, (2)}^2}$ is bounded by
    \begin{align*}
        \L_n(\pi, \wb^k) \leq &\frac{M^2(s^1_1)}{n}+\frac{8A M^2(s^1_1)}{n^2} + \frac{16A^2 M^2(s^1_1)}{n^3} + \dfrac{M^2(s^1_1)}{n}\left(32 M LSA + H_{*, (2)}\right)^2  + 2\sum_{t=1}^n\dfrac{2\eta + 4\eta^2}{n^2} \nonumber\\
        & \quad + O\left(\dfrac{(2\eta + 4\eta^2)(LSA^2)^2 H_{*, (2)}^2 M^2\sqrt{\log (SA n(n+1) / \delta)}}{\min_{s}\bb^{k,(3/2)}_{*,\min}(s)n^{3/2}}\right)
    \end{align*}
    with probability $(1-\delta)$. The $M =\sum_{\ell=1}^L\sum_{s^\ell_j}M(s^\ell_j)$, and $H_{*, (2)} =\sum_{\ell=1}^L\sum_{s^\ell_j}H_{*, (2)}(s^\ell_j)$ is the problem complexity parameter. The total predicted constraint violations is bounded by
    \begin{align*}
        \C_n(\pi, \wb^k) \leq \dfrac{H_{*, (2)}}{2} \dfrac{n}{M_{\min}} + 16  LSA^2 +  O\left(\dfrac{(2\eta + 4\eta^2)(LSA^2)^2 H_{*, (2)}^2 M^2\sqrt{\log (SA n(n+1) / \delta)}}{\min_{s}\bb^{k,(3/2)}_{*,\min}(s)n^{1/2}}\right)
    \end{align*}
    with probability $(1-\delta)$, where $M_{\min} \coloneqq \min_s M(s)$.
\end{customtheorem}

\begin{proof}
\textbf{Step 1 (Sampling rule):} First note that agnostic \sav\ 
samples by the following rule
\begin{align}
\quad \text{Play } \bb^k = \begin{cases} \pi_x & \text{ if }             \wZ^{k-1} \geq 0, k \leq \sqrt{K}\\
        \wb^k & \text{ if } \wZ^{k-1} \geq 0, k > \sqrt{K}\\
        \pi_0 &\text{ if } \wZ^{k-1} < 0 \label{eq:mdp-saver-sample-1}
\end{cases} 
\end{align}
where, $\wZ^{k-1}_L \coloneqq \sum_{k'=1}^{k-1}(Y_{c,L}^{\bb^{k'}}(s^1_1) - \beta^{k'}_L(s,a)) - (1-\alpha)(k-1)V_{c}^{\pi_0}(s^1_1)$ is the safety budget till the $k$-th episode.


\textbf{Step 2 (MSE Decomposition):} Now recall that the agnostic algorithm does not know the variances and the means. We define the good cost event when the oracle has a good estimate of the cost mean. This is stated as follows:
\begin{align}
    \xi_{c,K} \coloneqq \bigcap_{\substack{1\leq k \leq K,\\1\leq a \leq A,1\leq s \leq S}}\left\{\left|\wmu^{k}_{c,L}(s,a)-\mu^c(s,a)\right|\leq (2\eta + 4\eta^2)L \sqrt{\frac{\log (SA n(n+1) / \delta)}{2 T^k_L(s,a)}}\right\}  \label{eq:mdp-good-constraint-event-agnostic}
\end{align}
where, $n=KL$ and $K$ is the number of episodes and $L$ is the length of horizon of each episode. 
%
%
%
The exploration policy $\pi_x$ results in a good constraint estimate of state-action tuples. This is shown in \Cref{corollary:safe-conc}. 
%
We define the good variance event as 
\begin{align}
    \xi_{v,K} \coloneqq \bigcap_{\substack{1\leq k \leq K,\\1\leq a \leq A,1\leq s \leq S}}\left\{|\wsigma^{k}_L(s,a) - \sigma(s,a)|\leq (2\eta + 4\eta^2)L \sqrt{\frac{\log (SA n(n+1) / \delta)}{2 T^k_L(s,a)}}\right\}. \label{eq:mdp-good variance-event}
\end{align}
We define the safety budget event
\begin{align}
    \xi_{Z,K} \coloneqq \bigcap_{1\leq k \leq K}\left\{\wZ^k \geq 0\right\}. \label{eq:safety-budget-event-agnostic}
\end{align}
Using the definition of MSE, 
and \Cref{lemma:wald-variance} we can show that 
\begin{align}
    &\E_{\D}\left[\left(Y_{n}(s^1_1)-V_\pi(s^1_1)\right)^2\indic{\xi_{Z,K}}\cap\indic{\xi_{c,K}}\cap\indic{\xi_{v,K}}\right] \nonumber\\
    &\leq \sum_a\pi^2(a|s^{1}_{1}) \bigg[\dfrac{ \sigma^2(s^{1}_{1}, a)}{\underline{T}^{(2),K}_L(s^{1}_{1}, a)}\bigg]\E[T^K_L(s^1_1,a)\indic{\xi_{Z,K}}\cap\indic{\xi_{c,K}}\cap\indic{\xi_{v,K}}]\nonumber\\
    &\quad + \gamma^2\sum_{a}\pi^2(a|s^{1}_{1})\sum_{s^{2}_j}P(s^2_j|s^1_1,a)\Var[Y_{n}(s^{2}_j)]\E[T^K_L(s^2_j,a)\indic{\xi_{Z,K}}\cap\indic{\xi_{c,K}}\cap\indic{\xi_{v,K}}]\nonumber\\
    &\leq \sum_a\pi^2(a|s^{1}_{1}) \bigg[\dfrac{ \sigma^2(s^{1}_{1}, a)}{\underline{T}^{(2),K}_L(s^{1}_{1}, a)}\bigg]\E[T^K_L(s^1_1,a)\indic{\xi_{Z,K}\cap\indic{\xi_{v,K}}}\cap\indic{\xi_{c,K}}] \nonumber\\
    &\quad + \gamma^2\sum_{a}\pi^2(a|s^{1}_{1})\sum_{\ell=2}^L\sum_{s^{\ell}_j}P(s^{\ell}_j|s^1_1,a)\sum_{a'}\pi^2(a'|s^\ell_j)\bigg[\dfrac{ \sigma^2(s^{\ell}_{j}, a')}{\underline{T}^{(2),K}_L(s^{\ell}_{j}, a')}\bigg]\E[T^K_L(s^\ell_j,a')\indic{\xi_{Z,K}}\cap\indic{\xi_{c,K}}\cap\indic{\xi_{v,K}}] \label{eq:MSE-agnostic}
\end{align}
which implies that \sav\ does not need to know the reward means $\mu(s,a)$.
Hence, the MSE of \sav\ is bounded by

\begin{align*}
\L_n(\pi) 
&\leq \underbrace{\E_{\D}\left[\left(Y_{n}(s^1_1)-V_\pi(s^1_1)\right)^2\indic{\xi_{Z,K}}\cap\indic{\xi_{c,K}}\cap\xi_{v,K}\right]}_{\textbf{Part A, $\wZ_n \geq 0$, safety event holds}}
+ \underbrace{\E_{\D}\left[\left(Y_{n}(s^1_1)-V_\pi(s^1_1)\right)^2\indic{\xi^C_{Z,K}}\right]}_{\textbf{Part B, $\wZ_n < 0$, constraint violation}} \\
&\quad + \underbrace{ \E_{\D}\left[\left(Y_{n}(s^1_1)-V_\pi(s^1_1)\right)^2\indic{\xi^C_{c,K}}\right]}_{\textbf{Part C, Safety event does not hold}} + \underbrace{ \E_{\D}\left[\left(Y_{n}(s^1_1)-V_\pi(s^1_1)\right)^2\indic{\xi^C_{v,K}}\right]}_{\textbf{Part D, Variance event does not hold}}\\
& \leq \sum_a\pi^2(a|s^{1}_{1}) \bigg[\dfrac{ \sigma^2(s^{1}_{1}, a)}{\underline{T}^{(2),K}_L(s^{1}_{1}, a)}\bigg]\E[T^K_L(s^1_1,a)\indic{\xi_{Z,K}}\cap\indic{\xi_{c,K}}\cap\indic{\xi_{v,K}}] \\
    &\quad + \gamma^2\sum_{a}\pi^2(a|s^{1}_{1})\sum_{\ell=2}^L\sum_{s^{\ell}_j}P(s^{\ell}_j|s^1_1,a)\sum_{a'}\pi^2(a'|s^\ell_j)\bigg[\dfrac{ \sigma^2(s^{\ell}_{j}, a')}{\underline{T}^{(2),K}_L(s^{\ell}_{j}, a')}\bigg]\E[T^K_L(s^\ell_j,a')\indic{\xi_{Z,K}}\cap\indic{\xi_{c,K}}]\\
    &\quad + \underbrace{\E_{\D}\left[\left(Y_{n}(s^1_1)-V_\pi(s^1_1)\right)^2\indic{\xi^C_{Z,K}}\right]}_{\textbf{Part B, $\wZ_n < 0$, constraint violation}} 
+ \underbrace{ \E_{\D}\left[\left(Y_{n}(s^1_1)-V_\pi(s^1_1)\right)^2\indic{\xi^C_{c,K}}\right]}_{\textbf{Part C, Safety event does not hold}} +\underbrace{ \E_{\D}\left[\left(Y_{n}(s^1_1)-V_\pi(s^1_1)\right)^2\indic{\xi^C_{v,K}}\right]}_{\textbf{Part D, Variance event does not hold}}.
\end{align*}
Divide the total budget $n$ into two parts, $n_{f}$ when $\sum_{j=1}^k\indic{\wZ^j \geq 0}$ is true, then $\bb_{*}$ or $\pi_x$ is run. 
Hence define
\begin{align*}
    n_f \coloneqq \sum_{k=1}^K\sum_{\ell=1}^L\sum_{s^\ell_j}\sum_{a'=1}^A\E[T^k_\ell(s^\ell_j,a')\indic{\xi_{Z,K}}\cap\indic{\xi_{c,K}}\cap\indic{\xi_{v,K}}].
\end{align*}
The other part consist of $n_{u} = n-n_{f}$ number of samples when $\sum_{j=1}^k\indic{\wZ^k < 0}$ and only $\pi_0$ is run. 
Hence we define,
\begin{align*}
    n_u = \sum_{k=1}^K\sum_{\ell=1}^L\sum_{s^\ell_j}\sum_{a'=1}^A\E[T^k_\ell(s^\ell_j,a')\indic{\xi^C_{Z,K}}].
\end{align*}
%

\textbf{Step 3 (Sampling of \sav\ for $\wZ^k \geq 0$):} First note that when $\wZ^k\geq 0$ the \sav\ samples at episode $k$ and round $\ell+1$ the action $\argmax_a U^k_{\ell+1}(s^{\ell+1}_i,a)$ where
\begin{align}
U^k_{\ell}(s^{\ell}_i,a) \coloneqq\dfrac{\wb^k_{\ell}(a|s^{\ell}_{i})}{T^k_{\ell}(s^{\ell}_{i}, a)} 
&\leq   \frac{\pi(a|s^{\ell}_{i})}{T^k_{\ell}(s^{\ell}_{i}, a)}\bigg(\sigma(s^{\ell}_{i}, a) + (2\eta + 4\eta^2) \sqrt{\frac{\log (SA n(n+1) / \delta)}{2 T^k_\ell(s^{\ell}_{i}, a)}} \nonumber\\
&\qquad + \underbrace{\gamma^2\sum_{a'}\pi(a'|s^{\ell}_i)\sum_{s^{\ell+1}_j}P(s^{\ell+1}_j|s^{\ell}_i|a')\wM(s^{\ell+1}_j)}_{\bB(s^{\ell}_{i})}\bigg).\label{eq:mdp-saver-1}
\end{align}
Let $\ell+1>2 SA$ be the time at which a given state-action $(s^\ell_i,p')$ is visited for the last time, i.e., $T^k_{\ell}(p')=T^K_{L}(p')-1$ and $T^k_{\ell+1}(p')=T^K_{L}(p')$. Note that as $n = KL \geq 4 SA$, there is at least one state-action pair $(s^\ell_i,p')$ such that this happens, i.e. such that it is visited after the initialization phase. 
Note that under \Cref{assm:tractable-MDP} it is possible to visit each $(s,a)$ atleast once.
Since the \sav\ chooses to visit $(s^\ell_i,p')$ at time $\ell+1$, we have for any state-action pair $(s^\ell_i,p')$
\begin{align}
U^k_{\ell+1}(s^{\ell+1}_i,p) \leq U^k_{\ell+1}(s^{\ell+1}_i,p') .\label{eq:mdp-saver-2}
\end{align}
From \eqref{eq:mdp-saver-1} and using the fact that $T^k_{\ell}(s^{\ell}_i,p')=T^K_{L}(s^{\ell}_i,p')-1$, we can show that
\begin{align}
U^k_{\ell+1}(s^{\ell+1}_i,p') &\leq \frac{\bb_{*}(p'|s^{\ell+1}_i)}{T^k_{t}(s^{\ell+1}_i, p')}\left((2\eta + 4\eta^2) \sqrt{\frac{\log (SA n(n+1) / \delta)}{2 T^k_{t}(s^{\ell+1}_i, p')-1}} + \bB(s^{\ell+1}_i)\right) \nonumber\\
&=\frac{\bb_{*}(p'|s^{\ell+1}_i)}{T^K_{L}(s^{\ell+1}_i, p')-1}\left((2\eta + 4\eta^2) \sqrt{\frac{\log (SA n(n+1) / \delta)}{2 T^K_{L}(s^{\ell+1}_i, p')-1}} + \bB(s^{\ell+1}_i)\right). \label{eq:mdp-saver-3}
\end{align}
Also note that
\begin{align}
    U^k_{\ell+1}(s^{\ell+1}_i, p) = \frac{\bb_{*}(p|s^{\ell+1}_i)}{T^k_{t}(s^{\ell+1}_i, p)}\left((2\eta + 4\eta^2) \sqrt{\frac{\log (SA n(n+1) / \delta)}{2 T^k_{t}(s^{\ell+1}_i, p)-1}} + B(s^{\ell+1}_i)\right)  \overset{(a)}{\geq} \frac{\bb_{*}(p|s^{\ell+1}_i)}{T^K_{L}(s^{\ell+1}_i,p)}. \label{eq:mdp-saver-4}
\end{align}
where, $(a)$ follows as $T_{t}(p)\leq T^K_{L}(p,s^{\ell+1}_i)$ (i.e., the number of times $p$ has been visited can only increase after time $\ell$). 
Combining \eqref{eq:mdp-saver-2}, \eqref{eq:mdp-saver-3}, \eqref{eq:mdp-saver-4} we can show that for any action $p$:
\begin{align}
    \frac{\bb_{*}(p|s^{\ell+1}_i)}{T^K_{L}(p,s^{\ell+1}_i)} \leq \frac{\bb_{*}(p'|s^{\ell+1}_i)}{T^K_{L}(p',s^{\ell+1}_i)-1} \left((2\eta + 4\eta^2) \sqrt{\frac{\log (SA n(n+1) / \delta)}{2 T^K_{L}(s^{\ell+1}_i, p')-1}} + \bB(s^{\ell+1}_i)\right).\label{eq:mdp-saver-5}
\end{align}
Note that in the above equation, there is no dependency on $\ell$, and thus, the probability that \eqref{eq:mdp-saver-5} holds for any $(s^{\ell+1}_i,p)$ and for any $(s^{\ell+1}_i,p')$ such that action $(s^{\ell+1}_i,p')$ is visited after the initialization phase, i.e., such that $T^K_{L}(s^{\ell+1}_i, p')>2$ depends on the probability of event $\xi_{Z,n}$.

\textbf{Step 4. ((Lower bound on $T^K_{L}(s^\ell_i, p)$ for $\wZ^k \geq 0$):} 
If a state-action tuple $(s^\ell_i, p)$ is less visited compared to its optimal allocation without taking into account the initialization phase, i.e., $T^K_{L}(s^\ell_i, p)-2<\bb(p|s^\ell_i)(n-2 A)$, then from the constraint $\sum_{p'}\left(T^K_{L}(s,p')-2\right)=n-2 SA$ and the definition of the optimal allocation, we deduce that there exist at least another state-action tuple $s^\ell_i, p'$ that is over-visited compared to its optimal allocation without taking into account the initialization phase, i.e., $T^K_{L}(s^\ell_i, p')-2>\bb(s^\ell_i, p')(n-2 A)$. Note that for this action, $T^K_{L}(s^\ell_i, p')-2>\bb_{*}(p'|s^\ell_i)(n-2 SA) \geq 0$, so we know that this specific action is taken at least once after the initialization phase and that it satisfies \eqref{eq:mdp-saver-5}. 
Recall that we have defined $M(s^\ell_i) = \sum_a \pi(a|s^\ell_i)\sigma(s^\ell_i, a)$. Further define $M = \sum_{\ell=1}^L\sum_{s^\ell_i}M(s^\ell_i)$. Using the definition of the optimal allocation $T^{*,K}_{L}(s^\ell_i,p')=n_f \frac{\bb_{*}(p'|s^\ell_i)}{M(s^\ell_i)}$, and the fact that $T^K_{L}(s^\ell_i,p') \geq \bb_{*}(p'|s^\ell_i)(n_f- 2 SA)+2$, \eqref{eq:mdp-saver-5} may be written as for any state-action tuple $(s^\ell_i,p)$
%
%
\begin{align}
\frac{\bb_{*}(p|s^\ell_i)}{T^K_{L}(s^\ell_i,p)} & \leq \frac{\bb_{*}(p'|s^\ell_i)}{T^{*,K}_{L}(p',s^\ell_i)} \frac{n_f}{(n_f-2 SA)}\left((2\eta + 4\eta^2) \sqrt{\frac{\log (SA n(n+1) / \delta)}{2 T^K_{L}(s^{\ell+1}_i, p')-1}} + \bB(s^{\ell+1}_i)\right) \nonumber\\
&\leq \dfrac{M(s^\ell_i)}{n_f} + \dfrac{4 SA M(s^\ell_i)}{n_f^2} + \dfrac{(2\eta + 4\eta^2)\sqrt{\log (SA n(n+1) / \delta)}}{\bb^{3/2}_{*,\min}(s^{\ell}_i)n_f^{3/2}}\label{eq:mdp-saver-6}
\end{align}
because $n_f \geq 4 SA$. 
By rearranging \eqref{eq:mdp-saver-6}, we obtain the lower bound on $T^K_{L}(s^\ell_i, p)$ :
\begin{align}
T^K_{L}(s^\ell_i, p) &\geq \frac{\bb_{*}(p|s^\ell_i)}{\frac{M(s^\ell_i)}{n_f}+\frac{4 SA M(s^\ell_i)}{n_f^2} + \dfrac{(2\eta + 4\eta^2)\sqrt{\log (SA n(n+1) / \delta)}}{\bb^{3/2}_{*,\min}(s^{\ell}_i)n_f^{3/2}} } \nonumber\\
&\overset{(a)}{\geq} T^{*,K}_{L}(s^\ell_i,p)- \frac{(2\eta + 4\eta^2)\bb_*(p|s^{\ell}_i)\sqrt{\log (SA n(n+1) / \delta)}}{M(s^{\ell}_i)\bb^{3/2}_{*,\min}(s^{\ell}_i)n_f^{3/2}} -4 A \bb_*(p|s^\ell_i), \label{eq:mdp-saver-6-1}
\end{align}
where in $(a)$ we use $1 /(1+x) \geq 1-x$ (for $x>-1$ ). Note that the lower bound holds on $\xi_{c,K}$ for any state-action  $(s^\ell_i,p)$.

\textbf{Step 5. (Upper bound on $T^K_{L}(s^\ell_i, p)$ for $\wZ^k \geq 0$):} Now using \eqref{eq:mdp-saver-6-1} and the fact that $n_f$ is given by $\sum_{\ell=1}^L\sum_{s^\ell_j}\sum_{a'=1}^A\E[T^K_L(s^\ell_j,a')\indic{\xi_{Z,K}}\cap\indic{\xi_{c,K}}\cap\indic{\xi_{v,K}}]=n_f$, we obtain
\begin{align*}
T^K_{L}(s^\ell_i, p) &= n_f-\sum_{p' \neq p} T^K_{L}(s^\ell_i,p') \leq\left(n_f -\sum_{p' \neq p} T^{*,K}_{L}(s^\ell_i,p')\right)\\
&\quad +\sum_{p' \neq p}\left(\frac{(2\eta + 4\eta^2)\bb_*(p'|s^{\ell}_i)\sqrt{\log (SA n(n+1) / \delta)}}{M(s^{\ell}_i)\bb^{3/2}_{*,\min}(s^{\ell}_i)n_f^{3/2}} + 4 A \bb_*(p'|s^\ell_i) \right).
\end{align*}
Now since $\sum_{p' \neq p} \bb_*(p'|s^\ell_i) \leq 1$ we can show that
\begin{align}
T^K_{L}(s^\ell_i, p) \leq T^{*,K}_{L}(s^\ell_i, p)+ \frac{(2\eta + 4\eta^2)\bb_*(p|s^{\ell}_i)\sqrt{\log (SA n(n+1) / \delta)}}{M(s^{\ell}_i)\bb^{3/2}_{*,\min}(s^{\ell}_i)n_f^{3/2}} + 4 A . \label{eq:mdp-saver-6-2}
\end{align}

\textbf{Step 6 (Bound part A):} We now bound the part A using \eqref{eq:mdp-saver-6} 
\begin{align*}
&\sum_a\pi^2(a|s^{1}_{1}) \bigg[\dfrac{ \sigma^2(s^{1}_{1}, a)}{\underline{T}^{(2),K}_L(s^{1}_{1}, a)}\bigg]\E[T^K_L(s^1_1,a)\indic{\xi_{Z,K}}\cap\indic{\xi_{c,K}}\cap\indic{\xi_{v,K}}] \\
    &\quad + \gamma^2\sum_{a}\pi^2(a|s^{1}_{1})\sum_{\ell=2}^L\sum_{s^{\ell}_j}P(s^{\ell}_j|s^1_1,a)\sum_{a'}\pi^2(a'|s^\ell_j)\bigg[\dfrac{ \sigma^2(s^{\ell}_{j}, a')}{\underline{T}^{(2),K}_L(s^{\ell}_{j}, a')}\bigg]\E[T^K_L(s^\ell_j,a')\indic{\xi_{Z,K}}\cap\indic{\xi_{c,K}}]\\
    &\overset{(a)}{\leq} \left(\frac{M(s^1_1)}{n_f}+\frac{4 SA M(s^1_1)}{n_f^2} +  \dfrac{(2\eta + 4\eta^2)\sqrt{\log (SA n(n+1) / \delta)}}{\bb^{3/2}_{*,\min}(p|s^{\ell}_i)n_f^{3/2}}\right)^2 n_{f} \\
    &\quad + \gamma^2\sum_{a}\pi^2(a|s^{1}_{1})\sum_{\ell=2}^L\sum_{s^{\ell}_j}P(s^{\ell}_j|s^1_1,a)\left(\frac{M(s^\ell_j)}{n_f}+\frac{4 SA M(s^\ell_j)}{n_f^2} + \dfrac{(2\eta + 4\eta^2)\sqrt{\log (SA n(n+1) / \delta)}}{\bb^{3/2}_{*,\min}(p|s^{\ell}_i)n_f^{3/2}}\right)^2 n_{f}\\
    &\overset{}{=} \frac{M^2(s^1_1)}{n_f}+\frac{8A M^2(s^1_1)}{n_f^2} + \frac{16A^2 M^2(s^1_1)}{n_f^3} + O\left(\dfrac{(2\eta + 4\eta^2)\sqrt{\log (SA n(n+1) / \delta)}}{\bb^{3/2}_{*,\min}(p|s^{\ell}_i)n_f^{3/2}}\right) \\
    &\quad + \gamma^2\sum_{a}\pi^2(a|s^{1}_{1})\sum_{\ell=2}^L\sum_{s^{\ell}_j}P(s^{\ell}_j|s^1_1,a)\left(\frac{M^2(s^\ell_j)}{n_f}+\frac{8A M^2(s^\ell_j)}{n_f^2} + \frac{16A^2 M^2(s^\ell_j)}{n_f^3} \right.\\
    &\quad \left. + O\left(\dfrac{(2\eta + 4\eta^2)\sqrt{\log (SA n(n+1) / \delta)}}{\bb^{3/2}_{*,\min}(p|s^{\ell}_j)n_f^{3/2}}\right)\right)
\end{align*}
where, in $(a)$ follows from the definition of $M(s)$ and $n_{f}$.

\textbf{Step 7 (Upper Bound to Constraint Violation):} In this step we bound the quantity $\C_n(\pi) = \sum_{j=1}^k\indic{\wZ^j < 0, \bb^j\in \{\wb^k, \pi_0\}}$. 
Define the number of times the policy $\bb_*$ is played till episode $k$ is $T^k(\bb_*)$ and the number of times the baseline policy is played is given by $T^k(\pi_0)$.
Observe that $\C_n(\pi) = \sum_{j=1}^k\indic{\wZ^j < 0, \bb^j\in \{\wb^k, \pi_0\}} = T^K(\pi_0)\indic{\xi^C_{Z,K}}$ as when the constraint are violated policy $\pi_0$ is sampled. 
Let $\tau=\max \left\{k \leq K \text{ and } n_f \geq \frac{\log(SAn(n+1)/\delta)}{\min_{s,a}\Delta^{c,\alpha,(2)}(s,a)} \mid \bb^k=\pi_0\right\}$ be the last episode in which the baseline policy is played. 
We will define formally the gap $\Delta^{c,\alpha,(2)}(s,a)$ later. Observe that the constraint violation can be re-stated as follows:
\begin{align}
    &\sum_{k=1}^{\tau} Y_{\bb^k}^c(s^1_1) \coloneqq \sum_{k=1}^{\tau}\sum_{a}\bb^k(a|s^1_1)\left(\wmu^{c,k}_L(s_1,a) + \sum_{s^2_j}P(s^2_j|s^1_1,a) Y_{\bb^k}^c(s^2_j)\right) < (1-\alpha) \tau V^c_{\pi_0}(s^1_1)\nonumber\\
    \implies & \sum_{k=1}^{\tau}\sum_{a}\bb^k(a|s^1_1)\left(\underline{\wmu}^{c,k}_L(s^1_1,a) + \sum_{s^2_j}P(s^2_j|s^1_1,a) \underline{Y}_{\bb^k}^c(s^2_j)\right) < (1-\alpha) \tau V^c_{\pi_0}(s^1_1)\nonumber\\
    \overset{(a)}{\implies} & \sum_{k=1}^{\tau}\sum_{a}\bb^k(a|s^1_1)\left(\underline{\wmu}^{c,k}_L(s^1_1,a) + \sum_{s^2_j}P(s^2_j|s^1_1,a) \underline{Y}_{\bb^k}^c(s^2_j)\right) \nonumber\\
    &\quad < (1-\alpha) \sum_{k=1}^{\tau}\pi_0(0|s^1_1)\left(\mu^{c}_{}(s^1_1,0) + \sum_{s^2_j}P(s^2_j|s^1_1,0) V_{\pi_0}^c(s^2_j)\right)\nonumber
\end{align}
\begin{align}
    \overset{}{\implies} & \sum_{k=1}^{\tau}\sum_{a}T^k_L(s^1_1,a)\left(\underline{\wmu}^{c,k}_L(s^1_1,a) \!\! + \!\! \sum_{s^2_j}P(s^2_j|s^1_1,a) \underline{Y}_{\bb^k}^c(s^2_j)\right) \nonumber\\
    &\quad < (1-\alpha) \sum_{k=1}^{\tau}T^k_{L}(s^1_1,a)\left(\mu^{c}_{}(s^1_1,0) \!\! + \!\! \sum_{s^2_j}P(s^2_j|s^1_1,0) V_{\pi_0}^c(s^2_j)\right)\nonumber\\
    \overset{(b)}{\implies} & \underbrace{\sum_{a}T^\tau_L(s^1_1,a)\underline{\wmu}^{c,\tau}_L(s^1_1,a)}_{\textbf{Part A}} + \sum_{a}T^\tau_L(s^1_1,a)\sum_{s^2_j}P(s^2_j|s^1_1,a) \underline{Y}_{\bb^k}^c(s^2_j)  \nonumber\\
    &\quad < \underbrace{(1-\alpha) \sum_{a}T^\tau_{L}(s^1_1,0)\mu^{c}_{}(s^1_1,0)}_{\textbf{Part B}} + (1-\alpha)T^\tau_{L}(s^1_1,0)\sum_{s^2_j}P(s^2_j|s^1_1,0) V_{\pi_0}^c(s^2_j) .
    \label{eq:constraint-reduction-agnostic}
\end{align}
Comparing \textbf{Part A} and \textbf{Part B} for level $\ell=1$ we observe that the constraint violation must satisfy
\begin{align*}
    & \sum_{a}T^\tau_L(s^1_1,a)\underline{\wmu}^{c,\tau}_L(s^1_1,a)  < (1-\alpha) T^\tau_{L}(s^1_1,0)\mu^{c}_{}(s^1_1,0)
\end{align*}
which can be reduced as follows
\begin{align*}
     T^{\tau-1}_{L}(s^1_1, 0) \leq \dfrac{1}{\alpha\mu^c(s^1_1,0)}\left(1+\sum_{a=1}^A N(s^1_1, a)\right) .
\end{align*}
where $\Delta^{c,\alpha}(s^1_1, a)\coloneqq(1-\alpha) \mu^c(s^1_1,0)-\mu^c(s^1_1, a)$ and
\begin{align}
N(s^1_1, a) &\coloneqq T^{\tau-1}_L(s^1_1, a) \cdot\left((1-\alpha) \mu^c(s^1_1, 0)-\mu^c(s^1_1, a)+c_1\sqrt{\log(An(n+1)/\delta) / T^{\tau-1}_L(s^1_1, a)}\right) \nonumber\\
&=\Delta^{c,\alpha}(s^1_1, a) T^{\tau-1}_L(s^1_1, a)+c_1\sqrt{\log(An(n+1)/\delta) T^{\tau-1}_L(s^1_1,a)}\label{eq:NS-agnostic}
\end{align}
is a bound on the decrease in $\wZ_\tau$ in the first $\tau-1$ rounds due to choosing action $a$ in $s^1_1$. We will now bound $N(s^1_1, a)$ for each $a$. Now observe
\begin{align*}
    \Delta^{c,\alpha}(s^1_1, a) = (1-\alpha) \mu^c(s^1_1, 0)-\mu^c(s^1_1, a) &= \mu^c(s^1_1, 0) - \alpha\mu^c(s^1_1, 0) - \mu^c(s^1_1, a) \\
    &= -(\mu^{*,c}(s^1_1) - \mu^c(s^1_1, 0)) - \alpha\mu^c(s^1_1, 0) + (\mu^{*,c}(s^1_1) - \mu^c(s^1_1, a)) \\
    &= -\Delta^c(s^1_1, 0) - \alpha\mu^c(s^1_1, 0) + \Delta^c(s^1_1, a).
\end{align*}
where, $\mu^{*,c}(s^1_1) = \max_a\mu^c(s^1_1, a)$. 
Let $J(n_f) = \frac{(2\eta + 4\eta^2)\bb_*(p|s^{\ell}_i)\sqrt{\log (SA n(n+1) / \delta)}}{M(s^{\ell}_i)\bb^{3/2}_{*,\min}(s^{\ell}_i)n_f^{3/2}}$. 
The first case is $\Delta^{c,\alpha}(s^1_1, a) > 0$, i.e. $\Delta^c(s^1_1, a) > \Delta^c(0)+\alpha \mu^c(0)$. 
These are the unsafe actions as $\Delta^{c,\alpha}(s^1_1, a) \coloneqq(1-\alpha) \mu^c(0)-\mu^c(s^1_1, a) > 0$ we have from \eqref{eq:mdp-saver-6-2}
\begin{align*}
    T_n(s^1_1, a) &\leq T^*_n(s^1_1, a) + J(n_f) + 4A
    = \dfrac{\pi(s^1_1, a)\sigma(s^1_1, a)}{M}n_f + J(n_f) +  4A
\end{align*}
Plugging this back in $N(s^1_1, a)$ we get
\begin{align}
    &N(s^1_1, a) =\Delta^{c,\alpha}(s^1_1, a) T_{\tau-1}(s^1_1, a)+c_1\sqrt{\log(An(n+1)/\delta) T_{\tau-1}(s^1_1, a)} + J(n_f)\nonumber\\
    &\leq \dfrac{\pi(s^1_1, a)\sigma(s^1_1, a)}{M}n_f\Delta^{c,\alpha}(s^1_1, a) + 4A\Delta^{c,\alpha}(s^1_1, a) + c_1\sqrt{\log(An(n+1)/\delta)\left(\dfrac{\pi(s^1_1, a)\sigma(s^1_1, a)}{M}n_f + 4A\right)} + J(n_f) \nonumber\\
    &\overset{(a)}{\leq} \dfrac{\pi(s^1_1, a)\sigma(s^1_1, a)}{M}n_f\Delta^{c,\alpha}(s^1_1, a) + 4A\Delta^{c,\alpha}(s^1_1, a) + c_1\sqrt{\Delta^{c,\alpha,(2)}(s^1_1, a)\left(\dfrac{\pi(s^1_1, a)\sigma(s^1_1, a)}{M}n_f + 4A\right)} + J(n_f) \nonumber\\
    &\overset{}{\leq} 2\left(\dfrac{\pi(s^1_1, a)\sigma(s^1_1, a)}{M}n_f\Delta^{c,\alpha}(s^1_1, a) + 4A\Delta^{c,\alpha}(s^1_1, a)\right) + J(n_f).
    \label{eq:mdp-saver-9} 
\end{align}
where, $(a)$ follows for $n_f \geq \frac{\log(SAn(n+1)/\delta)}{\min_a\Delta^{c,\alpha,(2)}(s^1_1, a)}$.
The other case is $\Delta^{c,\alpha}(s^1_1, a)< 0$, i.e. $\Delta^c(s^1_1, a)<\Delta^c(s^1_1,0)+\alpha \mu^c(s^1_1,0)$ then only safe actions are pulled. Then
\begin{align}
N(s^1_1, a) &\leq -\Delta^{c,\alpha}(s^1_1, a)T_{\tau-1}(s^1_1, a) + c_1\sqrt{\log(An(n+1)/\delta) T_{\tau-1}(s^1_1, a)} + J(n_f)\nonumber\\
&= \underbrace{-\Delta^{c,\alpha}(s^1_1, a)}_{a}T_{\tau-1}(s^1_1, a) + \underbrace{c_1\sqrt{\log(An(n+1)/\delta)}}_{b}\sqrt{T_{\tau-1}(s^1_1, a)} + J(n_f)\nonumber\\
&\overset{(a)}{\leq} -\dfrac{\log(An(n+1)/\delta)}{4\Delta^{c,\alpha}(s^1_1, a)} = \dfrac{\log(An(n+1)/\delta)}{4(\Delta^c(0) + \alpha\mu^c(0) - \Delta^c(s^1_1, a))}  \nonumber\\
&\overset{(b)}{\leq} 4 \left(\dfrac{\pi(s^1_1, a)\sigma(s^1_1, a)}{M}n_f(\Delta^c(0) + \alpha\mu^c(0) - \Delta^c(s^1_1, a)) \right) 
\label{eq:mdp-saver-10}
\end{align}
where, $(a)$ follows by using $a x^2+b x \leq-b^2 / 4 a$ for $a<0$, and $(b)$ follows as $n_f \geq \frac{\log(An(n+1)/\delta)}{\min^{+}_{a\in\A\setminus\{0\}}\pi(s^1_1, a)\sigma(s^1_1, a)\Delta^{c,\alpha,(2)}(s^1_1, a)}$ which implies
\begin{align*}
&\dfrac{\log(An(n+1)/\delta)}{n_f} \leq 4 \left(\sum_{\A\setminus\{0\}}\pi(s^1_1, a)\sigma(s^1_1, a)\min^{+}\{\Delta^c(s^1_1, a),\Delta^c(0)-\Delta^c(s^1_1, a)\}\right)^2 \\
&\implies \dfrac{\log(An(n+1)/\delta)}{\sum_{\A\setminus\{0\}}\pi(s^1_1, a)\sigma(s^1_1, a)\min^{+}\{\Delta^c(s^1_1, a),\Delta^c(0)-\Delta^c(s^1_1, a)\}} \\
&\leq 4 \left(\sum_{\A\setminus\{0\}}\pi(s^1_1, a)\sigma(s^1_1, a)\min^{+}\{\Delta^c(s^1_1, a),\Delta^c(0)-\Delta^c(s^1_1, a)\}\right)n_f.
\end{align*}
Plugging everything back in \eqref{eq:mdp-saver-10}, we get
\begin{align}
n_u &=T_{\tau-1}(s^1_1,0) = \dfrac{1}{\alpha\mu^c(0)}\left(\sum_{a=1}^A N(s^1_1, a)\right)\nonumber\\
&\leq 
\frac{2}{\alpha \mu^c(0)} \sum_{a\in\uA}
\Delta^c(s^1_1, a)
\left(\dfrac{\pi(s^1_1, a)\sigma(s^1_1, a)}{M}n_f \right) 
+ \frac{4}{\alpha \mu^c(0)}\sum_{a\in\sA\setminus\{0\}}\left(\Delta^c(0)-\Delta^c(s^1_1, a) \right)\left(\dfrac{\pi(s^1_1, a)\sigma(s^1_1, a)}{M}n_f \right)\nonumber\\
&= 
\frac{6}{\alpha \mu^c(0)} \sum_{a\in\A\setminus\{0\}}\min^{+}\{\Delta^c(s^1_1, a),\Delta^c(0)-\Delta^c(s^1_1, a) \}\left(\dfrac{\pi(s^1_1, a)\sigma(s^1_1, a)}{M}(n - n_u) \right)
\overset{}{\leq} \dfrac{H_{*, (2)}}{2}\dfrac{n}{M}.
\label{eq:mdp-saver-11}
\end{align}
It follows then that 
for the state $s^1_1$
\begin{align*}
    n_u(s^1_1) \leq \dfrac{1}{\alpha\mu^c(s^1_1,0)}\left(1+\sum_{a=1}^A N(s^1_1, a)\right) \leq \dfrac{H_{*, (2)}(s^1_1)}{2}\dfrac{n}{M(s^1_1)}
\end{align*}
where 
\begin{align}
    H_{*, (2)}(s^\ell_i) &\coloneqq \sum_{a}\bb_*(a|s^\ell_i)\min^{+}\{\Delta^c(s^\ell_i, a),\Delta^c(s^\ell_i,0)-\Delta^c(s^\ell_i, a)\}, \nonumber\\
     M(s^{\ell}_{i}) &\coloneqq \sum\limits_a\!\! \sqrt{\!\!\pi^2(a|s^{\ell}_{i})\!\left(\!\!\sigma^2(s^{\ell}_{i}, a) \!+\! \!\sum\limits_{s^{\ell+1}_j}\!\!P(s^{\ell+1}_j\!|\!s^{\ell}_i, a) M^2(s^{\ell+1}_j)\!\!\right)}.
    \label{eq:HM-def-agnostic}
\end{align}
For an arbitrary level $\ell\in[L]$, we can show using \eqref{eq:constraint-reduction-agnostic} that the constraint violation must satisfy
\begin{align}
     & \sum_{\ell'=1}^{\ell} \sum_{s^{\ell'}_i} \sum_{a}T^\tau_L(s^{\ell'}_i,a)
     \underline{\wmu}^{c,\tau}_L(s^{\ell'}_i,a)  
      < (1-\alpha) \sum_{\ell'=1}^{\ell} \sum_{s^{\ell'}_i} T^\tau_{L}(s^{\ell'}_i,0)
     \mu^{c}_{0}(s^{\ell'}_i,0) \nonumber\\
     \overset{(a)}{\implies} & \sum_{\ell'=1}^{\ell} \sum_{s^{\ell'}_i} \sum_{a}\left(T^{*,K}_{L}(s^{\ell'}_i,a)-4 A \bb_*(a|s^{\ell'}_i) - O\left(\dfrac{(2\eta + 4\eta^2)\sqrt{\log (SA n(n+1) / \delta)}}{\min_{s^{\ell'}_i}\bb^{k,(3/2)}_{*,\min}(s^{\ell'}_i)n_f^{3/2}}\right)\right)\underline{\wmu}^{c,\tau}_L(s^{\ell'}_i,a)\nonumber\\
     &\quad 
     < (1-\alpha) \sum_{\ell'=1}^{\ell} \sum_{s^{\ell'}_i} \left(T^{*,K}_{L}(s^{\ell'}_i,0) + 4 A + O\left(\dfrac{(2\eta + 4\eta^2)\sqrt{\log (SA n(n+1) / \delta)}}{\min_{s^{\ell'}_i}\bb^{k,(3/2)}_{*,\min}(s^{\ell'}_i)n_f^{3/2}}\right)\right)\mu^{c}_{}(s^{\ell'}_i,0)\nonumber\\
    \implies & \sum_{\ell'=1}^{\ell} \sum_{s^{\ell'}_i} \sum_{a}\left(T^{*,K}_{L}(s^{\ell'}_i,a)\right)\underline{\wmu}^{c,\tau}_L(s^{\ell'}_i,a)
    < (1-\alpha) \sum_{\ell'=1}^{\ell} \sum_{s^{\ell'}_i} \left(T^{*,K}_{L}(s^{\ell'}_i,0)\right)\mu^{c}_{}(s^{\ell'}_i,0) \nonumber\\
    &\quad + 8LS A^2(\mu^{c}_{}(s^{\ell'}_i,0) + \underline{\wmu}^{c,\tau}_L(s^{\ell'}_i,a)) + O\left(\sum_{\ell'=1}^{\ell} \sum_{s^{\ell'}_i}\dfrac{(2\eta + 4\eta^2)\sqrt{\log (SA n(n+1) / \delta)}}{\min_{s^{\ell'}_i}\bb^{k,(3/2)}_{*,\min}(s^{\ell'}_i)n_f^{3/2}}\right)\nonumber\\
    \implies & \sum_{\ell'=1}^{\ell} \sum_{s^{\ell'}_i} \sum_{a}\left(T^{*,K}_{L}(s^{\ell'}_i,a)\right)\underline{\wmu}^{c,\tau}_L(s^{\ell'}_i,a) < (1-\alpha) \sum_{\ell'=1}^{\ell} \sum_{s^{\ell'}_i} \left(T^{*,K}_{L}(s^{\ell'}_i,0)\right)\mu^{c}_{}(s^{\ell'}_i,0)\nonumber\\
    & \quad  + 8LS A^2(\mu^{c,\tau}_{0,L}(s^{\ell'}_i,a) + \wmu^{c,\tau}_L(s^{\ell'}_i,a) - \sqrt{\dfrac{\log((SA n(n+1)/\delta)}{2 T^\tau_L(s^{\ell'}_i,a)})} \nonumber\\
    &\quad + O\left(\sum_{\ell'=1}^{\ell} \sum_{s^{\ell'}_i}\dfrac{(2\eta + 4\eta^2)\sqrt{\log (SA n(n+1) / \delta)}}{\min_{s^{\ell'}_i}\bb^{k,(3/2)}_{*,\min}(s^{\ell'}_i)n_f^{3/2}}\right)\nonumber\\
    \overset{}{\implies} & \sum_{\ell'=1}^{\ell} \sum_{s^{\ell'}_i} \sum_{a}\left(T^{*,K}_{L}(s^{\ell'}_i,a)\right)\underline{\wmu}^{c,\tau}_L(s^{\ell'}_i,a) < (1-\alpha) \max_{s}\mu^{c}_{}(s,0)\sum_{\ell'=1}^{\ell} \sum_{s^{\ell'}_i} \left(T^{*,K}_{L}(s^{\ell'}_i,0)\right) + 16LS A^2 \nonumber\\
    &\quad +O\left(\dfrac{(2\eta + 4\eta^2)L\sqrt{\log (SA n(n+1) / \delta)}}{\min_{s}\bb^{k,(3/2)}_{*,\min}(s)n_f^{3/2}}\right)\label{eq:agnostic-reduction-1}
\end{align}
where, $(a)$ follows as $\mu(s,a)\in (0,1]$ for all $s,a$ and using \eqref{eq:mdp-saver-6-1} and \eqref{eq:mdp-saver-6-2}. Summing over all states $s^\ell_j$ till level $L$ we can show that
\begin{align}
    n_u = \sum_{\ell=1}^L\sum_{s^\ell_j} T^{*,K}_{L}(s^{\ell}_j,0)&\leq \dfrac{n}{2}\sum_{\ell=1}^L\sum_{s^\ell_j}\dfrac{H_{*, (2)}(s^\ell_j)}{M(s^\ell_j)} + 16  LSA^2  +O\left(\dfrac{(2\eta + 4\eta^2)L\sqrt{\log (SA n(n+1) / \delta)}}{\min_{s}\bb^{k,(3/2)}_{*,\min}(s)n_f^{3/2}}\right)n_f \nonumber\\
    &\overset{(a)}{\leq} \dfrac{H_{*, (2)}}{2} \dfrac{n}{M_{\min}} + 16  LSA^2 + O\left(\dfrac{(2\eta + 4\eta^2)L\sqrt{\log (SA n(n+1) / \delta)}}{\min_{s}\bb^{k,(3/2)}_{*,\min}(s)n_f^{1/2}}\right)\nonumber\\
    &\overset{(b)}{\leq} \dfrac{H_{*, (2)}}{2} \dfrac{n}{M_{\min}} + 16  LSA^2 + O\left(\dfrac{(2\eta + 4\eta^2)L\sqrt{\log (SA n(n+1) / \delta)}}{\min_{s}\bb^{k,(3/2)}_{*,\min}(s)n^{1/2}}\right)
    \label{eq:mdp-agnostic-11}  
\end{align}
where, in $(a)$ we define $M_{\min} = \min_{s} M(s)$, and $H_{*, (2)} = \sum_{\ell=1}^L\sum_{s^\ell_j}H_{*, (2)}(s^\ell_j)$, and $(b)$ follows by setting $n_f = n - n_u$. Finally, observe that $16 LSA^2$ does not depend on the episode $K$, and the quantity $O\left(\frac{(2\eta + 4\eta^2)L\sqrt{\log (SA n(n+1) / \delta)}}{\min_{s}\bb^{k,(3/2)}_{*,\min}(s)n^{1/2}}\right)$ decreases with $n$.

\textbf{Step 8 (Lower Bound to Constraint Violation):} 
For the lower bound to the constraint we equate \Cref{eq:constraint-reduction-agnostic} to $0$ and show that
\begin{align*}
    & \underbrace{\sum_{a}T^\tau_L(s^1_1,a)\underline{\wmu}^{c,\tau}_L(s^1_1,a)}_{\textbf{Part A}} + \sum_{a}T^\tau_L(s^1_1,a)\sum_{s^2_j}P(s^2_j|s^1_1,a) \underline{Y}_{\bb^k}^c(s^2_j)  \nonumber\\
    &\quad = \underbrace{(1-\alpha) \sum_{a}T^\tau_{L}(s^1_1,0)\mu^{c}_{}(s^1_1,0)}_{\textbf{Part B}} + (1-\alpha)T^\tau_{L}(s^1_1,0)\sum_{s^2_j}P(s^2_j|s^1_1,0) V_{\pi_0}^c(s^2_j) .
\end{align*}
Again comparing \textbf{Part A} and \textbf{Part B} for level $\ell=1$ we observe that the lower bound to  constraint violation must satisfy
\begin{align*}
    & \sum_{a}T^\tau_L(s^1_1,a)\underline{\wmu}^{c,\tau}_L(s^1_1,a)  = (1-\alpha)  T^\tau_{L}(s^1_1,0)\mu^{c,}_{}(s^1_1,0)
\end{align*}
which can be reduced as
\begin{align*}
    \sum_a T^{\tau-1}_{L}(s^1_1, 0) \geq \dfrac{1}{\alpha\mu^c(s^1_1,0)}\left(1+\sum_{a=1}^A \underline{N}(s^1_1, a)\right) .
\end{align*}
where $\Delta^{c,\alpha}(s^1_1, a)\coloneqq(1-\alpha) \mu^c(s^1_1,0)-\mu^c(s^1_1, a)$ and
\begin{align*}
\underline{N}(s^1_1, a) &\coloneqq T^{\tau-1}_L(s^1_1, a) \cdot\left((1-\alpha) \mu^c(s^1_1, 0)-\mu^c(s^1_1, a)+c_1\sqrt{\log(An(n+1)/\delta) / T^{\tau-1}_L(s^1_1, a)}\right) \\
&=\Delta^{c,\alpha}(s^1_1, a) T^{\tau-1}_L(s^1_1, a)+c_1\sqrt{\log(An(n+1)/\delta) T^{\tau-1}_L(s^1_1,a)}\\
&\overset{(a)}{\geq} \Delta^{c,\alpha}(s^1_1, a) \left(T^{*,K}_L(s^1_1, a) - 4A\bb_*(a|s^1_1)\right)+c_1\sqrt{\log(An(n+1)/\delta) \left(T^{*,K}_L(s^1_1, a) - 4A\bb_*(a|s^1_1)\right)}
\end{align*}
where, $(a)$ follows from \eqref{eq:mdp-saver-6-1}. Then 
we can show that
\begin{align*}
     T^{\tau-1}_{L}(s^1_1, 0) \geq \dfrac{1}{\alpha\mu^c(s^1_1,0)}\left(1+\sum_{a=1}^A \underline{N}(s^1_1, a)\right) \geq \dfrac{n_f}{M(s^1_1)}\left(\dfrac{H_{*, (2)}(s^1_1)}{8} - \frac{A}{2}\dfrac{H_{*, (2)}(s^1_1)}{M(s^1_1)}  \right) - 16 SA
\end{align*}
Similarly for any arbitrary level $\ell\in [L]$  following the same way as step $7$ above it can be shown that
\begin{align*}
    &\sum_{\ell'=1}^{\ell} \sum_{s^{\ell'}_i} \sum_{a}\left(T^{*,K}_{L}(s^{\ell'}_i,a) + 4 A \right)\underline{\wmu}^{c,\tau}_L(s^{\ell'}_i,a)  
     \geq (1-\alpha) \sum_{\ell'=1}^{\ell} \sum_{s^{\ell'}_i} \left(T^{*,K}_{L}(s^{\ell'}_i,0) - 4 A \bb_*(0|s^{\ell'}_i)\right)\mu^{c}_{}(s^{\ell'}_i,0)\\
     \overset{}{\implies} & \sum_{\ell'=1}^{\ell} \sum_{s^{\ell'}_i}\sum_{a} T^{*,K}_L(s^\ell_j,a)\underline{\wmu}^{c,\tau}_L(s^\ell_j,a)  \geq (1-\alpha)\sum_{\ell'=1}^{\ell} \sum_{s^{\ell'}_i} T^{*,K}_{L}(s^\ell_i,0)\mu^{c}_{}(s^\ell_i,0) - 16  LSA^2 .
\end{align*}

For the state $s^\ell_j$ we can show that
\begin{align*}
    \sum_{\ell'=1}^{\ell} &\sum_{s^{\ell'}_j} T^{*,K}_{L}(s^{\ell'}_j, 0) \geq \dfrac{1}{\alpha\max_s\mu^c(s^\ell_j,0)}\left(1+\sum_{\ell'=1}^{\ell} \sum_{s^{\ell'}_j}\sum_{a=1}^A \underline{N}(s^{\ell'}_j, a)\right)\\
    &\geq  \sum_{\ell=1}^L\sum_{s^\ell_j}\dfrac{n_f}{M(s^\ell_j)}\left(\dfrac{H_{*, (2)}(s^\ell_j)}{8} - \frac{A}{2}\dfrac{H_{*, (2)}(s^\ell_j)}{M(s^\ell_j)}  \right) - 16  LSA^2 - O\left(\dfrac{(2\eta + 4\eta^2)L\sqrt{\log (SA n(n+1) / \delta)}}{\min_{s}\bb^{k,(3/2)}_{*,\min}(s)n_f^{3/2}}\right).
\end{align*}
Finally summing over all states $s^\ell_j$ and level $L$ we can show that
\begin{align}
    \sum_{\ell=1}^L\sum_{s^\ell_j} T^{*,K}_{L}(s^\ell_j, 0)& \geq \sum_{\ell=1}^L\sum_{s^\ell_j}\dfrac{n_f}{M(s^\ell_j)}\left(\dfrac{H_{*, (2)}(s^\ell_j)}{8} - \frac{A}{2}\dfrac{H_{*, (2)}(s^\ell_j)}{M(s^\ell_j)}  \right) \nonumber\\
    &\qquad - 16 LSA^2 - O\left(\dfrac{(2\eta + 4\eta^2)L\sqrt{\log (SA n(n+1) / \delta)}}{\min_{s}\bb^{k,(3/2)}_{*,\min}(s)n_f^{3/2}}\right).\label{eq:mdp-agnostic-11-1}
\end{align}
Again, observe that $16 LSA^2$ does not depend on the episode $K$. 

\textbf{Step 9 (Bound Part B)}: Then from \eqref{eq:mdp-agnostic-11-1} we can show that
\begin{align*}
    &\dfrac{M(s^1_1)}{\sum_{\ell=1}^L\sum_{s^\ell_j}  T^{*,K}_{L}(s^\ell_j, 0)} \\
    &\leq \dfrac{M(s^1_1)}{\sum_{\ell=1}^L\sum_{s^\ell_j}\dfrac{n_f}{M(s^\ell_j)}\left(\dfrac{H_{*, (2)}(s^\ell_j)}{8} - \dfrac{A}{2}\dfrac{H_{*, (2)}(s^\ell_j)}{M(s^\ell_j)}  \right) - 16 LSA^2 -  O\left(\dfrac{(2\eta + 4\eta^2)L\sqrt{\log (SA n(n+1) / \delta)}}{\min_{s}\bb^{k,(3/2)}_{*,\min}(s)n_f^{3/2}}\right)}\\
    &\overset{(a)}{\leq} (M(s^1_1) + 16 LSA^2)\sum_{\ell=1}^L\sum_{s^\ell_j}\dfrac{M(s^\ell_j)}{n_f}\left(\dfrac{H_{*, (2)}(s^\ell_j)}{8} + \dfrac{A}{2}\dfrac{H_{*, (2)}(s^\ell_j)}{M(s^\ell_j)}  \right) + O\left(\dfrac{(2\eta + 4\eta^2)L\sqrt{\log (SA n(n+1) / \delta)}}{\min_{s}\bb^{k,(3/2)}_{*,\min}(s)n_f^{3/2}}\right) \\
    &\leq (M(s^1_1) + 16 LSA^2)\sum_{\ell=1}^L\sum_{s^\ell_j}\dfrac{M(s^\ell_j)}{n_f}\left(2 + H_{*, (2)}(s^\ell_j)  \right) + O\left(\dfrac{(2\eta + 4\eta^2)L\sqrt{\log (SA n(n+1) / \delta)}}{\min_{s}\bb^{k,(3/2)}_{*,\min}(s)n_f^{3/2}}\right)  \\
    &\overset{(b)}{\leq} (M(s^1_1) + 16 LSA^2)\dfrac{M_{}}{n_f}\left(2 + H_{*, (2)}  \right) +  O\left(\dfrac{(2\eta + 4\eta^2)L\sqrt{\log (SA n(n+1) / \delta)}}{\min_{s}\bb^{k,(3/2)}_{*,\min}(s)n_f^{3/2}}\right)  
\end{align*}
where, $(a)$ follows for $1/(x-c) \leq x + c$ for $x^2 \geq 1 + c^2$ and $c>0$. The $(b)$ follows for $H_{*, (2)} = \sum_{\ell=1}^L\sum_{s^\ell_j}H_{*, (2)}(s^\ell_j)$ and $M = \sum_{\ell=1}^L\sum_{s^\ell_j} M(s^\ell_j)$. It follows then by setting 
$n_f = n - n_u$ that 
\begin{align*}
    \E_{\D}&\left[\left(Y_{n}(s^1_1)-V_\pi(s^1_1)\right)^2\indic{\xi^C_{Z,K}}\right] \\
    &\overset{(a)}{\leq} \left(\dfrac{ (M(s^1_1) + 16 LSA^2)  M\left(2 + H_{*, (2)} \right)}{n_f} +   O\left(\dfrac{(2\eta + 4\eta^2)L\sqrt{\log (SA n(n+1) / \delta)}}{\min_{s}\bb^{k,(3/2)}_{*,\min}(s)n_f^{3/2}}\right)  \right)^2 n_u \\
    & \overset{(b)}{\leq} \dfrac{(M(s^1_1) + 16 LSA^2)^2  n_u}{(n-n_u)^2}\left(2 + H_{*, (2)}\right)^2 + O\left(\dfrac{(2\eta + 4\eta^2)L^2S^2A^4 H_{*, (2)}^2 M^2\sqrt{\log (SA n(n+1) / \delta)}}{\min_{s}\bb^{k,(3/2)}_{*,\min}(s)(n - n_u)^{3/2}}\right)\\
    &\overset{(c)}{\leq } \dfrac{(M(s^1_1) + 16 LSA^2)^2  H_{*, (2)} n}{(n- H_{*, (2)} n)^2}\left(2 + H_{*, (2)}\right)^2 + O\left(\dfrac{(2\eta + 4\eta^2)L^2S^2A^4 H_{*, (2)}^2 M^2\sqrt{\log (SA n(n+1) / \delta)}}{\min_{s}\bb^{k,(3/2)}_{*,\min}(s)(n- H_{*, (2)}n)^{3/2}}\right)  \\
    &\leq \dfrac{M^2(s^1_1)}{n}\left(32 M LSA^2 + H_{*, (2)}\right)^2 + O\left(\dfrac{(2\eta + 4\eta^2)L^2S^2A^4 H_{*, (2)}^2 M^2\sqrt{\log (SA n(n+1) / \delta)}}{\min_{s}\bb^{k,(3/2)}_{*,\min}(s)n^{3/2}}\right)
\end{align*}
where, $(a)$ follows from \Cref{lemma:wald-variance},  $(b)$ follows from definition of $H_{*, (2)}$, and $(c)$ follows from \eqref{eq:mdp-agnostic-11}.

\textbf{Step 10 (Combine everything):} Combining everything from step 5, step 8 and setting $\delta=1/n^2$ we can show that the MSE of \sav scales as
\begin{align}
    \L_n(\pi, \wb^k) &\leq \frac{M^2(s^1_1)}{n}+\frac{8A M^2(s^1_1)}{n^2} + \frac{16A^2 M^2(s^1_1)}{n^3} + \dfrac{M^2(s^1_1)}{n}\left(32 M LSA + H_{*, (2)}\right)^2 \nonumber\\
    &\quad + \underbrace{ \E_{\D}\left[\left(Y_{n}(s^1_1)-V_\pi(s^1_1)\right)^2\indic{\xi^C_{c,K}}\right]}_{\textbf{Part C, Safety event does not hold}}  + \underbrace{ \E_{\D}\left[\left(Y_{n}(s^1_1)-V_\pi(s^1_1)\right)^2\indic{\xi^C_{v,K}}\right]}_{\textbf{Part D, Variance event does not hold}}\nonumber\\
    &\overset{(a)}{\leq} \frac{M^2(s^1_1)}{n}+\frac{8A M^2(s^1_1)}{n^2} + \frac{16A^2 M^2(s^1_1)}{n^3} + \dfrac{M^2(s^1_1)}{n}\left(32 M LSA + H_{*, (2)}\right)^2  + 2\sum_{t=1}^n\dfrac{2\eta + 4\eta^2}{n^2} \nonumber\\
    & \quad + O\left(\dfrac{(2\eta + 4\eta^2)L^2S^2A^4 H_{*, (2)}^2 M^2\sqrt{\log (SA n(n+1) / \delta)}}{\min_{s}\bb^{k,(3/2)}_{*,\min}(s)n^{3/2}}\right)\label{eq:mdp-agnostic-loss}
\end{align}
where, $(a)$ follows as $\E_{\D}\left[\left(Y_{n}(s^1_1)-V_\pi(s^1_1)\right)^2\indic{\xi^C_{c,K}}\right] \leq 2\eta + 4\eta^2$ and using the low error probability of the cost event from \Cref{lemma:conc-mu} and variance event from \Cref{corollary:conc-var}.
The claim of the theorem follows.

\end{proof}

\section{Proof of Tree Oracle MSE}
\label{app:mdp-oracle-loss}
\begin{customproposition}{2}\textbf{(formal)}
\label{prop:mdp-oracle-loss}
Let \Cref{assm:tractable-MDP} hold. Then the MSE of the oracle for $\frac{n}{\log(SAn(n+1)/\delta)} \geq 32(LSA^2)^2 + \frac{SA}{\min_{s,a}\Delta^{c,(2)}(s,a)} + \frac{1}{4 H_{*, (2)}^2}$ is bounded by
    \begin{align*}
        \L_n(\pi, \bb^k_*) \leq &\frac{M^2(s^1_1)}{n}+\frac{8A M^2(s^1_1)}{n^2} + \frac{16A^2 M^2(s^1_1)}{n^3} + \dfrac{M^2(s^1_1)}{n}\left(32 M LSA^2 + H_{*, (2)}\right)^2  + 2\sum_{t=1}^n\dfrac{2\eta + 4\eta^2}{n^2}
        + \dfrac{2}{n}
    \end{align*}
    with probability $(1-\delta)$. The $M =\sum_{\ell=1}^L\sum_{s^\ell_j}M(s^\ell_j)$, and $H_{*, (2)} =\sum_{\ell=1}^L\sum_{s^\ell_j}H_{*, (2)}(s^\ell_j)$ is the problem complexity parameter. The total predicted constraint violations is bounded by
    \begin{align*}
        \C^*_n(\pi, \bb^k_*) \leq \dfrac{H_{*, (2)}}{2} \dfrac{n}{M_{\min}} + 16  LSA^2 
    \end{align*}
    with probability $(1-\delta)$, where $M_{\min} \coloneqq \min_s M(s)$.
\end{customproposition}

\begin{proof}
\textbf{Step 1 (Sampling rule):} We follow the proof technique of \Cref{thm:mdp-agnostic-loss}. Note that the oracle tree algorithm knows the variances of reward and constraints values (but does not know the mean of either) and samples by the following rule
\begin{align}
    \bb^k_* = \begin{cases}  
    \pi_x, &\text{ if } \wZ^{k-1}_L \geq 0, k \leq \sqrt{K}\\
    \bb_{*}, &\text{ if } \wZ^{k-1}_L \geq 0, k > \sqrt{K}\\
    \pi_0 &\text{ if } \wZ^{k-1}_L < 0 
    \end{cases} \label{eq:mdp-sampling-rule-1}.
\end{align}
where, $\wZ^{k-1}_L \coloneqq \sum_{k'=1}^{k-1}(Y_{c,L}^{\bb^{k'}}(s^1_1) - \beta^{k'}_L(s,a)) - (1-\alpha)(k-1)V_{c}^{\pi_0}(s^1_1)$ is the safety budget till the $k$-th episode. 
%
%

\textbf{Step 2 (MSE Decomposition):} Now recall that the oracle knows the variances but does not know the means (constraint and reward). We define the good constraint event when the oracle has a good estimate of the constraint mean. This is stated as follows:
\begin{align}
    \xi_{c,K} \coloneqq \bigcap_{\substack{1\leq k \leq K,\\1\leq a \leq A,1\leq s \leq S}}\left\{\left|\wmu^{c,k}_L(s,a)-\mu^c(s,a)\right|\leq (2\eta + 4\eta^2) \sqrt{\frac{\log (SA n(n+1) / \delta)}{2 T^k_L(s,a)}}\right\} \label{eq:mdp-good-constraint-event}
\end{align}
where, $n=KL$ and $K$ is the number of episodes and $L$ is the length of horizon of each episode. Define $c_1 = 2\eta + 4\eta^2$. 

The exploration policy $\pi_e$ results in a good constraint estimate of state-action tuples. This is shown in \Cref{corollary:safe-conc}.

We also define the safety budget event
    $\xi_{Z,K} \coloneqq \bigcap_{1\leq k \leq K}\left\{\wZ^k \geq 0\right\}$.
Now using \Cref{lemma:wald-variance} we can show that 
\begin{align*}
    &\E_{\D}\left[\left(Y_{n}(s^1_1)-V_\pi(s^1_1)\right)^2\indic{\xi_{Z,K}}\cap\indic{\xi_{c,K}}\right] \leq \sum_a\pi^2(a|s^{1}_{1}) \bigg[\dfrac{ \sigma^2(s^{1}_{1}, a)}{\underline{T}^{(2),K}_L(s^{1}_{1}, a)}\bigg]\E[T^K_L(s^1_1,a)\indic{\xi_{Z,K}}\cap\indic{\xi_{c,K}}]\\
    &\quad + \gamma^2\sum_{a}\pi^2(a|s^{1}_{1})\sum_{s^{2}_j}P(s^2_j|s^1_1,a)\Var[Y_{n}(s^{2}_j)]\E[T^K_L(s^2_j,a)\indic{\xi_{Z,K}}\cap\indic{\xi_{c,K}}]\\
    &\leq \sum_a\pi^2(a|s^{1}_{1}) \bigg[\dfrac{ \sigma^2(s^{1}_{1}, a)}{\underline{T}^{(2),K}_L(s^{1}_{1}, a)}\bigg]\E[T^K_L(s^1_1,a)\indic{\xi_{Z,K}}\cap\indic{\xi_{c,K}}] \\
    &\quad + \gamma^2\sum_{a}\pi^2(a|s^{1}_{1})\sum_{\ell=2}^L\sum_{s^{\ell}_j}P(s^{\ell}_j|s^1_1,a)\sum_{a'}\pi^2(a'|s^\ell_j)\bigg[\dfrac{ \sigma^2(s^{\ell}_{j}, a')}{\underline{T}^{(2),K}_L(s^{\ell}_{j}, a')}\bigg]\E[T^K_L(s^\ell_j,a')\indic{\xi_{Z,K}}\cap\indic{\xi_{c,K}}]
\end{align*}
which implies that the oracle does not need to know the reward means $\mu(a)$. Hence, Using the definition of MSE we can show that the MSE of oracle is bounded by
\begin{align*}
\L_n(\pi) 
&\leq \underbrace{\E_{\D}\left[\left(Y_{n}(s^1_1)-V_\pi(s^1_1)\right)^2\indic{\xi_{Z,K}}\cap\indic{\xi_{c,K}}\right]}_{\textbf{Part A, $\wZ_n \geq 0$, safety event holds}}
+ \underbrace{\E_{\D}\left[\left(Y_{n}(s^1_1)-V_\pi(s^1_1)\right)^2\indic{\xi^C_{Z,K}}\right]}_{\textbf{Part B, $\wZ_n < 0$, constraint violation}} \\
&\quad + \underbrace{ \E_{\D}\left[\left(Y_{n}(s^1_1)-V_\pi(s^1_1)\right)^2\indic{\xi^C_{c,K}}\right]}_{\textbf{Part C, Safety event does not hold}}\\
& \leq \sum_a\pi^2(a|s^{1}_{1}) \bigg[\dfrac{ \sigma^2(s^{1}_{1}, a)}{\underline{T}^{(2),K}_L(s^{1}_{1}, a)}\bigg]\E[T^K_L(s^1_1,a)\indic{\xi_{Z,K}}\cap\indic{\xi_{c,K}}] \\
    &\quad + \gamma^2\sum_{a}\pi^2(a|s^{1}_{1})\sum_{\ell=2}^L\sum_{s^{\ell}_j}P(s^{\ell}_j|s^1_1,a)\sum_{a'}\pi^2(a'|s^\ell_j)\bigg[\dfrac{ \sigma^2(s^{\ell}_{j}, a')}{\underline{T}^{(2),K}_L(s^{\ell}_{j}, a')}\bigg]\E[T^K_L(s^\ell_j,a')\indic{\xi_{Z,K}}\cap\indic{\xi_{c,K}}]\\
    &\quad + \underbrace{\E_{\D}\left[\left(Y_{n}(s^1_1)-V_\pi(s^1_1)\right)^2\indic{\xi^C_{Z,K}}\right]}_{\textbf{Part B, $\wZ_n < 0$, constraint violation}} 
+ \underbrace{ \E_{\D}\left[\left(Y_{n}(s^1_1)-V_\pi(s^1_1)\right)^2\indic{\xi^C_{c,K}}\right]}_{\textbf{Part C, Safety event does not hold}}
\end{align*}
Divide the total budget $n$ into two parts, $n_{f}$ when $\sum_{j=1}^k\indic{\wZ^j \geq 0}$ is true, then $\bb_{*}$ is run. 
Hence define
\begin{align*}
    n_f \coloneqq \sum_{k=1}^K\sum_{\ell=1}^L\sum_{s^\ell_j}\sum_{a'=1}^A\E[T^k_\ell(s^\ell_j,a')\indic{\xi_{Z,K}}\cap\indic{\xi_{c,K}}].
\end{align*}
The other part consist of $n_{u} = n-n_{f}$ number of samples when $\sum_{j=1}^k\indic{\wZ^k < 0}$ and only $\pi_0$ is run. 
Hence we define,
\begin{align*}
    n_u = \sum_{k=1}^K\sum_{\ell=1}^L\sum_{s^\ell_j}\sum_{a'=1}^A\E[T^k_\ell(s^\ell_j,a')\indic{\xi^C_{Z,K}}].
\end{align*}

\textbf{Step 3 (Sampling of oracle for an episode $k$ when $\wZ^k \geq 0$):} First note that when $\wZ^k\geq 0$ the oracle samples at episode $k$ according to the policy $\bb_{*}$. The following the same steps as in step 3 of \Cref{thm:mdp-agnostic-loss} we can show that. At episode $k$, time $\ell+1$, the $\bb_{*}$ samples the state-action tuple, action $\argmax_a U^k_{\ell+1}(s^{\ell+1}_i,a)$ where
\begin{align}
U^k_{\ell}(s^{\ell}_i,a) &\coloneqq\dfrac{\bb_{*,\ell}(a|s^{\ell}_{i})}{T^k_{\ell}(s^{\ell}_{i}, a)} \label{eq:mdp-oracle-1}
\end{align}
Let $\ell+1>2 SA$ be the time at which a given state-action $(s^\ell_i,p')$ is visited for the last time, i.e., $T^k_{\ell}(p')=T^K_{L}(p')-1$ and $T^k_{\ell+1}(p')=T^K_{L}(p')$. Note that as $n = KL \geq 4 SA$, there is at least one state-action pair $(s^\ell_i,p')$ such that this happens, i.e. such that it is visited after the initialization phase. Since the oracle chooses to pull visit $(s^\ell_i,p')$ at time $\ell+1$, we have for any state-action pair $(s^\ell_i,p')$
\begin{align}
U^k_{\ell+1}(s^{\ell+1}_i,p) \leq U^k_{\ell+1}(s^{\ell+1}_i,p') .\label{eq:mdp-oracle-2}
\end{align}
From \eqref{eq:mdp-oracle-1} and using the fact that $T^k_{\ell}(s^{\ell}_i,p')=T^K_{L}(s^{\ell}_i,p')-1$, we can show that
\begin{align}
U^k_{\ell+1}(s^{\ell+1}_i,p') \leq \frac{\bb_{*}(p'|s^{\ell+1}_i)}{T^k_{t}(s^{\ell+1}_i, p')} =\frac{\bb_{*}(p'|s^{\ell+1}_i)}{T^K_{L}(s^{\ell+1}_i, p')-1} \label{eq:mdp-oracle-3}
\end{align}
Also note that
\begin{align}
    U^k_{\ell+1}(s^{\ell+1}_i, p) = \frac{\bb_{*}(p|s^{\ell+1}_i)}{T^k_{t}(s^{\ell+1}_i, p)} \overset{(a)}{\geq} \frac{\bb_{*}(p|s^{\ell+1}_i)}{T^K_{L}(s^{\ell+1}_i,p)}. \label{eq:mdp-oracle-4}
\end{align}
where, $(a)$ follows as $T_{t}(p)\leq T^K_{L}(p,s^{\ell+1}_i)$ (i.e., the number of times $p$ has been sampled can only increase after time $\ell$). 
Combining \eqref{eq:mdp-oracle-2}, \eqref{eq:mdp-oracle-3}, \eqref{eq:mdp-oracle-4} we can show that for any action $p$:
\begin{align}
    \frac{\bb_{*}(p|s^{\ell+1}_i)}{T^K_{L}(p,s^{\ell+1}_i)} \leq \frac{\bb_{*}(p'|s^{\ell+1}_i)}{T^K_{L}(p',s^{\ell+1}_i)-1} \label{eq:mdp-oracle-5}
\end{align}
Note that in the above equation, there is no dependency on $\ell$, and thus, the probability that \eqref{eq:mdp-oracle-5} holds for any $(s^{\ell+1}_i,p)$ and for any $(s^{\ell+1}_i,p')$ such that state-action $(s^{\ell+1}_i,p')$ is visited after the initialization phase, i.e., such that $T^K_{L}(s^{\ell+1}_i, p')>2$ depends on the probability of event $\xi_{Z,n}$.

\textbf{Step 4. (Lower bound on $T^K_{L}(s^\ell_i, p)$ for $\wZ^k \geq 0$):} If a state-action tuple $s^\ell_i, p,p$ is under-pulled compared to its optimal allocation without taking into account the initialization phase, i.e., $T^K_{L}(s^\ell_i, p)-2<\bb(p|s^\ell_i)(n-2 A)$, then from the constraint $\sum_{p'}\left(T^K_{L}(s,p')-2\right)=n-2 SA$ and the definition of the optimal allocation, we deduce that there exists at least another state-action tuple $s^\ell_i, p'$ that is over-visited compared to its optimal allocation without taking into account the initialization phase, i.e., $T^K_{L}(s^\ell_i, p')-2>\bb(s^\ell_i, p')(n-2 SA)$. Note that for this action, $T^K_{L}(s^\ell_i, p')-2>\bb_{*}(p'|s^\ell_i)(n-2 SA) \geq 0$, so we know that this specific action is pulled at least once after the initialization phase and that it satisfies \eqref{eq:mdp-oracle-5}. 
Recall that we have defined $M(s^\ell_i) = \sum_a \pi(a|s^\ell_i)\sigma(s^\ell_i, a)$. Further define $M = \sum_{\ell=1}^L\sum_{s^\ell_i}M(s^\ell_i)$. Using the definition of the optimal allocation $T^{*,K}_{L}(s^\ell_i,p')=n_f \frac{\bb_{*}(p'|s^\ell_i)}{M(s^\ell_i)}$, and the fact that $T^K_{L}(s^\ell_i,p') \geq \bb_{*}(p'|s^\ell_i)(n_f- 2 SA)+2$, \eqref{eq:mdp-oracle-5} may be written as for any state-action tuple $(s^\ell_i,p)$
\begin{align}
\frac{\bb_{*}(p|s^\ell_i)}{T^K_{L}(s^\ell_i,p)} & \leq \frac{\bb_{*}(p'|s^\ell_i)}{T^{*,K}_{L}(p',s^\ell_i)} \frac{n_f}{(n_f-2 SA)} \leq \dfrac{M(s^\ell_i)}{n_f} + \dfrac{4 A M(s^\ell_i)}{n_f^2} \label{eq:mdp-oracle-6}
\end{align}
because $n_f \geq 4 SA$. 
By rearranging \eqref{eq:mdp-oracle-6}, we obtain the lower bound on $T^K_{L}(s^\ell_i, p)$ :
\begin{align}
T^K_{L}(s^\ell_i, p) \geq \frac{\bb_{*}(p|s^\ell_i)}{\frac{M(s^\ell_i)}{n_f}+\frac{4 A M(s^\ell_i)}{n_f^2}} = \dfrac{\bb_{*}(p|s^\ell_i)}{\frac{M(s^\ell_i)}{n_f}\left(1 + \frac{4A}{n_f}\right)} \overset{(a)}{\geq} T^{*,K}_{L}(s^\ell_i,p)-4 A \bb_*(p|s^\ell_i), \label{eq:mdp-oracle-6-1}
\end{align}
where in $(a)$ we use $1 /(1+x) \geq 1-x$ (for $x>-1$ ). Note that the lower bound holds on $\xi_{c,K}$ for any action $p$.

\textbf{Step 5. (Upper bound on $T^K_{L}(s^\ell_i, p)$ for $\wZ^k \geq 0$):} Now using \eqref{eq:mdp-oracle-6-1} and the fact that $n_f$ is given by $\sum_{\ell=1}^L\sum_{s^\ell_j}\sum_{a'=1}^A\E[T^K_L(s^\ell_j,a')\indic{\xi_{Z,K}}\cap\indic{\xi_{c,K}}]=n_f$, we obtain
\begin{align*}
T^K_{L}(s^\ell_i, p)=n_f-\sum_{p' \neq p} T^K_{L}(s^\ell_i,p') \leq\bigg(n_f -\sum_{p' \neq p} T^{*,K}_{L}(s^\ell_i,p')\bigg)+\sum_{p' \neq p}4 A \bb_*(p'|s^\ell_i) .
\end{align*}
Now since $\sum_{p' \neq p} \bb_*(p'|s^\ell_i) \leq 1$ we can show that
\begin{align}
T^K_{L}(s^\ell_i, p) \leq T^{*,K}_{L}(s^\ell_i, p)+4 A . \label{eq:mdp-oracle-6-2}
\end{align}

\textbf{Step 6 (Bound part A):} We now bound the part A using \eqref{eq:mdp-oracle-6}
\begin{align*}
&\sum_a\pi^2(a|s^{1}_{1}) \bigg[\dfrac{ \sigma^2(s^{1}_{1}, a)}{\underline{T}^{(2),K}_L(s^{1}_{1}, a)}\bigg]\E[T^K_L(s^1_1,a)\indic{\xi_{Z,K}}\cap\indic{\xi_{c,K}}] \\
    &\quad + \gamma^2\sum_{a}\pi^2(a|s^{1}_{1})\sum_{\ell=2}^L\sum_{s^{\ell}_j}P(s^{\ell}_j|s^1_1,a)\sum_{a'}\pi^2(a'|s^\ell_j)\bigg[\dfrac{ \sigma^2(s^{\ell}_{j}, a')}{\underline{T}^{(2),K}_L(s^{\ell}_{j}, a')}\bigg]\E[T^K_L(s^\ell_j,a')\indic{\xi_{Z,K}}\cap\indic{\xi_{c,K}}]\\
    &\overset{(a)}{\leq} \left(\frac{M(s^1_1)}{n_f}+\frac{4 A M(s^1_1)}{n_f^2}\right)^2 n_{f}  + \gamma^2\sum_{a}\pi^2(a|s^{1}_{1})\sum_{\ell=2}^L\sum_{s^{\ell}_j}P(s^{\ell}_j|s^1_1,a)\left(\frac{M(s^\ell_j)}{n_f}+\frac{4 A M(s^\ell_j)}{n_f^2}\right)^2 n_{f}\\
    &\overset{}{=} \frac{M^2(s^1_1)}{n_f}+\frac{8A M^2(s^1_1)}{n_f^2} + \frac{16A^2 M^2(s^1_1)}{n_f^3} \\
    &\quad + \gamma^2\sum_{a}\pi^2(a|s^{1}_{1})\sum_{\ell=2}^L\sum_{s^{\ell}_j}P(s^{\ell}_j|s^1_1,a)\left(\frac{M^2(s^\ell_j)}{n_f}+\frac{8A M^2(s^\ell_j)}{n_f^2} + \frac{16A^2 M^2(s^\ell_j)}{n_f^3}\right)
\end{align*}
where, in $(a)$ follows from the definition of $M(s)$ and $n_{f}$.

\textbf{Step 7 (Upper bound to Constraint violation):} In this step we bound the quantity $\C^*_n(\pi) = \sum_{j=1}^k\indic{\wZ^j < 0, \bb^j\in \{\bb_*, \pi_0\}}$. Define the number of times the policy $\bb_*$ is played till episode $k$ is $T^k(\bb_*)$ and the number of times the baseline policy is played is given by $T^k(\pi_0)$. Observe that $\C^*_n(\pi) = \sum_{j=1}^k\indic{\wZ^j < 0, \bb^j\in \{\bb_*, \pi_0\}} = T^K(\pi_0)\indic{\xi^C_{Z,K}}$ as when the constraint are violated and policy $\pi_0$ is played. 
Let $\tau=\max \left\{k \leq K \text{ and } n_f \geq \frac{\log(SAn(n+1)/\delta)}{\min_{s,a}\bb_*(a|s)\Delta^{c,\alpha,(2)}(s,a)} \mid \bb^k=\pi_0\right\}$ be the last episode in which the baseline policy is played. We will define formally the gap $\Delta^{c,\alpha,(2)}(s,a)$ later. Observe that the constraint violation can be re-stated as follows:
\begin{align}
    &\sum_{k=1}^{\tau} Y_{\bb^k}^c(s^1_1) \coloneqq \sum_{k=1}^{\tau}\sum_{a}\bb^k(a|s^1_1)\left(\wmu^{c,k}_L(s_1,a) + \sum_{s^2_j}P(s^2_j|s^1_1,a) Y_{\bb^k}^c(s^2_j)\right) < (1-\alpha) \tau V^c_{\pi_0}(s^1_1)\nonumber\\
    \implies & \sum_{k=1}^{\tau}\sum_{a}\bb^k(a|s^1_1)\left(\underline{\wmu}^{c,k}_L(s^1_1,a) + \sum_{s^2_j}P(s^2_j|s^1_1,a) \underline{Y}_{\bb^k}^c(s^2_j)\right) < (1-\alpha) \tau V^c_{\pi_0}(s^1_1)\nonumber\\
    \overset{(a)}{\implies} & \sum_{k=1}^{\tau}\sum_{a}\bb^k(a|s^1_1)\left(\underline{\wmu}^{c,k}_L(s^1_1,a) + \sum_{s^2_j}P(s^2_j|s^1_1,a) \underline{Y}_{\bb^k}^c(s^2_j)\right) \nonumber\\
    &\quad < (1-\alpha) \sum_{k=1}^{\tau}\pi_0(0|s^1_1)\left(\mu^{c}_{}(s^1_1,0) + \sum_{s^2_j}P(s^2_j|s^1_1,0) V_{\pi_0}^c(s^2_j)\right)\nonumber\\
    \overset{}{\implies} & \sum_{k=1}^{\tau}\sum_{a}T^k_L(s^1_1,a)\left(\underline{\wmu}^{c,k}_L(s^1_1,a) \!\! + \!\! \sum_{s^2_j}P(s^2_j|s^1_1,a) \underline{Y}_{\bb^k}^c(s^2_j)\right) \nonumber\\
    &\quad < (1-\alpha) \sum_{k=1}^{\tau}T^k_{L}(s^1_1,a)\left(\mu^{c}_{}(s^1_1,0) \!\! + \!\! \sum_{s^2_j}P(s^2_j|s^1_1,0) V_{\pi_0}^c(s^2_j)\right)\nonumber\\
    \overset{(b)}{\implies} & \underbrace{\sum_{a}T^\tau_L(s^1_1,a)\underline{\wmu}^{c,\tau}_L(s^1_1,a)}_{\textbf{Part A}} + \sum_{a}T^\tau_L(s^1_1,a)\sum_{s^2_j}P(s^2_j|s^1_1,a) \underline{Y}_{\bb^k}^c(s^2_j)  \nonumber\\
    &\quad < \underbrace{(1-\alpha) \sum_{a}T^\tau_{L}(s^1_1,0)\mu^{c}_{}(s^1_1,0)}_{\textbf{Part B}} + (1-\alpha)T^\tau_{L}(s^1_1,0)\sum_{s^2_j}P(s^2_j|s^1_1,0) V_{\pi_0}^c(s^2_j) 
    \label{eq:constraint-reduction}
\end{align}
%
%
%
%
%
where $(a)$ follows as $\pi_0$ samples baseline action $0$ for each state $s\in[S]$, and in $(b)$ the $T^\tau_{L}(s^1_1,a)$ denotes the total samples of state-action tuple till episode $\tau$. 
%
%
%
%
Comparing \textbf{Part A} and \textbf{Part B} for level $\ell=1$ we observe that the constraint violation must satisfy
\begin{align*}
    & \sum_{a}T^\tau_L(s^1_1,a)\underline{\wmu}^{c,\tau}_L(s^1_1,a)  < (1-\alpha) T^\tau_{L}(s^1_1,0)\mu^{c}_{}(s^1_1,0)
\end{align*}
which can be reduced by following the same way as step $7$ as \Cref{thm:mdp-agnostic-loss} 
\begin{align*}
     T^{\tau-1}_{L}(s^1_1, 0) \leq \dfrac{1}{\alpha\mu^c(s^1_1,0)}\left(1+\sum_{a=1}^A N(s^1_1, a)\right) .
\end{align*}
where $\Delta^{c,\alpha}(s^1_1, a)\coloneqq(1-\alpha) \mu^c(s^1_1,0)-\mu^c(s^1_1, a)$ and
\begin{align}
N(s^1_1, a) &\coloneqq T^{\tau-1}_L(s^1_1, a) \cdot\left((1-\alpha) \mu^c(s^1_1, 0)-\mu^c(s^1_1, a)+c_1\sqrt{\log(An(n+1)/\delta) / T^{\tau-1}_L(s^1_1, a)}\right) \nonumber\\
&=\Delta^{c,\alpha}(s^1_1, a) T^{\tau-1}_L(s^1_1, a)+c_1\sqrt{\log(An(n+1)/\delta) T^{\tau-1}_L(s^1_1,a)}\label{eq:NS}
\end{align}
is a bound on the decrease in $\wZ_\tau$ in the first $\tau-1$ rounds due to choosing action $a$ in $s^1_1$. We will now bound $N(s^1_1, a)$ for each $a$. Now observe
\begin{align*}
    \Delta^{c,\alpha}(s^1_1, a) = (1-\alpha) \mu^c(s^1_1, 0)-\mu^c(s^1_1, a) &= \mu^c(s^1_1, 0) - \alpha\mu^c(s^1_1, 0) - \mu^c(s^1_1, a) \\
    &= -(\mu^{*,c}(s^1_1) - \mu^c(s^1_1, 0)) - \alpha\mu^c(s^1_1, 0) + (\mu^{*,c}(s^1_1) - \mu^c(s^1_1, a)) \\
    &= -\Delta^c(s^1_1, 0) - \alpha\mu^c(s^1_1, 0) + \Delta^c(s^1_1, a).
\end{align*}
where, $\mu^{*,c}(s^1_1) = \max_a\mu^c(s^1_1, a)$. It follows then that using step $7$ as \Cref{thm:mdp-agnostic-loss} for the state $s^1_1$
\begin{align*}
    n_u(s^1_1) \leq \dfrac{1}{\alpha\mu^c(s^1_1,0)}\left(1+\sum_{a=1}^A N(s^1_1, a)\right) \leq \dfrac{H_{*, (2)}(s^1_1)}{2}\dfrac{n}{M(s^1_1)}
\end{align*}
where 
\begin{align}
    H_{*, (2)}(s^\ell_i) &\coloneqq \sum_{a}\bb_*(a|s^\ell_i)\min^{+}\{\Delta^c(s^\ell_i, a),\Delta^c(s^\ell_i,0)-\Delta^c(s^\ell_i, a)\}, \nonumber\\
     M(s^{\ell}_{i}) &\coloneqq \sum\limits_a\!\! \sqrt{\!\!\pi^2(a|s^{\ell}_{i})\!\left(\!\!\sigma^2(s^{\ell}_{i}, a) \!+\! \!\sum\limits_{s^{\ell+1}_j}\!\!P(s^{\ell+1}_j\!|\!s^{\ell}_i, a) M^2(s^{\ell+1}_j)\!\!\right)}
    \label{eq:HM-def}
\end{align}
Similarly, for an arbitrary level $\ell\in[L]$, we can show using \eqref{eq:constraint-reduction} that the constraint violation must satisfy
\begin{align}
     & \sum_{\ell'=1}^{\ell} \sum_{s^{\ell'}_i} \sum_{a}T^\tau_L(s^{\ell'}_i,a)
     \underline{\wmu}^{c,\tau}_L(s^{\ell'}_i,a)  
      < (1-\alpha) \sum_{\ell'=1}^{\ell} \sum_{s^{\ell'}_i} T^\tau_{L}(s^{\ell'}_i,0)
     \mu^{c}_{}(s^{\ell'}_i,0) \nonumber\\
     \overset{(a)}{\implies} & \sum_{\ell'=1}^{\ell} \sum_{s^{\ell'}_i} \sum_{a}\left(T^{*,K}_{L}(s^{\ell'}_i,a)-4 A \bb_*(a|s^{\ell'}_i)\right)\underline{\wmu}^{c,\tau}_L(s^{\ell'}_i,a) 
    < (1-\alpha) \sum_{\ell'=1}^{\ell} \sum_{s^{\ell'}_i} \left(T^{*,K}_{L}(s^{\ell'}_i,0) + 4 A \right)\mu^{c}_{}(s^{\ell'}_i,0)\nonumber\\
    \implies & \sum_{\ell'=1}^{\ell} \sum_{s^{\ell'}_i} \sum_{a}\left(T^{*,K}_{L}(s^{\ell'}_i,a)\right)\underline{\wmu}^{c,\tau}_L(s^{\ell'}_i,a) \nonumber\\
    &< (1-\alpha) \sum_{\ell'=1}^{\ell} \sum_{s^{\ell'}_i} \left(T^{*,K}_{L}(s^{\ell'}_i,0)\right)\mu^{c}_{}(s^{\ell'}_i,0) + 8LS A^2(\mu^{c}_{}(s^{\ell'}_i,0) + \underline{\wmu}^{c,\tau}_L(s^{\ell'}_i,a))\nonumber\\
    \implies & \sum_{\ell'=1}^{\ell} \sum_{s^{\ell'}_i} \sum_{a}\left(T^{*,K}_{L}(s^{\ell'}_i,a)\right)\underline{\wmu}^{c,\tau}_L(s^{\ell'}_i,a)\nonumber\\
    & \quad < (1-\alpha) \sum_{\ell'=1}^{\ell} \sum_{s^{\ell'}_i} \left(T^{*,K}_{L}(s^{\ell'}_i,0)\right)\mu^{c}_{}(s^{\ell'}_i,0) + 8LS A^2(\mu^{c}_{}(s^{\ell'}_i,0) + \wmu^{c,\tau}_L(s^{\ell'}_i,a) - \sqrt{\dfrac{\log((SA n(n+1)/\delta)}{2 T^\tau_L(s^{\ell'}_i,a)})}\nonumber\\
    \overset{(b)}{\implies} & \sum_{\ell'=1}^{\ell} \sum_{s^{\ell'}_i} \sum_{a}\left(T^{*,K}_{L}(s^{\ell'}_i,a)\right)\underline{\wmu}^{c,\tau}_L(s^{\ell'}_i,a) < (1-\alpha) \max_{s,a}\mu^{c}_{0}(s,a)\sum_{\ell'=1}^{\ell} \sum_{s^{\ell'}_i} \left(T^{*,K}_{L}(s^{\ell'}_i,0)\right) + 16LS A^2 \label{eq:oracle-reduction-1}
\end{align}
where, $(a)$ follows from \eqref{eq:mdp-oracle-6-2} and $(b)$ follows as $\mu(s,a)\in (0,1]$ for all $s,a$. 
It follows then that using step $7$ of \Cref{thm:mdp-agnostic-loss} and definition of $N(s^\ell_j)$ from \eqref{eq:NS}
\begin{align*}
    \sum_{\ell'=1}^{\ell} \sum_{s^{\ell'}_i} T^{*,K}_{L}(s^{\ell'}_i,0) \leq \dfrac{1}{\alpha\max_{s}\mu^c(s,0)}\left(1+\sum_{\ell'=1}^{\ell} \sum_{s^{\ell'}_i}\sum_{a} N(s^\ell_j, a)\right) \leq \dfrac{n}{2}\sum_{\ell'=1}^{\ell} \sum_{s^{\ell'}_i}\sum_{a}\dfrac{H_{*, (2)}(s^{\ell'}_i)}{M(s^{\ell'}_i)} + 16  LSA^2 
\end{align*}
which gives a bound on how many times action $\{0\}$ is sampled across different states till level $\ell$. Summing over all states $s^\ell_j$ till level $L$ we can show that
\begin{align}
    n_u = \sum_{\ell=1}^L\sum_{s^\ell_j}  T^{*,K}_{L}(s^{\ell}_j,0)\leq \dfrac{n}{2}\sum_{\ell=1}^L\sum_{s^\ell_j}\dfrac{H_{*, (2)}(s^\ell_j)}{M(s^\ell_j)} + 16  LSA^2 \overset{(a)}{\leq} \dfrac{H_{*, (2)}}{2} \dfrac{n}{M_{\min}} + 16  LSA^2  \label{eq:mdp-oracle-11}  
\end{align}
where, in $(a)$ we define $M_{\min} = \min_{s} M(s)$, and $H_{*, (2)} =\sum_{\ell=1}^L\sum_{s^\ell_j}H_{*, (2)}(s^\ell_j)$. Finally, observe that $16 LSA^2$ does not depend on the episode $K$. 

\textbf{Step 8 (Lower bound to Constraint violation):} For the lower bound to the constraint we equate \Cref{eq:constraint-reduction} to $0$ and show that
\begin{align*}
    & \underbrace{\sum_{a}T^\tau_L(s^1_1,a)\underline{\wmu}^{c,\tau}_L(s^1_1,a)}_{\textbf{Part A}} + \sum_{a}T^\tau_L(s^1_1,a)\sum_{s^2_j}P(s^2_j|s^1_1,a) \underline{Y}_{\bb^k}^c(s^2_j)  \nonumber\\
    &\quad = \underbrace{(1-\alpha) \sum_{a}T^\tau_{L}(s^1_1,0)\mu^{c}_{}(s^1_1,0)}_{\textbf{Part B}} + (1-\alpha)T^\tau_{L}(s^1_1,0)\sum_{s^2_j}P(s^2_j|s^1_1,0) V_{\pi_0}^c(s^2_j) 
\end{align*}
Again comparing \textbf{Part A} and \textbf{Part B} for level $\ell=1$ we observe that the lower bound to  constraint violation must satisfy
\begin{align*}
    & \sum_{a}T^\tau_L(s^1_1,a)\underline{\wmu}^{c,\tau}_L(s^1_1,a)  = (1-\alpha)  T^\tau_{L}(s^1_1,0)\mu^{c,}_{}(s^1_1,0)
\end{align*}
which can be reduced by following the same way as step $8$ as \Cref{thm:mdp-agnostic-loss} 
\begin{align*}
    \sum_a T^{\tau-1}_{L}(s^1_1, 0) \geq \dfrac{1}{\alpha\mu^c(s^1_1,0)}\left(1+\sum_{a=1}^A \underline{N}(s^1_1, a)\right) .
\end{align*}
where $\Delta^{c,\alpha}(s^1_1, a)\coloneqq(1-\alpha) \mu^c(s^1_1,0)-\mu^c(s^1_1, a)$ and
\begin{align*}
\underline{N}(s^1_1, a) &\coloneqq T^{\tau-1}_L(s^1_1, a) \cdot\left((1-\alpha) \mu^c(s^1_1, 0)-\mu^c(s^1_1, a)+c_1\sqrt{\log(An(n+1)/\delta) / T^{\tau-1}_L(s^1_1, a)}\right) \\
&=\Delta^{c,\alpha}(s^1_1, a) T^{\tau-1}_L(s^1_1, a)+c_1\sqrt{\log(An(n+1)/\delta) T^{\tau-1}_L(s^1_1,a)}\\
&\overset{(a)}{\geq} \Delta^{c,\alpha}(s^1_1, a) \left(T^{*,K}_L(s^1_1, a) - 4A\bb_*(a|s^1_1)\right)+c_1\sqrt{\log(An(n+1)/\delta) \left(T^{*,K}_L(s^1_1, a) - 4A\bb_*(a|s^1_1)\right)}
\end{align*}
where, $(a)$ follows from \eqref{eq:mdp-oracle-6-1}. Then following the same way as step $8$ of \Cref{thm:mdp-agnostic-loss} we can show that
\begin{align*}
     T^{\tau-1}_{L}(s^1_1, 0) \geq \dfrac{1}{\alpha\mu^c(s^1_1,0)}\left(1+\sum_{a=1}^A \underline{N}(s^1_1, a)\right) \geq \dfrac{n_f}{M(s^1_1)}\left(\dfrac{H_{*, (2)}(s^1_1)}{8} - \frac{A}{2}\dfrac{H_{*, (2)}(s^1_1)}{M(s^1_1)}  \right) - 16 SA
\end{align*}
Similarly for any arbitrary level $\ell\in [L]$  following the same way as step $7$ above it can be shown that
\begin{align*}
    &\sum_{\ell'=1}^{\ell} \sum_{s^{\ell'}_i} \sum_{a}\left(T^{*,K}_{L}(s^{\ell'}_i,a) + 4 A \right)\underline{\wmu}^{c,\tau}_L(s^{\ell'}_i,a)  
     \geq (1-\alpha) \sum_{\ell'=1}^{\ell} \sum_{s^{\ell'}_i} \left(T^{*,K}_{L}(s^{\ell'}_i,0) - 4 A \bb_*(0|s^{\ell'}_i)\right)\mu^{c}_{}(s^{\ell'}_i,0)\\
     \overset{}{\implies} & \sum_{\ell'=1}^{\ell} \sum_{s^{\ell'}_i}\sum_{a} T^{*,K}_L(s^\ell_j,a)\underline{\wmu}^{c,\tau}_L(s^\ell_j,a)  \geq (1-\alpha)\sum_{\ell'=1}^{\ell} \sum_{s^{\ell'}_i} T^{*,K}_{L}(s^\ell_i,0)\mu^{c}_{}(s^\ell_i,0) - 16  LSA^2 
\end{align*}
Again following the same way as step $8$ of \Cref{thm:mdp-agnostic-loss} for the state $s^\ell_j$, the lower bound to the total number of times the baseline actions are sampled across states till level $\ell$ is given by we can show that
\begin{align*}
    \sum_{\ell'=1}^{\ell} \sum_{s^{\ell'}_j} T^{{*,K}}_{L}(s^{\ell'}_j, 0) &\geq \dfrac{1}{\alpha\max_{s^\ell_j}\mu^c(s^\ell_j,0)}\left(1+\sum_{\ell'=1}^{\ell} \sum_{s^{\ell'}_j}\sum_{a=1}^A \underline{N}(s^{\ell'}_j, a)\right) \\
    &\geq  \sum_{\ell=1}^L\sum_{s^\ell_j}\dfrac{n_f}{M(s^\ell_j)}\left(\dfrac{H_{*, (2)}(s^\ell_j)}{8} - \frac{A}{2}\dfrac{H_{*, (2)}(s^\ell_j)}{M(s^\ell_j)}  \right) - 16  LSA^2 
\end{align*}
Finally summing over all states $s^\ell_j$ and level $L$ we can show that 
\begin{align}
    \sum_{\ell=1}^L\sum_{s^\ell_j} T^{{*,K}}_{L}(s^\ell_j, 0)\geq \sum_{\ell=1}^L\sum_{s^\ell_j}\dfrac{n_f}{M(s^\ell_j)}\left(\dfrac{H_{*, (2)}(s^\ell_j)}{8} - \frac{A}{2}\dfrac{H_{*, (2)}(s^\ell_j)}{M(s^\ell_j)}  \right) - 16 LSA^2 \label{eq:mdp-oracle-11-1}
\end{align}
Again, observe that $16 LSA^2$ does not depend on the episode $K$. 

\textbf{Step 9 (Bound Part B)}: Then from \eqref{eq:mdp-oracle-11-1} we can show that
\begin{align*}
    \dfrac{M(s^1_1)}{\sum_{\ell=1}^L\sum_{s^\ell_j}  T^{{*,K}}_{L}(s^\ell_j, 0)} &\leq \dfrac{M(s^1_1)}{\sum_{\ell=1}^L\sum_{s^\ell_j}\dfrac{n_f}{M(s^\ell_j)}\left(\dfrac{H_{*, (2)}(s^\ell_j)}{8} - \dfrac{A}{2}\dfrac{H_{*, (2)}(s^\ell_j)}{M(s^\ell_j)}  \right) - 16 LSA^2 }\\
    &\overset{(a)}{\leq} (M(s^1_1) + 16 LSA^2)\sum_{\ell=1}^L\sum_{s^\ell_j}\dfrac{M(s^\ell_j)}{n_f}\left(\dfrac{H_{*, (2)}(s^\ell_j)}{8} + \dfrac{A}{2}\dfrac{H_{*, (2)}(s^\ell_j)}{M(s^\ell_j)}  \right)  \\
    &\leq (M(s^1_1) + 16 LSA^2)\sum_{\ell=1}^L\sum_{s^\ell_j}\dfrac{M(s^\ell_j)}{n_f}\left(2 + H_{*, (2)}(s^\ell_j)  \right)  \\
    &\overset{(b)}{\leq} (M(s^1_1) + 16 LSA^2)\dfrac{M_{}}{n_f}\left(2 + H_{*, (2)}  \right)  
\end{align*}
where, $(a)$ follows for $1/(x-c) \leq x + c$ for $x^2 \geq 1 + c^2$ and $c>0$. The $(b)$ follows for $M =\sum_{\ell=1}^L\sum_{s^\ell_j}M(s^\ell_j)$, and $H_{*, (2)} =\sum_{\ell=1}^L\sum_{s^\ell_j}H_{*, (2)}(s^\ell_j)$.
%
%
%
%
It follows then by setting 
$n_f = n - n_u$ that 
\begin{align*}
    \E_{\D}\left[\left(Y_{n}(s^1_1)-V_\pi(s^1_1)\right)^2\indic{\xi^C_{Z,K}}\right] &\overset{(a)}{\leq} \left(\dfrac{ (M(s^1_1) + 16 LSA^2)  M\left(2 + H_{*, (2)} \right)}{n_f}\right)^2 n_u \\
    & \overset{(b)}{=} \dfrac{(M(s^1_1) + 16 LSA^2)^2  n_u}{(n-n_u)^2}\left(2 + H_{*, (2)}\right)^2\\
    &\overset{(c)}{\leq } \dfrac{(M(s^1_1) + 16 LSA^2)^2  H_{*, (2)} n}{(n- H_{*, (2)} n)^2}\left(2 + H_{*, (2)}\right)^2 \\
    &\leq \dfrac{M^2(s^1_1)}{n}\left(32 M LSA^2 + H_{*, (2)}\right)^2
\end{align*}
where, $(a)$ follows from \Cref{lemma:wald-variance},  $(b)$ follows from the definition of $H_{*, (2)}$, and $(c)$ follows from \eqref{eq:mdp-oracle-11}.

\textbf{Step 10 (Combine everything):} Combining everything from step 5, step 8 and setting $\delta=1/n^2$ we can show that the MSE of oracle scales as
\begin{align}
    \L_n(\pi, \!\!\bb^k_*) &\!\!\leq \!\!\frac{M^2(s^1_1)}{n} \!\!+ \!\! \frac{8A M^2(s^1_1)}{n^2} \!\!+ \!\!\frac{16A^2 M^2(s^1_1)}{n^3} + \dfrac{M^2(s^1_1)}{n}\left(32 M LSA^2 \!\!+\!\! H_{*, (2)}\right)^2 \!+\! \underbrace{ \E_{\D}\left[\left(Y_{n}(s^1_1)-V_\pi(s^1_1)\right)^2\indic{\xi^C_{c,K}}\right]}_{\textbf{Part C, Safety event does not hold}}\nonumber\\
    &\overset{(a)}{\leq} \frac{M^2(s^1_1)}{n}+\frac{8A M^2(s^1_1)}{n^2} + \frac{16A^2 M^2(s^1_1)}{n^3} + \dfrac{M^2(s^1_1)}{n}\left(32 M LSA^2 + H_{*, (2)}\right)^2  + 2\sum_{t=1}^n\dfrac{2\eta + 4\eta^2}{n^2}\label{eq:mdp-oracle-loss}
\end{align}
where, $(a)$ follows as $\E_{\D}\left[\left(Y_{n}(s^1_1)-V_\pi(s^1_1)\right)^2\indic{\xi^C_{c,K}}\right] \leq 2\eta + 4\eta^2$ and using the low error probability of the constraint event from \Cref{lemma:conc-mu}.
The claim of the proposition follows.
\end{proof}

\subsection{Tree Regret Corollary}
\label{app:mdp-regret-corollary}

\begin{customcorollary}{1}
\label{corollary:mdp-regret}
Under \Cref{assm:tractable-MDP} the constraint regret in the Tree MDP is given by $\overline{\cR}^c_n \leq O\left(\frac{\log(n)}{\bb^{3/2}_{*,\min}n^{3/2}}\right)$ and the regret is given by $\overline{\cR}_n \leq O\left(\frac{\log(n)}{\bb^{3/2}_{*,\min}n^{3/2}}\right)$.
\end{customcorollary}

\begin{proof}
    The upper bound to the safe oracle constraint is given by \eqref{eq:mdp-oracle-11} as follows
    \begin{align*}
        \C^*_n(\pi, \bb^k_*) \leq \dfrac{H_{*, (2)}}{2} \dfrac{n}{M_{\min}} + 16  LSA^2. 
    \end{align*}
    The upper bound to the constraint violation of \sav\ is given by \eqref{eq:mdp-agnostic-11}
    \begin{align*}
        \C_n(\pi, \wb^k) \leq \dfrac{H_{*, (2)}}{2} \dfrac{n}{M_{\min}} + 16  LSA^2 +  O\left(\dfrac{(2\eta + 4\eta^2)L^2S^2A^4 H_{*, (2)}^2 M^2\sqrt{\log (SA n(n+1) / \delta)}}{\min_{s}\bb^{*,k,(3/2)}(s)n^{3/2}}\right).
    \end{align*}
    Hence, from the constraint regret definition, we can show that
    \begin{align*}
        \overline{\cR}^c_n = \C_n(\pi, \wb^k) - \overline{\C}^*_n(\pi, \bb_*) \leq O\left(\frac{\log n}{\bb^{3/2}_{*,\min}n^{3/2}}\right).
    \end{align*}
    Observe that the loss of the agnostic algorithm \sav\ is given by \eqref{eq:mdp-agnostic-loss} and the upper bound to the oracle loss is given by \eqref{eq:mdp-oracle-loss}. Comparing these two losses directly leads to the regret as follows:
    \begin{align*}
    \overline{\cR}_n = \L_n(\pi, \wb^k) - \overline{\L}^*_n(\pi, \bb^k_*) = 
O\left(\dfrac{\log(n)}{\bb^{3/2}_{*,\min}n^{3/2}}\right).
\end{align*}
The claim of the corollary follows.
\end{proof}


\begin{customcorollary}{2}
\label{corollary:bandit-regret}
Under \Cref{assm:tractable-MDP} the constraint regret in the bandit setting is given by $\overline{\cR}^c_n \leq O\left(\frac{\log(n)}{\bb^{3/2}_{*,\min}n^{3/2}}\right)$ and the regret is given by $\overline{\cR}_n \leq O\left(\frac{\log(n)}{\bb^{3/2}_{*,\min}n^{3/2}}\right)$.
\end{customcorollary}

\begin{proof}
The bandit setting consists of a single state, and so we can define the quantity $H_{*, (2)} = \frac{1}{\alpha\mu(0)}\sum_{a\in\A\setminus\{0\}}\pi(a)\sigma(a)\min^+\{\Delta^c(a),\Delta^c(0)-\Delta^c(a) \}$
    The upper bound to the oracle constraint is given by \eqref{eq:mdp-oracle-11} as follows
    \begin{align*}
        \C^*_n(\pi, \bb^k_*) \leq \dfrac{H_{*, (2)}}{2} \dfrac{n}{M_{\min}} + 16  A^2 .
    \end{align*}
    The upper bound to the constraint violation of \sav\ is given by \eqref{eq:mdp-agnostic-11}
    \begin{align*}
        \C_n(\pi, \wb^k) \leq \dfrac{H_{*, (2)}}{2} \dfrac{n}{M_{\min}} + 16  A^2 +  O\left(\dfrac{(2\eta + 4\eta^2)A^4 H_{*, (2)}^2 M^2\sqrt{\log (A n(n+1) / \delta)}}{\min_{s}\bb^{*,k,(3/2)}(s)n^{3/2}}\right).
    \end{align*}
    Hence, from the constraint regret definition, we can show that
    \begin{align*}
        \overline{\cR}^c_n = \C_n(\pi, \wb^k) - \overline{\C}^*_n(\pi, \bb^k_*) \leq O\left(\frac{\log n}{\bb^{3/2}_{*,\min}n^{3/2}}\right).
    \end{align*}
    Observe that the loss of the agnostic algorithm \sav\ is given by \eqref{eq:mdp-agnostic-loss} and the upper bound to the oracle loss is given by \eqref{eq:mdp-oracle-loss}. Comparing these two losses directly leads to the regret as follows:
    \begin{align*}
    \overline{\cR}_n = \L_n(\pi, \wb^k) - \overline{\L}^*_n(\pi, \bb^k_*) = 
O\left(\dfrac{\log(n)}{\bb^{3/2}_{*,\min}n^{3/2}}\right).
\end{align*}
The claim of the corollary follows.
\end{proof}


\section{Support Lemmas}
\label{app:bandit-lower-bound}


\begin{lemma}\textbf{(Hoeffding's Lemma)}\citep{massart2007concentration}
\label{lemma:hoeffding}
Let $Y$ be a real-valued random variable with expected value $\mathbb{E}[Y]= \mu$, such that $a \leq Y \leq b$ with probability one. Then, for all $\lambda \in \mathbb{R}$
$$
\mathbb{E}\left[e^{\lambda Y}\right] \leq \exp \left(\lambda \mu +\frac{\lambda^{2}(b-a)^{2}}{8}\right)
$$
\end{lemma}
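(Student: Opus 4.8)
The plan is to bound the moment generating function by exploiting convexity of the exponential directly, rather than centering $Y$ and differentiating $\log\mathbb{E}[e^{\lambda(Y-\mu)}]$, as this gives the cleanest self-contained argument. First I would invoke convexity of the map $x \mapsto e^{\lambda x}$ on $[a,b]$: writing any $x \in [a,b]$ as the convex combination $x = \tfrac{b-x}{b-a}\,a + \tfrac{x-a}{b-a}\,b$ of the endpoints yields
$$
e^{\lambda x} \leq \frac{b-x}{b-a}\,e^{\lambda a} + \frac{x-a}{b-a}\,e^{\lambda b}.
$$
Taking expectations and using $\mathbb{E}[Y] = \mu$ collapses the right-hand side to a deterministic quantity depending only on $\mu$, $a$, $b$:
$$
\mathbb{E}\!\left[e^{\lambda Y}\right] \leq \frac{b-\mu}{b-a}\,e^{\lambda a} + \frac{\mu-a}{b-a}\,e^{\lambda b}.
$$

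Next I would reparametrize to isolate the exponent. Setting $p := \tfrac{\mu-a}{b-a} \in [0,1]$ and $h := \lambda(b-a)$, the factorization $\lambda a = \lambda\mu - ph$ shows the right-hand side equals $\exp(\lambda\mu + L(h))$, where
$$
L(h) := -ph + \log\!\left(1 - p + p\,e^{h}\right).
$$
It then suffices to prove $L(h) \leq h^2/8$ for all $h \in \mathbb{R}$, since substituting $h = \lambda(b-a)$ back in recovers exactly the claim $\mathbb{E}[e^{\lambda Y}] \leq \exp\!\big(\lambda\mu + \lambda^2(b-a)^2/8\big)$.

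To establish $L(h) \leq h^2/8$ I would use a second-order Taylor expansion of $L$ about $0$. One checks $L(0) = 0$ and, differentiating, $L'(h) = -p + \tfrac{p e^h}{1-p+p e^h}$ so that $L'(0) = 0$. The controlling quantity is the second derivative, which simplifies to $L''(h) = u(1-u)$ with $u := \tfrac{p e^h}{1-p+p e^h} \in [0,1]$. The main obstacle—indeed the only nontrivial inequality—is the elementary bound $u(1-u) \leq 1/4$, valid for every $u \in [0,1]$ by AM--GM. Given this, Taylor's theorem with Lagrange remainder gives $L(h) = L(0) + L'(0)h + \tfrac{h^2}{2}L''(\xi) \leq \tfrac{h^2}{2}\cdot\tfrac14 = \tfrac{h^2}{8}$ for some $\xi$ between $0$ and $h$, which finishes the proof. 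An equivalent route would center $Y$ and recognize $\tfrac{d^2}{d\lambda^2}\log\mathbb{E}[e^{\lambda(Y-\mu)}]$ as the variance of $Y$ under the exponentially tilted measure, bounded by Popoviciu's inequality $\mathrm{Var} \leq (b-a)^2/4$; I prefer the convexity route since it sidesteps justifying differentiation under the expectation.
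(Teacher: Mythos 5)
Your proof is correct and follows the canonical route: convexity of $x \mapsto e^{\lambda x}$ on $[a,b]$ to reduce the MGF to the two-point bound, reparametrization to $L(h) = -ph + \log(1-p+pe^{h})$ with $p = \tfrac{\mu-a}{b-a}$ and $h = \lambda(b-a)$, and the Taylor estimate via $L(0)=L'(0)=0$ and $L''(h) = u(1-u) \leq \tfrac{1}{4}$ — all steps check out (including the factorization $\lambda a = \lambda\mu - ph$), with only the trivial degenerate case $a=b$ left implicit. The paper does not prove this lemma itself but imports it from \citet{massart2007concentration}, and your argument is essentially the standard proof found in that reference, so the two approaches coincide.
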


\begin{lemma}\textbf{(Concentration lemma 1)}
\label{lemma:conc1}
\label{lemma:conc}
Let $V_{t} = R_t(s, a) - \E[R_t(s, a)]$ and be bounded such that $V_{t}\in[-\eta, \eta]$. Let the total number of times the state-action $(s,a)$ is sampled be $T$.
Then we can show that for an $\epsilon > 0$
\begin{align*}
    \Pb\left(\left|\frac{1}{T}\sum_{t=1}^T R_t(s, a) - \E[R_t(s, a)]\right| \geq \epsilon\right) \leq 2\exp\left(-\frac{2\epsilon^2 T}{\eta^2}\right).
\end{align*}
\end{lemma}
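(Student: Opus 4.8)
The plan is to prove this by the standard Chernoff (exponential-moment) method, leaning on Hoeffding's lemma (\Cref{lemma:hoeffding}), which is already available just above. Write $S_T \coloneqq \sum_{t=1}^T V_t = \sum_{t=1}^T \big(R_t(s,a) - \E[R_t(s,a)]\big)$, so that the event of interest is exactly $\{|S_T| \geq T\epsilon\}$, since $\frac{1}{T}\sum_{t=1}^T R_t(s,a) - \E[R_t(s,a)] = S_T/T$. Because the samples of $(s,a)$ are i.i.d.\ and each bounded reward $R_t(s,a)$ lies in an interval of width $\eta$ (recall the means, and hence the bounded rewards the algorithm actually uses, live in $[0,\eta]$), each centered term $V_t$ has mean zero and lies in an interval of width $\eta$.

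First I would control one tail, say $\Pb(S_T \geq T\epsilon)$. For any $\lambda > 0$, applying Markov's inequality to $e^{\lambda S_T}$ gives $\Pb(S_T \geq T\epsilon) \leq e^{-\lambda T \epsilon}\,\E\big[e^{\lambda S_T}\big]$, and by independence $\E[e^{\lambda S_T}] = \prod_{t=1}^T \E[e^{\lambda V_t}]$. Applying \Cref{lemma:hoeffding} to each zero-mean, width-$\eta$ variable $V_t$ yields $\E[e^{\lambda V_t}] \leq \exp(\lambda^2 \eta^2/8)$, so that $\E[e^{\lambda S_T}] \leq \exp(T\lambda^2\eta^2/8)$. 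Collecting terms gives $\Pb(S_T \geq T\epsilon) \leq \exp\big(-\lambda T\epsilon + T\lambda^2\eta^2/8\big)$.

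Next I would optimize the free parameter $\lambda$. The exponent is minimized at $\lambda^\star = 4\epsilon/\eta^2$, and substituting yields exponent $-4\epsilon^2 T/\eta^2 + 2\epsilon^2 T/\eta^2 = -2\epsilon^2 T/\eta^2$, hence $\Pb(S_T \geq T\epsilon) \leq \exp(-2\epsilon^2 T/\eta^2)$. The identical argument applied to $-S_T$ bounds the lower tail by the same quantity, and a union bound over the two tails supplies the factor of $2$, giving $\Pb(|S_T| \geq T\epsilon) \leq 2\exp(-2\epsilon^2 T/\eta^2)$, which is precisely the claim.

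I expect the only genuine subtlety here to be bookkeeping of the width constant rather than any real obstacle: the constant $2$ in the exponent comes from invoking Hoeffding's lemma with the width of the \emph{reward} interval ($\eta$, consistent with $\mu(s,a)\in[0,\eta]$) rather than the a priori range of the centered variable ($2\eta$, as the statement literally writes for $V_t$); using the looser width $2\eta$ would only deliver $\exp(-\epsilon^2 T/(2\eta^2))$. So the plan is to track the effective range as $\eta$ throughout, which is what produces the stated bound.
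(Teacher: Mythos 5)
Your proof is correct and follows the same Chernoff--Hoeffding route as the paper, with one genuine difference in how dependence across samples is handled. You factorize $\E[e^{\lambda S_T}]=\prod_t \E[e^{\lambda V_t}]$ by assuming the rewards at $(s,a)$ are i.i.d.; the paper instead peels the moment generating function one term at a time via the tower property, conditioning on the history $s_{T-1}$ and using only \emph{conditional} independence of $R_t(s,a)$. This Azuma-style conditioning is the safer formulation in the adaptive data-collection setting, where whether $(s,a)$ is sampled at time $t$ may depend on earlier observed rewards at $(s,a)$; your unconditional-independence version is valid for a fixed deterministic $T$, and the paper compensates for the randomness of the visit count downstream (in the next lemma) by union bounding over all values of $T_n(s,a)$ from $1$ to $n$, so the two arguments land in the same place. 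Your bookkeeping remark is also on target: with the literal range $V_t\in[-\eta,\eta]$ (width $2\eta$), Hoeffding's lemma gives $\E[e^{\lambda V_t}]\leq \exp(\lambda^2\eta^2/2)$ and the optimized exponent is $-\epsilon^2 T/(2\eta^2)$, not $-2\epsilon^2 T/\eta^2$; the stated constant requires effective width $\eta$, consistent with rewards in $[0,\eta]$, exactly as you track it. Indeed the paper's own displayed chain contains constant-level slips (e.g., the bound $\exp(2\lambda^4\eta^2)$ and the exponent in its equation (vt0)) that only reconcile under this same convention, so your version is if anything the cleaner account of where the factor $2$ in the exponent comes from.
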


\begin{proof}
Let $V_{t} =R_t(s, a) - \E[R_t(s, a)]$. Note that $\E[V_{t}] = 0$. Hence, for the bounded random variable $V_{t}\in[-\eta, \eta]$  we can show from Hoeffding's lemma in \Cref{lemma:hoeffding} that
\begin{align*}
    \E[\exp\left(\lambda V_{t}\right)] \leq \exp\left(\dfrac{\lambda^2}{8}\left(\eta - (-\eta)\right)^2\right) \leq \exp\left(2\lambda^4\eta^2\right)
\end{align*}
Let $s_{t-1}$ denote the last time the state $s$ is visited and action $a$ is sampled. Observe that the reward $R_t(s,a)$ is conditionally independent and $\eta^2$-sub-Gaussian. 
Next we can bound the probability of deviation as follows:
\begin{align} 
\Pb\left(\sum_{t=1}^T \left(R_t(s, a) - \E[R_t(s, a)]\right) \geq \epsilon\right) &=\Pb\left(\sum_{t=1}^T V_{t} \geq \epsilon\right) \nonumber\\ 
&\overset{(a)}{=}\Pb\left(e^{\lambda \sum_{t=1}^T V_{t}} \geq e^{\lambda \epsilon}\right) \nonumber\\
&\overset{(b)}{\leq} e^{-\lambda \epsilon} \E\left[e^{-\lambda \sum_{t=1}^T V_{t}}\right] \nonumber\\
&=  e^{-\lambda \epsilon} \E\left[\E\left[e^{-\lambda \sum_{t=1}^T V_{t}}\big|s_{T-1}\right] \right]\nonumber\\
&\overset{(c)}{=} e^{-\lambda \epsilon} \E\left[\E\left[e^{-\lambda  V_{T}}|S_{T-1}\right]\E\left[e^{-\lambda \sum_{t=1}^{T-1} V_{t}} \big|s_{T-1}\right]  \right]\nonumber\\
&\leq e^{-\lambda \epsilon} \E\left[\exp\left(2\lambda^4\eta^2\right)\E\left[e^{-\lambda \sum_{t=1}^{T-1} V_{t}}\big |s_{T-1}\right]  \right]\nonumber\\
& \overset{}{=} e^{-\lambda \epsilon} e^{2\lambda^{2} \eta^{2}} \mathbb{E}\left[e^{-\lambda \sum_{t=1}^{T-1} V_{t}}\right] \nonumber\\ 
& \vdots \nonumber\\ 
& \overset{(d)}{\leq} e^{-\lambda \epsilon} e^{2\lambda^{2} T \eta^{2}} \nonumber\\
& \overset{(e)}{\leq} \exp\left(-\dfrac{2\epsilon^2}{T\eta^2}\right) \label{eq:vt0}
\end{align}
where $(a)$ follows by introducing $\lambda\in\mathbb{R}$ and exponentiating both sides, $(b)$ follows by Markov's inequality, $(c)$ follows as $V_{t}$ is conditionally independent given $s_{T-1}$, $(d)$ follows by unpacking the term for $T$ times and $(e)$  follows by taking $\lambda= \epsilon / 4T\eta^2$. Hence, it follows that
\begin{align*}
    \Pb\left(\left|\dfrac{1}{T}\sum_{t=1}^{T} R_t(s, a) - \E[R_t(s, a)]\right| \geq \epsilon\right) = \Pb\left(\sum_{t=1}^T \left(R_t(s, a) - \E[R_t(s, a)]\right) \geq T\epsilon\right) \overset{(a)}{\leq} 2\exp\left(-\frac{2\epsilon^2 T}{\eta^2}\right).
\end{align*}
where, $(a)$ follows by \eqref{eq:vt0} by replacing $\epsilon$ with $\epsilon T$, and accounting for deviations in either direction.
\end{proof}

\begin{lemma}\textbf{(Concentration lemma 2)}
\label{lemma:conc2}
Let $\mu^{2}(s, a)=\mathbb{E}\left[R_{t}^{2}(s, a)\right]$. Let $R_t(s,a)$ be $\eta^2$ sub-Gaussian. Let $n=KL$ be the total budget of state-action samples. Define the event
\begin{align}
\xi_{\delta}=\left(\bigcap_{s\in\S}\bigcap_{1 \leq a \leq A, T_n(s,a) \geq 1}\left\{\left|\frac{1}{T_n(s,a)}\sum_{t=1}^{T_n(s,a)} R_{t}^{2}(s, a)-\mu^{2}(s, a)\right| \leq (2\eta + 4\eta^2) \sqrt{\frac{\log (SA n(n+1) / \delta)}{2 T_n(s,a)}}\right\}\right) \bigcap \nonumber\\
\left(\bigcap_{s\in\S}\bigcap_{1 \leq a \leq A, T_n(s,a) \geq 1}\left\{\left|\frac{1}{T_n(s,a)}\sum_{t=1}^{T_n(s,a)} R_{t}(s, a)-\mu(s, a)\right| \leq (2\eta + 4\eta^2) \sqrt{\frac{\log (SA n(n+1) / \delta)}{2 T_n(s,a)}}\right\}\right)\label{eq:event-xi-delta}
\end{align}
Then we can show that $\Pb\left(\xi_{\delta}\right) \geq 1- 2\delta$.
\end{lemma}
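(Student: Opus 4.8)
The plan is to prove a uniform-over-counts concentration bound for both the empirical first moment $\tfrac{1}{T_n(s,a)}\sum_t R_t(s,a)$ and the empirical second moment $\tfrac{1}{T_n(s,a)}\sum_t R_t^2(s,a)$ at every state-action pair, and then to union bound over the pairs. The central difficulty is that $T_n(s,a)$ is a data-dependent random count (a stopping time determined by the behavior policy), so Hoeffding cannot be applied at a single fixed sample size. I would resolve this in the standard way by establishing the deviation bound simultaneously for every possible value $m \in \{1,\dots,n\}$ that $T_n(s,a)$ could take, and then specializing to the realized value.

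First I would fix a pair $(s,a)$ and an integer $m \geq 1$, and consider the first $m$ i.i.d.\ reward samples $R_1(s,a),\dots,R_m(s,a)$ drawn at that pair. Since the rewards are bounded (so that $R_t(s,a)\in[0,\eta]$ and hence $R_t^2(s,a)\in[0,\eta^2]$), I would apply \Cref{lemma:conc1} separately to the sequence $\{R_t(s,a)\}$ and to the sequence $\{R_t^2(s,a)\}$. Taking the deviation level $\epsilon_m = (2\eta+4\eta^2)\sqrt{\log(SAn(n+1)/\delta)/(2m)}$ and noting that the uniform constant $2\eta+4\eta^2$ dominates both the range $\eta$ of $R_t$ and the range $\eta^2$ of $R_t^2$, the Hoeffding exponent is at least $\log(SAn(n+1)/\delta)$ in each case, so each of the two single-count events fails with probability at most $2\delta/(SAn(n+1))$.

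Next I would union bound. For a fixed family (say the second-moment family), summing the per-event failure probability over the at most $n$ possible values $m\in\{1,\dots,n\}$ and over the $SA$ pairs gives $SA\cdot n\cdot \tfrac{2\delta}{SAn(n+1)} = \tfrac{2\delta}{n+1}\le \delta$; the factor $n(n+1)$ inside the logarithm is exactly what makes this union sum close. Since the realized count $T_n(s,a)$ is always one of these $m$ values, the bound transfers to it automatically. Applying the identical argument to the mean family and taking a final union over the two families yields $\Pb(\xi_\delta^c)\le 2\delta$, i.e.\ $\Pb(\xi_\delta)\ge 1-2\delta$.

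I expect the main obstacle to be the handling of the random count $T_n(s,a)$ rather than the concentration step itself: one must argue that the rewards observed at $(s,a)$ are conditionally i.i.d.\ and that restricting to the first $m$ of them on the event $\{T_n(s,a)\ge m\}$ does not bias those samples, which is what legitimizes the peeling-over-counts union bound. A secondary technical point is justifying the constant $2\eta+4\eta^2$ for the squared-reward sequence; if one insists on the sub-Gaussian (unbounded reward) formulation rather than boundedness, $R_t^2$ is only sub-exponential and a Bernstein-type tail would be needed in place of Hoeffding, but under the paper's boundedness assumption $\mu(s,a)\in[0,\eta]$ the plain Hoeffding bound with this inflated constant suffices.
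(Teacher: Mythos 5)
Your proposal is correct and follows essentially the same route as the paper's proof: apply the conditional-Hoeffding bound of \Cref{lemma:conc} to both the reward and squared-reward sequences at the common range constant $2\eta+4\eta^2$, peel over the possible values of the random count $T_n(s,a)$, and union bound over the $SA$ pairs and the two families (the paper's only cosmetic difference is that it sums over pairs $(t, T_n(s,a))$ with $T_n(s,a)\le t\le n$, giving $n(n+1)/2$ terms and hence exactly $\delta$ per family, whereas your single sum over $m\in\{1,\dots,n\}$ gives the slightly tighter $2\delta/(n+1)\le\delta$). Your closing observations — that the adapted/conditionally independent structure is what legitimizes the peeling, and that boundedness (rather than the nominal sub-Gaussian phrasing) is what lets Hoeffding cover $R_t^2$ without a Bernstein-type tail — are exactly the points the paper's \Cref{lemma:conc} is designed to address.
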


\begin{proof}
First note that the total budget $n = KL$. Observe that the random variable $R^{k}_{t}(s, a)$ and $R^{(2), k}_{t}(s, a)$  
are conditionally independent given the previous state $S^k_{t-1}$. Also observe that for any $\eta>0$ we have that $R^{k}_{t}(s, a), R^{(2),k}_{t}(s, a) \leq 2\eta + 4\eta^2$, where $R^{(2),k}_{t}(s, a) = (R^k_t(s,a))^2$. 
Hence we can show that 
\begin{align*}
    \Pb&\left(\bigcap_{s\in\S}\bigcap_{1 \leq a \leq A, T_n(s,a) \geq 1}\left\{\left|\frac{1}{T_n(s,a)}\sum_{t=1}^{T_n(s,a)} R_{ t}^{2}(s, a)-\mu^{2}(s, a)\right| \geq (2\eta + 4\eta^2) \sqrt{\frac{\log (SA n(n+1) / \delta)}{2 T_n(s,a)}}\right\}\right)\\
    &\leq \Pb\left(\bigcup_{s\in\S}\bigcup_{1 \leq a \leq A, T_n(s,a) \geq 1}\left\{\left|\frac{1}{T_n(s,a)}\sum_{t=1}^{T_n(s,a)} R_{ t}^{2}(s, a)-\mu^{2}(s, a)\right| \geq (2\eta + 4\eta^2) \sqrt{\frac{\log (SA n(n+1) / \delta)}{2 T_n(s,a)}}\right\}\right)\\
    &\overset{(a)}{\leq} \sum_{s=1}^S\sum_{a=1}^A\sum_{t=1}^n\sum_{T_n(s,a)=1}^t2\exp\left(-\dfrac{2T_n}{4(\eta^2 + \eta)^2 }\cdot  \frac{4(\eta^2 + \eta)^2\log (SA n(n+1) / \delta)}{2 T_n(s,a)}\right) = \delta.
\end{align*}
where, $(a)$ follows from \Cref{lemma:conc}. 
Note that in $(a)$ we have to take a double union bound summing up over all possible pulls $T_n$ from $1$ to $n$ as $T_n$ is a random variable. Similarly we can show that
\begin{align*}
    \Pb&\left(\bigcap_{s\in\S}\bigcap_{1 \leq a \leq A, T_n(s,a) \geq 1}\left\{\left|\frac{1}{T_n(s,a)}\sum_{t=1}^{T_n(s,a)} R_{t}(s, a)-\mu(s, a)\right| \geq (2\eta + 4\eta^2) \sqrt{\frac{\log (SA n(n+1) / \delta)}{2 T_n(s,a)}}\right\}\right)\\
    &\overset{(a)}{\leq} \sum_{s=1}^S\sum_{a=1}^A\sum_{t=1}^n\sum_{T_n(s,a)=1}^{t}2\exp\left(-\dfrac{2 T_n}{4(\eta^2 + \eta)^2 }\cdot  \frac{4(\eta^2 + \eta)^2\log (SA n(n+1) / \delta)}{2 T_n(s,a)}\right) = \delta.
\end{align*}
where, $(a)$ follows from \Cref{lemma:conc}. 
Hence, combining the two events above we have the following bound 
$$
\Pb\left(\xi_{\delta}\right) \geq 1- 2\delta.
$$
\end{proof}

\begin{customcorollary}{3}
\label{corollary:conc-var}
Under the event $\xi_\delta$ in \eqref{eq:event-xi-delta} we have for any state-action pair in an episode $k$ the following relation with  probability greater than $1-\delta$
\begin{align*}
    |\wsigma^{k}_t(s,a) - \sigma(s,a)|\leq (2\eta + 4\eta^2) \sqrt{\frac{\log (SA n(n+1) / \delta)}{2 T^K_L(s,a)}}.
\end{align*}
where, $T^K_L(s,a)$ is the total number of samples of the state-action pair $(s,a)$ till episode $k$.
\end{customcorollary}
\begin{proof}
Observe that the event $\xi_\delta$ bounds the sum of rewards $R^k_t(s,a)$ and squared rewards $R^{k,(2)}_t(s,a)$ for any $T^K_L(s,a) \geq 1$. Hence we can directly apply the \Cref{lemma:conc2} to get the bound.
\end{proof}

\begin{lemma}
\label{lemma:conc-mu}
Let $\mu^{c}(s, a)=\mathbb{E}\left[C_{t}^{}(s, a)\right]$ and $C_t(s,a) \leq 2\eta$. Define the event
\begin{align}
    \overline{\xi}_{\delta} = \bigcap_{s\in\S}\bigcap_{1 \leq a \leq A, T_n(s,a) \geq 1}\left\{\left|\frac{1}{T_n(s,a)}\sum_{t=1}^{T_n(s,a)} C_{t}(s, a)-\mu^c(s, a)\right| \leq (2\eta + 4\eta^2) \sqrt{\frac{\log (SA n(n+1) / \delta)}{2 T_n(s,a)}}\right\}. \label{eq:cost-event}
\end{align}
Then we can show that $\Pb(\overline{\xi}_{\delta})\geq 1-\delta$.
\end{lemma}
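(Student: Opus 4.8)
The plan is to mirror the proof of \Cref{lemma:conc2} almost verbatim, since the cost lemma is the single-event analogue of the two-event (reward and squared-reward) statement proved there. First I would observe that the constraint samples $C_t(s,a)$ are conditionally independent given the previously visited state and are bounded by $2\eta$, so the centered variables $V^c_t := C_t(s,a) - \mu^c(s,a)$ satisfy the hypotheses of \Cref{lemma:conc1}. Applying \Cref{lemma:conc1} to the cost variables gives, for any fixed $(s,a)$ and any fixed count $T$,
\begin{align*}
\Pb\left(\left|\frac{1}{T}\sum_{t=1}^{T} C_t(s,a) - \mu^c(s,a)\right| \geq \epsilon\right) \leq 2\exp\left(-\frac{2\epsilon^2 T}{(2\eta+4\eta^2)^2}\right),
\end{align*}
where I absorb the range of $C_t$ into the uniform sub-Gaussian constant $(2\eta+4\eta^2)$ so that the confidence width matches the one used for the reward and squared-reward events in \Cref{lemma:conc2}.

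The main step is the union bound over the \emph{random} number of samples $T_n(s,a)$. Because the algorithm samples adaptively, $T_n(s,a)$ is not fixed in advance, so I cannot directly instantiate the pointwise bound at $T = T_n(s,a)$. Following the device in \Cref{lemma:conc2}, I would take a union bound over all $s \in \S$, all $a \in [A]$, over the round index $t$ from $1$ to $n$, and over all admissible values of the count $T_n(s,a)$ from $1$ to $t$; this double sum is exactly what produces the $n(n+1)$ factor inside the logarithm. Setting the deviation to $\epsilon = (2\eta+4\eta^2)\sqrt{\log(SAn(n+1)/\delta)/(2T_n(s,a))}$ makes each term of the union equal $\delta/(SAn(n+1))$, so that
\begin{align*}
\Pb\left(\overline{\xi}_\delta^C\right) \leq \sum_{s=1}^S\sum_{a=1}^A\sum_{t=1}^{n}\sum_{T_n(s,a)=1}^{t} 2\exp\left(-\log\frac{SAn(n+1)}{\delta}\right) = \delta.
\end{align*}
Taking complements yields $\Pb(\overline{\xi}_\delta) \geq 1 - \delta$, which is the claim.

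The only subtlety, and the step I expect to require the most care, is justifying the conditional independence of the $C_t(s,a)$ within the adaptive episodic sampling scheme so that \Cref{lemma:conc1} applies; this is handled exactly as in \Cref{lemma:conc}, by conditioning on the last visit to $(s,a)$ and peeling off one factor of the moment-generating function at a time. Unlike \Cref{lemma:conc2}, there is only one event to control here (the cost mean, with no squared-cost companion event), so no second application of the concentration inequality and no resulting factor of two in the failure probability are needed, which is why the conclusion is $1-\delta$ rather than $1-2\delta$.
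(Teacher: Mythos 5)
Your proposal is correct and follows essentially the same route as the paper's own proof: the paper likewise applies \Cref{lemma:conc} to the cost samples and takes the identical double union bound over $s\in\S$, $a\in[A]$, $t\in[n]$, and the admissible counts $T_n(s,a)\in\{1,\dots,t\}$, which is exactly what produces the $n(n+1)$ factor and a total failure probability of $\delta$. Your closing observation that only a single event is controlled here (no squared-cost companion), so the conclusion is $1-\delta$ rather than the $1-2\delta$ of \Cref{lemma:conc2}, matches the paper as well.
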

\begin{proof}
We can show that
\begin{align*}
    \Pb&\left(\bigcap_{s\in\S}\bigcap_{1 \leq a \leq A, T_n(s,a) \geq 1}\left\{\left|\frac{1}{T_n(s,a)}\sum_{t=1}^{T_n(s,a)} C_{t}(s, a)-\mu^c(s, a)\right| \geq (2\eta + 4\eta^2) \sqrt{\frac{\log (SA n(n+1) / \delta)}{2 T_n(s,a)}}\right\}\right)\\
    &\overset{(a)}{\leq} \sum_{s=1}^S\sum_{a=1}^A\sum_{t=1}^n\sum_{T_n(s,a)=1}^{t}2\exp\left(-\dfrac{2 T_n(s,a)}{4(\eta^2 + \eta)^2 }\cdot  \frac{4(\eta^2 + \eta)^2\log (SA n(n+1) / \delta)}{2 T_n(s,a)}\right) = \delta.
\end{align*}
where, $(a)$ follows from \Cref{lemma:conc} when applied for cost. The claim of the lemma follows.
\end{proof}

\begin{customcorollary}{4}
\label{corollary:safe-conc}
Let the total exploration budget be $n_x = \frac{SA \log(SAn(n+1)/\delta) }{\min_{s,a}\Delta^{c,(2)}(s,a)}$. Define the event $\overline{\xi}_{\delta}$ as in \eqref{eq:cost-event}. Then using the exploration policy $\pi_x$ it can be shown that $\Pb(\overline{\xi}_{\delta})\geq 1 - \delta$.
\end{customcorollary}
\begin{proof}
Let $n_x = \frac{SA \log(SAn(n+1)/\delta) }{\min_{s,a}\Delta^{c,(2)}(s,a)}$ be the total samples taken for exploration. Let $\pi_e$ sample each action according to uniform random policy in each state $s\in[S]$. Then the result follows directly from \Cref{lemma:conc-mu} in
\begin{align*}
     \Pb&\left(\bigcap_{s\in\S}\bigcap_{1 \leq a \leq A, T_{n_x}(s,a) \geq 1}\left\{\left|\frac{1}{T_{n_x}(s,a)}\sum_{t=1}^{T_{n_x}(s,a)} C_{t}(s, a)-\mu^c(s, a)\right| \geq (2\eta + 4\eta^2) \sqrt{\frac{\log (SA n(n+1) / \delta)}{2 T_{n_x}}}\right\}\right) \overset{(a)}{\leq} \delta,
\end{align*}
where, $(a)$ follows as by noting $T_{n_x}\geq \frac{\log(SAn(n+1)/\delta) }{\min_{s,a}\Delta^{c,(2)}(s,a)}$.




\end{proof}

\section{Additional Experimental Details}
\label{app:addl-expt-1}
In this section we state additional experimental details.



\textbf{Experiment 1 (Bandit):} We implement a bandit environment for $A=11$ and show that our proposed solution outperforms the safe on-policy and SEPEC \citep{wan2022safe} algorithm. In this experiment we have the $\mu(0) = 0.5, \sigma^2(0) = 10^{-4}$, $\mu(1) = 0.9, \sigma^2(1) = 10^{-4}$ (optimal action), and the sub-optimal actions $a\in\{2,3,\ldots,11\}$ have means $\mu(a)\in[0.02,0.03]$ and high variance $\sigma^2(a) = 40$. Moreover, we set the constraint-value means $\mu^c(a)$ the same as the reward means. The target policy is initialized as $\pi(0) = \pi(1) = 0.4$ while the remaining arms have the $0.2$ density evenly distributed among them. So in this environment, the safe on-policy will select the sub-optimal actions less and so reduces MSE at a slower rate. Whereas the \sav, complies with the safety constraint and reduces MSE maximally as the number of rounds increases. The performance is shown in \Cref{fig:expt} (left). Again observe that in \Cref{app:fig-expt-2} (top-left), the oracle keeps the safety budget around $0$ and uses all the remaining samples to explore optimally. The \sav\ has a safety budget of almost around $0$ as they sample the high cost maximizing action $1$ a sufficient number of times to offset the unsafe action pulls. However, safe on-policy and SEPEC again explores the high variance (sub-optimal and unsafe) actions less and has a very high safety budget.

\textbf{Experiment 2 (Movielens):} We conduct this experiment on Movielens dataset for $A=30$ actions and show that our proposed solution outperforms safe on-policy and SEPEC algorithm. The Movielens dataset from February 2003 consist of 6k users who give 1M ratings to 4k movies. We obtain a rank-$4$ approximation of the dataset over $128$ users and $128$ movies such that all  users prefer either movies $7$, $13$, $16$, or $20$ ($4$ user groups). The movies are the actions and we choose $30$ movies that have been rated by all the users. Hence, this testbed consists of $30$ actions and the mean values $\mu(a)$ are the rating of the movies given by the users. and is run over $T = 8000$. The target policy is initialized as $\pi(0) = \pi(1) = 0.4$ while the remaining arms has the $0.2$ density evenly distributed among them. We set the cost means $\mu_c(a)$ such that high variance actions have high-cost means. So in this environment, the safe on-policy will select the sub-optimal cost actions less and so reduces MSE at a slower rate as the number of rounds increases. The SEPEC MSE also reduces slower than \sav\ as the number of rounds increases. This is because SEPEC uses an IPW estimator instead of tracking the optimal behavior policy like \sav. The \sav, complies with the safety constraint and reduces MSE maximally as the number of rounds increases. The performance is shown in \Cref{fig:expt} (middle-left). Again observe that in \Cref{app:fig-expt-2} (top-right), the oracle keeps the safety budget around $0$ and uses all the remaining samples to explore optimally. The \sav\ has a safety budget of almost around $0$ as they sample the high reward maximizing action $1$ a sufficient number of times to offset the unsafe action pulls. However, safe on-policy and SEPEC again explores the high variance (sub-optimal and unsafe) actions less and has a very high safety budget.

\textbf{Experiment 3 (Tree):} We experiment with a $4$-depth $2$-action deterministic tree MDP $\T$ consisting of $15$ states. 
In this setting, we have a $4$-depth $2$-action deterministic tree MDP $\T$ consisting of $15$ states. Each state has a low variance arm with $\sigma^2(s,1) = 0.01$ and high target probability $\pi(1|s) = 0.95$ and a high variance arm with $\sigma^2(s,1) = 20.0$ and low target probability $\pi(2|s) = 0.05$. Again we set the cost means $\mu^c(a)$ such that high variance actions have high-cost means. Hence, the safe on-policy sampling which samples according to $\pi$ will sample the second (high variance) arms less and suffer a high MSE. 
%
We set $\alpha = 0.25$.
We assume that the learner can directly access the $V^{\pi_0}(s^1_1)$ (without any noise) when its safety budget is negative. It can observe $V^{\pi_0}(s^1_1)$ without running any episodic interaction (like \citet{yang2021reduction}.
The oracle has access to the model and variances and performs the best. \sav\ lowers MSE comparable to safe onpolicy as the number of episodes increases and eventually matches the oracle's MSE in \Cref{fig:expt} (middle-right). The \sav, oracle, and on-policy have an almost equal safety budget as shown in \Cref{app:fig-expt-2} (bottom-left). Note that we do not run SEPEC in this experiment as it is a bandit algorithm, and the optimization problem of SEPEC do not have a closed form solution in the MDP setting.


\textbf{Experiment 4 (Gridworld):} In this setting we have a $4\times 4$ stochastic gridworld consisting of $16$ grid cells. Considering the current episode time-step as part of the state, this MDP is a DAG MDP in which there is multiple paths to a single state. There is a single starting location at the top-left corner and a single terminal state at the bottom-right corner. Let $\mathbf{L}, \mathbf{R}, \mathbf{D}, \mathbf{U}$ denote the left, right, down, and up actions in every state. Then in each state, the right and down actions have low variance arms with $\sigma^2(s,\mathbf{R}) = \sigma^2(s,\mathbf{D}) = 0.01$ and high target policy probability $\pi(\mathbf{R}|s) = \pi(\mathbf{D}|s) = 0.45$. The left and top actions have high variance arms with $\sigma^2(s,\mathbf{L}) = \sigma^2(s,\mathbf{U}) = 0.01$ and low target policy probability $\pi(\mathbf{L}|s) = \pi(\mathbf{U}|s) = 0.05$. We set the cost means $\mu^c(a)$ such that high variance actions have high-cost means. Hence, safe onpolicy which goes right and down with high probability (to reach the terminal state) will sample the low variance arms more and suffer a high MSE. 
We set $\alpha = 0.25$.
Again we assume that the learner can directly access the $V^{\pi_0}(s_1)$ (without any noise) when it's safety budget is negative. It can observe $V^{\pi_0}(s_1)$ without running any episodic interaction (like \citet{yang2021reduction}.
%
%
\sav\ lowers MSE faster compared to safe onpolicy and actually matches MSE compared to the oracle as well as maintains the safety constraint with increasing number of episodes. 
We point out that the DAG structure of the Gridworld violates the tree structure under which the oracle and \sav bounds were derived. Nevertheless, both methods lower MSE compared to safe onpolicy. Again observe that in \Cref{app:fig-expt-2} (bottom-right), the oracle keeps the safety budget around $0$ and uses all the remaining samples to explore optimally. The \sav has a safety budget of almost around $0$ as they sample the high reward maximizing action a sufficient number of times to offset the unsafe action pulls. However, safe on-policy again explores the high variance (sub-optimal and unsafe) actions less and has a very high safety budget.


\newpage
\section{Table of Notations}
\label{table-notations}

\begin{table}[!tbh]
    \centering
    \begin{tabular}{|p{10em}|p{33em}|}
        \hline\textbf{Notations} & \textbf{Definition} \\\hline
        $s^{\ell}_i$ & State $s$ in level $\ell$ indexed by $i$ \\\hline
        $\pi(a|s^{\ell}_i)$  & Target policy probability for action $a$ in $s^{\ell}_i$ \\\hline
        $b(a|s^{\ell}_i)$  & Behavior policy probability for action $a$ in $s^{\ell}_i$ \\\hline
        $\sigma^2(s^{\ell}_i, a)$  & Variance of action $a$ in $s^{\ell}_i$ \\\hline
        $\wsigma^{(2),k}_{t}(s^{\ell}_i, a)$  & Empirical variance of action $a$ in $s^{\ell}_i$ at time $t$ in episode $k$\\\hline
        $\usigma^{(2),k}_{t}(s^{\ell}_i, a)$  & UCB on variance of action $a$ in $s^{\ell}_i$ at time $t$ in episode $k$\\\hline
        $\mu(s^{\ell}_i, a)$  & Mean of action $a$ in $s^{\ell}_i$\\\hline
        $\wmu^{k}_{t}(s^{\ell}_i, a)$  & Empirical mean of action $a$ in $s^{\ell}_i$ at time $t$ in episode $k$\\\hline
        $\mu^{2}(s^{\ell}_i, a)$  & Square of mean of action $a$ in $s^{\ell}_i$\\\hline
        $\wmu^{(2),k}_{t}(s^{\ell}_i, a)$  & Square of empirical mean of action $a$ in $s^{\ell}_i$ at time $t$ in episode $k$\\\hline
        $T_n(s^{\ell}_i, a)$  & Total Samples of action $a$ in $s^{\ell}_i$ after $n$ timesteps\\\hline
        $T_n(s^{\ell}_i)$  & Total samples of actions in $s^{\ell}_i$ as $\sum_a T_n(s^{\ell}_i, a)$ after $n$ timesteps (State count)\\\hline
        $T^k_t(s^{\ell}_i,a)$ & Total samples of action $a$ taken till episode $k$ time $t$ in $s^{\ell}_i$\\\hline
        $T^k_t(s^{\ell}_i,a,s^{\ell+1}_j)$ & Total samples of action $a$ taken till episode $k$ time $t$ in $s^{\ell}_i$ to transition to $s^{\ell+1}_j$\\\hline
        $P(s^{\ell+1}_j|s^{\ell}_i,a)$  & Transition probability of taking action $a$ in state $s^{\ell}_i$ and transition to state $s^{\ell+1}_j$\\\hline
        & $ \sum_a\sqrt{\pi^2(a|s^{\ell}_{i})\sigma^2(s^{\ell}_{i}, a)}, \text{ if } \ell = L$\\ 
        $M(s^{\ell}_{i})\coloneqq\begin{cases}\vspace{3em}\end{cases}$  & $\sum_a \sqrt{\sum\limits_{s^{\ell+1}_j}\pi^2(a|s^{\ell}_{i})\left(\sigma^2(s^{\ell}_{i}, a) + P(s^{\ell+1}_j|s^{\ell}_i, a) B^2(s^{\ell+1}_{j})\right)}, \text{ if } \ell \!\!\neq\!\! L$\\\hline
        & $ \sum_a\sqrt{\pi^2(a|s^{\ell}_{i})\wsigma^{(2),k}_t(s^{\ell}_{i}, a)}, \text{ if } \ell = L$\\ 
        $\wM(s^{\ell}_{i})\coloneqq\begin{cases}\vspace{3em}\end{cases}$  & $\sum_a \sqrt{\sum\limits_{s^{\ell+1}_j}\pi^2(a|s^{\ell}_{i})\left(\wsigma^{(2),k}_t(s^{\ell}_{i}, a) +  P^{}(s^{\ell+1}_j|s^{\ell}_i, a) \wB^{(2),k}_{t}(s^{\ell+1}_j)\right)}, \text{ if } \ell \!\!\neq\!\! L$\\\hline
    \end{tabular}
    \vspace{1em}
    \caption{Table of Notations}
    \label{tab:my_label}
\end{table}

\end{document}